\newtheorem{assumption}{Assumption}
\newcommand{\R}{\mathbb{R}}
\newcommand{\N}{\mathbb{N}}
\newcommand{\Z}{\mathbb{Z}}
\newcommand{\mF}{\mathcal{F}}
\newcommand{\mG}{\mathcal{G}}
\newcommand{\mB}{\mathcal{B}}
\newcommand{\mI}{\mathcal{I}}
\newcommand{\mJ}{\mathcal{J}}
\newcommand{\mH}{\mathcal{H}}
\newcommand{\mL}{\mathcal{L}}
\newcommand{\mS}{\mathbb{S}}
\newcommand{\mM}{\mathcal{M}}
\newcommand{\mN}{\mathcal{N}}
\newcommand{\mV}{\mathcal{V}}
\newcommand{\mR}{\mathcal{R}}
\newcommand{\Ep}{\mathbb{E}}
\renewcommand{\Pr}{\mathrm{P}}
\renewcommand{\hat}{\widehat}
\renewcommand{\tilde}{\widetilde}
\renewcommand{\check}{\widecheck}
\newcommand{\mone}{\textit{\textbf{1}}}
\newcommand{\argmin}{\operatornamewithlimits{argmin}}
\newcommand{\vect}{\operatornamewithlimits{vec}}
\newcommand{\bc}{\color{black}}
\begin{document}:

\jmlrheading{23}{2022}{1-\pageref{LastPage}}{5/21; Revised
11/21}{2/22}{21-0542}{Masaaki Imaizumi and Kenji Fukumizu}
\ShortHeadings{Advantage of Deep Neural Networks with Singularity on Hypersurfaces}{Imaizumi and Fukumizu}

\title{Advantage of Deep Neural Networks for Estimating \\Functions with Singularity on Hypersurfaces}

\author{\name Masaaki Imaizumi \email imaizumi@g.ecc.u-tokyo.ac.jp \\
       \addr The University of Tokyo\\
       Meguro, Tokyo, Japan
       \AND
       \name Kenji Fukumizu \email fukumizu@ism.ac.jp \\
       \addr The Institute of Statistical Mathematics\\
       Tachikawa, Tokyo, Japan}

\editor{Samory Kpotufe}

\maketitle

\begin{abstract}
We develop a minimax rate analysis to describe the reason that deep neural networks (DNNs) perform better than other standard methods.
For nonparametric regression problems, it is well known that many standard methods attain the minimax optimal rate of estimation errors for smooth functions, and thus, it is not straightforward to identify the theoretical advantages of DNNs. 
This study tries to fill this gap by considering the estimation for a class of non-smooth functions that have singularities on hypersurfaces. 
Our findings are as follows: (i) We derive the generalization error of a DNN estimator and prove that its convergence rate is almost optimal. (ii) We elucidate a phase diagram of estimation problems, which describes the situations where the DNNs outperform a general class of estimators, including kernel methods, Gaussian process methods, and others.
We additionally show that DNNs outperform harmonic analysis based estimators.
This advantage of DNNs comes from the fact that a shape of singularity can be successfully handled by their multi-layered structure.
\end{abstract}

\begin{keywords}
  Nonparametric Regression, Minimax Optimal Rate, Singularity of Function
\end{keywords}

\section{Introduction} \label{submission}

Learning with deep neural networks (DNNs) has been applied extensively owing to their remarkable performance in many tasks. 
It has been observed that DNNs empirically achieve substantially higher accuracy in some tasks than many existing approaches \citep{schmidhuber2015deep, lecun2015deep,hinton2006fast, le2011optimization, kingma14adam}.
To understand such empirical successes, we investigate DNNs through nonparametric regression problems.

Suppose that we have $n$ independently and identically distributed (i.i.d.) pairs $(X_i,Y_i) \in [0,1]^D \times \R$ for $ i=1,...,n$ generated from the model
\begin{align}
    Y_i = f^*(X_i) + \xi_i, ~i=1,..,n, \label{eq:reg}
\end{align}
where $f^*:[0,1]^D \to \R$ is an unknown function and $(\xi_i)_{i=1}^n$ is an i.i.d.~noise that is independent of $(X_i)_{i=1}^n$.  For simplicity, we assume $\xi_i$ is Gaussian. 
The aim of this study is to investigate the generalization error of the maximum likelihood estimator $\hat{f}^{DL}$ given by DNNs; that is, we analyze $\|\hat{f}^{DL} - f^*\|_{L^2(P_X)}^2 := \Ep_{X \sim P_X}[ (\hat{f}^{DL}(X) - f^*(X))^2 ]$.

This paper argues that deep learning has an advantage over other standard models in terms of the generalization error when $f^*$ has  singularities on a hypersurface in the domain. It will be also shown that the the existence of the advantage exhibits a phase transition in the phase space consisting of the {\em shape of the singularities}, which is defined by the smoothness of the hypersurfaces and the smoothness of the target function $f^*$. 
These results are based on the analysis of the minimax optimal rate of the generalization error.

Understanding the advantages of DNNs remains challenging when analyzing deep learning with the nonparametric regression problem.
This is due to the well-known fact that some popular existing methods achieve the minimax optimal rate with a \textit{smoothness assumption} for the regression model \eqref{eq:reg}.
Namely, when $f^*$ is $\beta$-times (continuously) differentiable, the standard knowledge in the nonparametric statistics  \citep{tsybakov2003introduction,wasserman2006all} tells that a variety of existing methods, such as the kernel method, the Fourier series method, and Gaussian process methods, provide an estimator $\tilde{f}$, which satisfies
\begin{align*}
    \|\tilde{f}-f^*\|_{L^2}^2 = O_\Pr \left( n^{-2\beta/(2\beta + D)} \right).
\end{align*}
Since this convergence rate is known to be optimal in the minimax sense \citep{stone1982optimal}, it is almost impossible to present theoretical evidence for the empirical advantage of DNNs if the smoothness assumption is satisfied.

To overcome this limitation of theoretical understanding, we consider the estimation of \textit{a piecewise smooth function}, which is a natural class of non-smooth functions.
More specifically, the functions are singular (non-differentiable or discontinuous) only on smooth hypersurfaces in their multi-dimensional domain.
Figure \ref{fig:p-smooth} presents an example of such a function, the domain of which is divided into three pieces with piecewise smooth boundaries.
This class is flexible enough to express functions of singularities, while being broader than the usual spaces of smooth functions.
As being suitable for representing edges in images, similar function classes have been studied in the areas of image analysis and harmonic analysis \citep{korostelev2012minimax,candes2004new,candes2002recovering,kutyniok2012shearlets}.

\begin{figure}
\begin{center}
\includegraphics[width=0.7\hsize]{./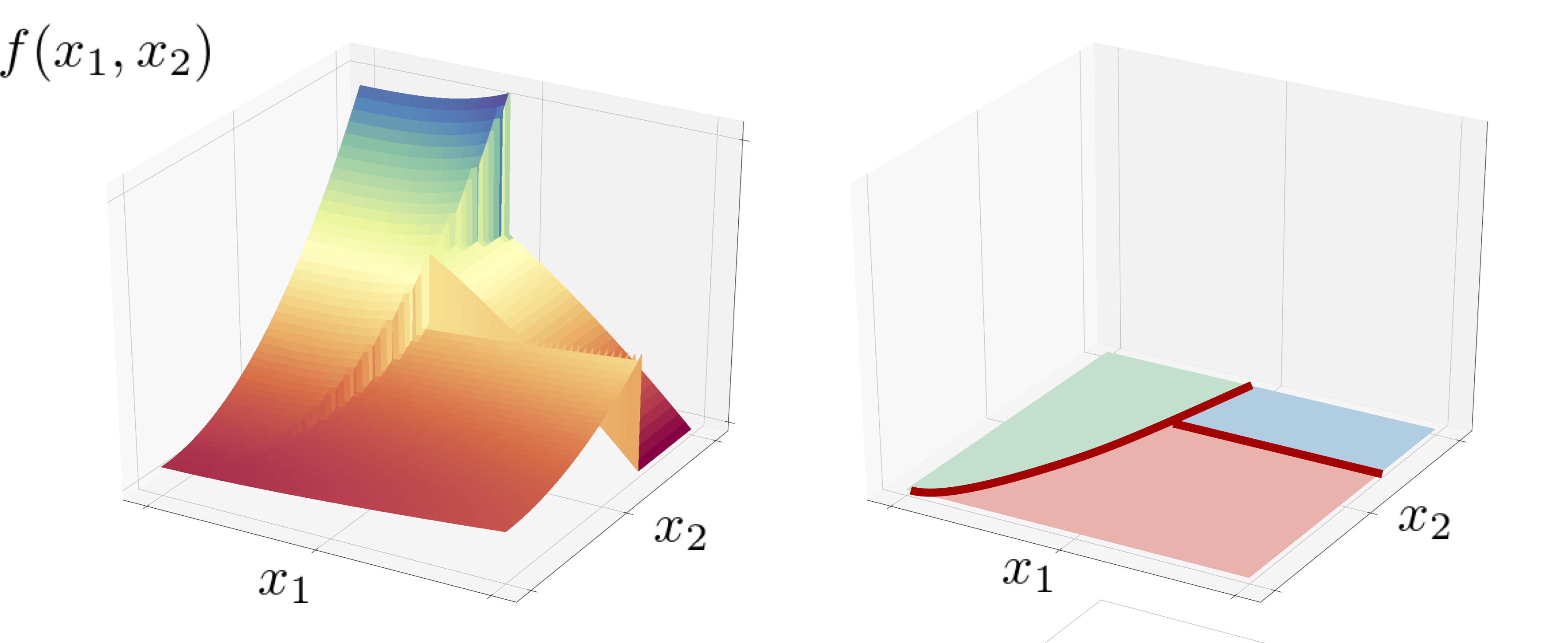}
\caption{[Left] Example of piecewise smooth function $f(x_1,x_2)$ with two-dimensional input. The function is smooth within the pieces and singular on the boundaries of the pieces. [Right] Domain of example function, divided into three pieces. The bold red curve, which is a boundary of the pieces, is a hypersurface for the singularity of $f$. \label{fig:p-smooth}}
\end{center}
\end{figure}

This study makes the following two contributions.
The first one is to prove that the estimator by DNNs almost achieves the minimax optimal rate when estimating functions of singularities.
Specifically, let ${\mF}_{\alpha,\beta,M}^{PS}$ be the set of piecewise smooth functions such that their domain is divided into $M$ pieces, they are $\beta$-times differentiable except on the boundaries of the pieces, and the boundary is piecewise $\alpha$-times differentiable 
(Its rigorous definition will be provided in Section \ref{sec:non-smooth_func}).
We prove that the least-square estimator $\hat{f}^{DL}$ by DNNs for $f^* \in {\mF}_{\alpha,\beta,M}^{PS}$ satisfies
\begin{align}
    \Ep \left[ \|\hat{f}^{DL} - f^*\|_{L^2(P_X)}^2 \right] = \tilde{O} \left(\max\left\{ n^{-2\beta/(2\beta + D)} , n^{-\alpha/(\alpha + D-1)} \right\} \right), \label{rate:main}
\end{align}
as $n \to \infty$ (Corollary \ref{cor:non-bayes}).
Here, $\tilde{O}$ denotes the Big O notation ignoring logarithmic factors.
It is interesting to note that this result holds even if a DNN does not contain any non-smooth elements, such as non-differentiable activation functions. 
That is, even smooth DNNs can estimate such a non-smooth function without being affected by singularities.
We also evaluate the effect of the number of pieces in the domain.

\begin{figure}[t]
    \centering
    \includegraphics[width=0.6\hsize]{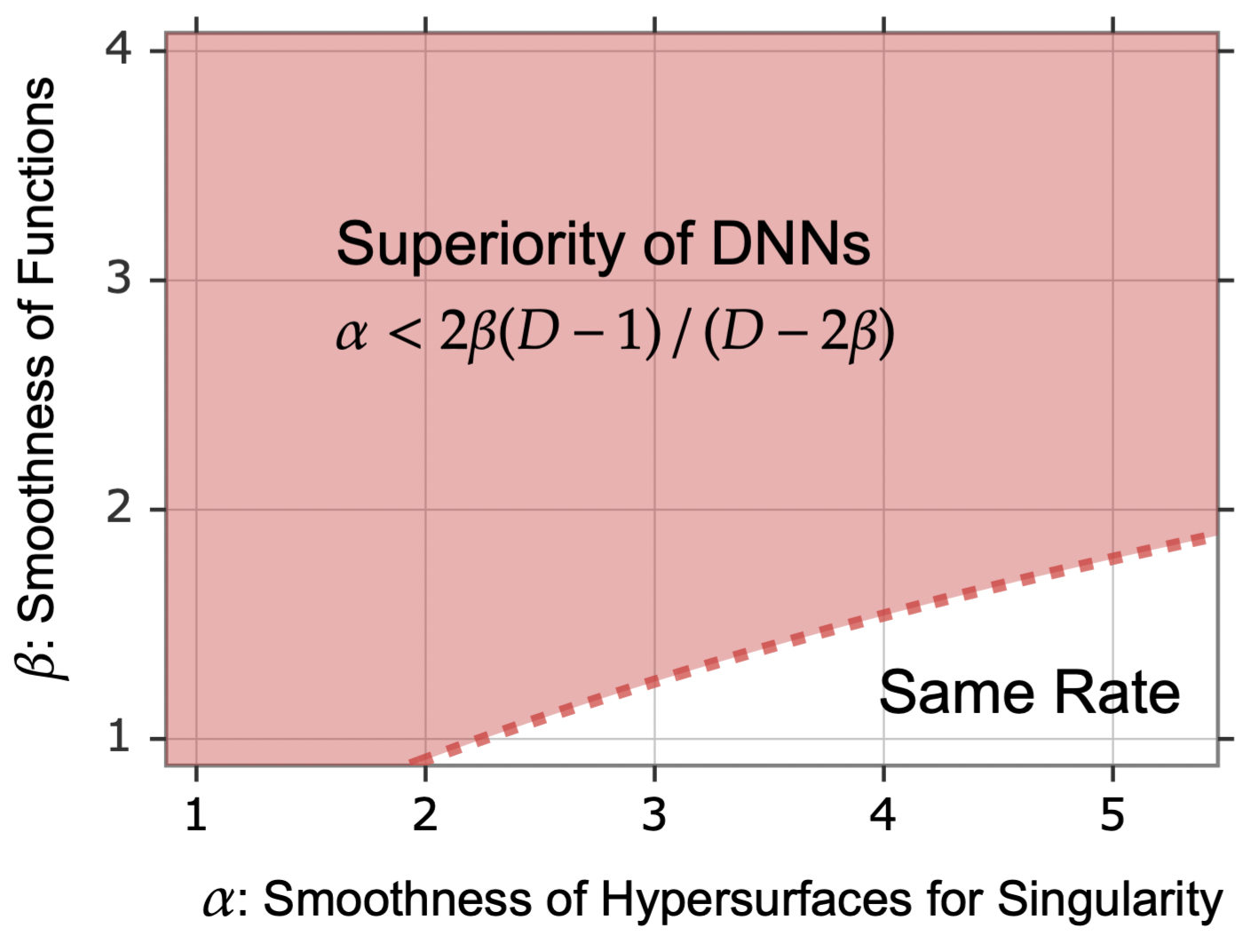}
    \caption{The phase diagram of the parameters $\alpha$ (diffentiability of the hypersurface for singularity) and $\beta$ (differentiability of the function inside the pieces). The red dashed line shows $\alpha = 2 \beta (D-1)/(D-2\beta)$ with $D=5$. If a parameter configuration $(\alpha,\beta)$ is above the line, the minimax rate of DNNs is better than that of the linear estimator.
    Otherwise, the DNN and the linear estimator have the same minimax rate.}
    \label{fig:phase}
\end{figure}
As the second contribution, we develop a phase diagram of parameters that explains the superiority of DNNs over a certain class of nonparametric methods.
{\bc 
Specifically, we consider a class of existing nonparametric estimators known as linear estimators $\hat{f}^{\mathrm{lin}}$, which includes commonly used methods such as estimators by kernel ridge regression, spline regression, and Gaussian process regression, and then derive a lower bound of its error rate as $  n^{-\alpha/(2\alpha + D-1)}$ (Proposition \ref{prop:linear}).
Combined with the result that the estimator by DNNs $\hat{f}^{DL}$ has the rate in \eqref{rate:main}, we can state that $\hat{f}^{DL}$ has a faster convergence when $-\alpha/(2\alpha + D - 1) > \max\{ {-2\beta/(2\beta + D)}, -\alpha/(\alpha + D - 1) \}$.
As a result, when $\alpha < 2\beta(D-1)/(D-2\beta)$ holds, 
the estimator by DNNs is superior to linear estimators.}
Figure \ref{fig:phase} illustrates the configuration of the parameters of smoothness which retains the theoretical advantage of DNNs, that is, it describes a set of parameters such that 
\begin{align*}
     \inf_{\hat{f}^{\mathrm{lin}}}\sup_{f^* \in {\mF}_{\alpha,\beta,M}^{PS}} \Ep_{f^*}\left[\| \hat{f}^{\mathrm{lin}} - f^* \|_{L^2}\right] \gnsim \inf_{\hat{f}^{\mathrm{DL}}}\sup_{f^* \in {\mF}_{\alpha,\beta,M}^{PS}} \Ep_{f^*}\left[\|  \hat{f}^{DL} - f^*\|_{L^2}\right]
\end{align*}
holds.
Here, we denote $a_n \gnsim b_n$ by $|a_n / b_n| \to \infty$ holds as $n \to \infty$ with the sequences $\{a_n\}_n$ and $\{b_n\}_n$.
The phase transition shows that the superiority of DNNs becomes apparent when $f^*$ has a hypersurface of singularities that has a less smooth shape, i.e., the boundaries of the pieces are complicated.
Otherwise, DNNs and the linear estimator achieve the same minimax rate.
Additionally, we also consider other candidates used under the existence of singularities in the field of image analysis, such as the wavelet $\hat{f}^{\mathrm{wav}}$ and curvelet $\hat{f}^{\mathrm{curve}}$ methods, and then derive another phase diagram about the advantage of DNNs.
These results indicate that DNNs certainly offer a theoretical advantage over the other methods under functions of singularities.

As intuitive reasons for these results, we discuss the following two roles of DNNs.
First, a model by DNNs, which is a composition of several transforms, is suitable for decomposing non-smooth functions into simple elements.
Let us begin with considering a simple example of the indicator function $\mone_{S^{D-1}}:\R^D \to \R$ of the unit sphere $S^{D-1} \subset \R^D$; that is, $\mone_{S^{D-1}}(x) = 1$ when $ x \in S^{D-1}$, and $\mone_{S^{D-1}}(x) = 0$ otherwise.
Although $\mone_{S^{D-1}}(x)$ is discontinuous, DNNs can approximate $\mone_{S^{D-1}}(x)$ without a loss of efficiency from the discontinuity.
Note that the function has the form $\mone_{S^{D-1}}(x) = \mone_{\{\cdot \geq 0\}} \circ h(x)$ with a certain smooth function $h(x)$ and a step function $\mone_{\{\cdot \geq 0\}}$, such as $\mone_{\{\cdot \geq 0\}}(x) = 1$ for $x \geq 0$ and $\mone_{\{\cdot \geq 0\}}(x) = 0$ otherwise.
The multi-transform structure of DNNs plays an important role: a first transform approximates $h(x)$, and a second one  $\mone_{\{\cdot \geq 0\}}$, following which the entire DNN model consists of a composite function of the two transforms.
Owing to the composition, DNNs can approximate $\mone_{S^{D-1}}$ as if it were a smooth function.
Second, the activation function of DNNs plays a significant role.
Several common activation functions, such as the sigmoid and rectified linear unit (ReLU) activations, can easily approximate a step function with an arbitrarily small error.
The general conditions for achieving the approximation are explained in Assumption \ref{asmp:activation} in Section \ref{sec:DNN}.

This paper is an extension of a conference proceeding \cite{imaizumi2018deep}.
The contributions in the current paper differ from the results of the conference paper in the following ways.
First, this study handles the general class of activation functions of DNNs, while the proceeding investigates  the Rectified Linear Unit (ReLU) activation function.
Hence, the proof of the first contribution with singularities is significantly different.
Especially, this study develops a new proof for resolving singularities by possibly smooth activation functions, while the proof of the proceeding for singularity relies on the non-smoothness of the ReLU activation.
{\bc 
Second, this paper develops the phase diagram by newly derived lower bounds on the minimax rate for the other estimators.
Our new derivation of a specific lower bound on the rate allows us to describe such a phase transition, while \cite{imaizumi2018deep} does not derive the lower rate.
}
Further, this study additionally examines several methods that are adept at handling singularities, such as the wavelet and curvelet approaches, and subsequently demonstrates that they do not achieve optimality.

It is important to mention that the superiority of DNNs shown in our results does not depend on the property of learning algorithms, but only on an expressive power and a degree of freedom of DNNs. 
Toward a very practical aspect of DNN, it is certainly important to investigate the error from learning algorithms.
However, the environment surrounding practical DNNs is highly complicated that it is not very meaningful to consider all of these factors at the same time.
For a more rigorous analysis and correct understanding, we follow the line of research in nonparametric regression, such as \citet{schmidt2017nonparametric} and \cite{bauer2019deep}, we focus on the advantage of DNNs on the expressive power and the degree of freedom of DNNs with singularities of data.

\subsection{Related Studies}
Several pioneering works have investigated deep learning in terms of the nonparametric regression problems.
A recent study \cite{schmidt2017nonparametric} considered the case in which $f^*$ is expressed as a composition of several smooth functions, and then derived the minimax optimal rate with this setting.
\cite{bauer2019deep} also derived the convergence rate of errors when $f^*$ had the form of a generalized hierarchical interaction model, and revealed that the obtained rate was dependent on the lower dimensionality of the model.
These studies focused on the composition structure of $f^*$, and they did not consider the non-smoothness or discontinuity of $f^*$.
These works thus did not take into account singularities, which is the main focus of our study.
We also mention that the comparison with the Harmonic and linear estimators is our unique result.

\cite{suzuki2018adaptivity} and \cite{hayakawa2020minimax} also demonstrated the superiority of DNNs.
These works investigated the generalization error of DNNs when $f^*$ belongs to the Besov space.
Interestingly, the convergence rate of DNNs was faster than that of the linear estimator when the norm parameter of the Besov space was less than $1$, following the theory of  \cite{donoho1998minimax}.
Although their motivation is similar to ours, the Besov space is not suitable for representing functions of singularities on a smooth hypersurface.
This is because the wavelet decomposition, which is used to define the Besov space, loses its efficiency for handling the hypersurfaces for singularity, as explained in Section \ref{sec:compare}.
Note also that the comparison in \cite{suzuki2018adaptivity} and \cite {hayakawa2020minimax} between DNNs and linear estimators was motivated by the proceeding version of this paper.

The current work has been technically inspired by \citet{petersen2017optimal}, which investigated the approximation power of DNNs with discontinuity.
The main difference between the above paper and current work is the focus on the advantage of deep learning. 
Their study mainly investigated the approximation error of DNNs.
Thus, a comparison with existing methods was not the main purpose.
Another major difference is that their study focused on the approximation power, whereas we investigate the generalization error, including the variance control of DNNs.
A further difference is that our study investigates a broader class of discontinuous functions, as our definition directly controls hypersurfaces in the domain, whereas \cite{petersen2017optimal} defined the discontinuity by a transform of the Heaviside function.

\subsection{Paper Organization}
The remainder of this paper is organized as follows. 
Section \ref{sec:DNN} introduces a functional model by DNNs, following which an estimator for the regression problem with DNNs is defined.
The notion of functions with singularities is explained in Section \ref{sec:non-smooth_func}.
Section \ref{sec:rate} derives the convergence rate of the estimator by means of DNNs.
Furthermore, the minimax optimality of the convergence rate is derived.
Section \ref{sec:compare} presents the non-optimal convergence rate obtained by a certain class of other estimators, and compares this rate with that of DNNs.
Section \ref{sec:concl} summarizes our work.
Full proofs are deferred to the supplementary material.

\subsection{Notation}

Let $I := [0,1]$ be the unit interval, $\N$ be natural numbers, and $\N_0 := \N \cup \{0\}$.
For $z \in \N$, $[z] := \{1,2,\ldots,z\}$ is the set of natural numbers that are no more than $z$.
The $d$-th element of vector $b \in \R^D$ is denoted by $b_{d}$, and $b_{-d} = (b_1,...,b_{d-1},b_{d+1},...,b_D)$ for $d \in [D]$.
$\|b\|_q := (\sum_d b_d^q)^{1/q}$ is the $q$-norm for $q \in (0,\infty)$, $\|b\|_\infty := \max_{j \in [J]} |b_j|$, and $\|b\|_0 := \sum_{j \in [J]} \mone_{\{b_j \neq 0\}}$.
For a measure space $(A,\mathcal{B},\mu)$ and a measurable function $f:A \to \R$, let $\|f\|_{L^2(\mu)} := ( \int_A |f(x)|^2d\mu(x))^{1/2}$ denote the $L^2(\mu)$-norm if the integral is finite.
When $\mu$ is the Lebesgue measure on a measurable set $A$ in $\R^D$, we omit $\mu$ and simply write $\|f\|_{L^2(A)}$.
For a set $A \subset \R^D$, $\mathrm{vol}(A)$ denotes the Lebesgue measure of $A$. The tensor product is denoted by $\otimes$.
For a set $R\subset I^D$, let $\mone_{R}:I^D \to \{0,1\}$ denote the indicator function of $R$; that is, $\mone_{R}(x) = 1$ if $x \in R$, and $\mone_{R}(x) = 0$ otherwise.
For the sequences $\{a_n\}_n$ and $\{b_n\}_n$, $a_n \lesssim b_n$ means that there exists $C>0$ such that $a_n \leq  C b_n$ holds for every $n \in \N$.
$a_n \gtrsim b_n$ denotes the opposite of $a_n \lesssim b_n$.
Furthermore, $a_n \asymp b_n$ denotes both $a_n \gtrsim b_n$ and $a_n \lesssim b_n$.
$a_n \lnsim b_n$ denotes $|a_n/b_n| \to 0$ as $n \to \infty$, and $a_n \gnsim b_n$ is its opposite.
For a set of parameters $\theta$, $C_\theta > 0$ denotes an existing finite constant depending on $\theta$.
Let $O_\Pr$ and $o_\Pr$ be the Landau big O and small o in probability.
$\tilde{O}$ ignores every multiplicative polynomial of logarithmic factors.

\section{Deep Neural Networks} \label{sec:DNN}

A deep neural network (DNN) is a model of functions defined by a layered structure.  
Let $L \in \N$ be the number of layers in DNNs, and 
for $\ell \in [L+1]$, let $D_\ell \in \N$ be the dimensionality of variables in the $\ell$-th layer. 
DNNs have a matrix parameter $A_\ell \in \R^{D_{\ell + 1} \times D_\ell}$ and a vector parameter $b_\ell \in \R^{D_\ell}$ for $\ell \in [L]$ to represent weights and biases, respectively.
We introduce an activation function $\eta:\R \to \R$, which will be specified later.
For a vector input $z \in \R^{d}$, $\eta(x) = (\eta(z_1),...,\eta(z_d))^\top$ denotes an element-wise operation.
For $\ell \in [L-1]$, with an input vector $z \in \R^{D_{\ell}}$, we define $g_{\ell}:\R^{D_{\ell}} \to \R^{D_{\ell + 1}}$ as $g_{\ell}(z) = \eta(A_\ell z + b_\ell)$.
We also define $g_L(z) = A_L z + b_L$ with $z \in \R^{D_L}$.
Thereafter, we define a function $g : \R^{D_1} \to \R^{D_{L+1}}$ of DNNs with $(A_1,b_1),...,(A_{L},b_{L})$ by 
\begin{align}
    g(x) = g_L \circ g_{L-1} \circ \cdots \circ g_1(x). \label{def:DNN}
\end{align}
Intuitively, $g(x)$ is constituted by compositions of $L$ maps.

For each $g$ with the form \eqref{def:DNN}, we introduce several operators to extract information of $g$.
Let $L(g)=L$ be the number of layers, $S(g) := \sum_{\ell \in [L]} \|\vect (A_\ell)\|_0 + \|b_\ell\|_0$ as a number of non-zero elements in the parameter matrix and tensor in $g$, and $B(g) := \max_{\ell \in [L]} \|\vect (A_\ell)\|_\infty \vee \max_{\ell \in [L]} \|b_\ell\|_\infty$ be the largest absolute value of the parameters.
Here, $\vect(\cdot)$ is a vectorization operator for matrices.

We define the set of functions of DNNs.
With a tuple $(L', S', B') \in \N^3$, we write it as
\begin{align*}
    &\mG(L',S',B'):= \Bigl\{ g\in L^\infty(I^D)\mid g \mbox{~as~\eqref{def:DNN}}: L(g) \leq L', S(g) \leq S', B(g) \leq B', \|g\|_{L^\infty(I^D)} \leq F  \Bigr\},
\end{align*}
where $F > 0$ is a threshold.
Since the form of DNNs is flexible, we control the size and complexity of it through the layers and parameters through the tuple $(L',S',B')$.
Here, the internal dimensionality $D_\ell$ is implicitly regularized by the tuple.

\begin{figure}[htbp]
\begin{center}
\includegraphics[width=0.19\hsize]{./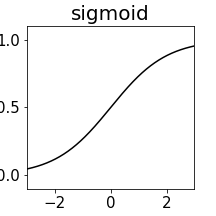}
\includegraphics[width=0.19\hsize]{./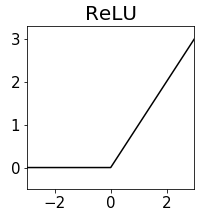}
\includegraphics[width=0.19\hsize]{./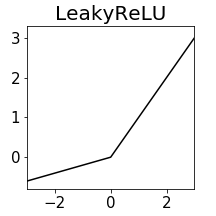}
\includegraphics[width=0.19\hsize]{./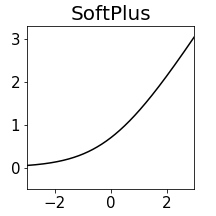}
\includegraphics[width=0.19\hsize]{./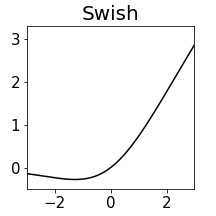}
\caption{Common activation functions; Sigmoid: $\eta(x) = 1/(1+\exp(x))$, ReLU: $\eta(x)=\max\{x,0\}$, LeakyReLU: $\eta(x)=\max\{x,0\} + 0.2 \min\{x,0\}$, SoftPlus: $\eta(x) = \log(1+\exp(x))$, and Swish: $\eta(x)=x/(1+\exp(x))$. \label{fig:act}}
\end{center}
\end{figure}

The explicit form of $\eta$ plays a critical role in DNNs, and numerous variations of activation functions have been suggested.
We select several of the most representative activation functions in Figure \ref{fig:act}.
To investigate a wide class of activation functions, we introduce the following assumption.
\begin{assumption} \label{asmp:activation}
    An activation function $\eta$ satisfies either of the following conditions:
    \begin{itemize}
        \item [(i)] For $N \in \N$, there exists $q \geq  1$ and $k \in \{0,1\}$ such that $\partial^{j}\eta$ exists at every point and is bounded for $j=1,...,N+1$.
        Furthermore, there exists $x' \in \R$ such that $\min_{j=1,...,N} |\partial^j \eta(x')| \geq c_\eta > 0$ with a constant $c_\eta > 0$, and the followings hold:
        \begin{align*}
            |\eta(x) - \overline{c}x^k| = O(1/x^q), ~ (x \to \infty), \mbox{~~and~~}|\eta (x) - \underline{c}| = O(1/|x|^q), ~ (x \to -\infty), 
        \end{align*}        
        with some constants $\overline{c} > \underline{c} \geq 0$.
        There also exists $C_K > 0$ such as $|\eta(x)| \leq C_K (1+|x|^k)$ for any $x \in \R$.
        \item [(ii)] There exist constants $c_1 > c_2 \geq 0$ such that
        \begin{align*}
                \eta(x) = 
            \begin{cases}
                c_1 x& \mbox{~if~}x \geq 0 \\
                c_2 x& \mbox{~if~} x < 0.
            \end{cases}
        \end{align*}
    \end{itemize}
\end{assumption}
The condition $(i)$ describes smooth activation functions such as the sigmoid function, the softplus function, and the Swish function with $N=\infty$.
The lower bound on $|\partial^j \eta(x')|$ describes a non-vanishing property of the derivatives.
The condition $(ii)$ indicates a piecewise linear function such as the rectified linear unit (ReLU) function and the leaky ReLU function, which require another technique to investigate.

\subsection{Regression Problem and Estimator by DNNs}

We consider the least square estimator by DNNs for the regression problem \eqref{eq:reg}.
Let the $D$-dimensional cube $I^D$ ($D \geq 2$) be a space for input variables $X_i$.
Suppose we have a set of observations $(X_i,Y_i) \in  I^D \times \R$ for $i \in [n]$ which is independently and identically distributed with the data generating process \eqref{eq:reg} where $f^* :I^D \to \R$ is an unknown true function and $\xi_i$ is  Gaussian noise with mean $0$ and variance $\sigma^2 > 0$ for $i \in [n]$.
We also suppose that $X_i$ follows a marginal distribution $P_X$ on $I^D$ and it has a density function $p_X$ which is bounded away from zero and infinity.
Then, we define an estimator by empirical risk minimization with DNNs as
\begin{align}
    \hat{f}^{DL} \in \argmin_{ f \in \mG(L,S,B)} \frac{1}{n}\sum_{i =1}^n (Y_i - {f}(X_i))^2. \label{opt:erm}
\end{align}
The minimizer always exists since $\mG(L,S,B)$ is a compact set in $L^\infty(I^D)$ due to the parameter bound and continuity of $\eta$.
Note that we do not discuss the optimization issues from the non-convexity of the loss function, since we mainly focus on an estimation aspect.

\section{Characterization of Functions with Singularity} \label{sec:non-smooth_func}

In this section, we provide a rigorous formulation of functions with singularity on smooth hypersurfaces.
To describe the singularity of functions, we introduce a notion of \textit{piecewise smooth functions}, which have its domain divided into several pieces and smooth only within each of the pieces.
Furthermore, piecewise smooth functions are singular (non-differentiable or discontinuous) on the boundaries of the pieces.

\textbf{Smooth Functions (H\"older space)}:
Let $\Omega$ be a closed subset of $\R^D$ and $\beta ,F> 0$ be parameters.
For a multi-index $a \in \N_0^D$, $\partial^a=\partial^{a_1}\cdots \partial^{a_D}$ denotes a partial derivative operator.
The H\"older space $H^\beta({\Omega})$ is defined as a set of functions $f:{\Omega}\to\R$ such as
\begin{align*}
    &H^\beta(\Omega) := \Biggl\{ f  \left|\, \max_{\|a\|_1 \leq \lfloor \beta \rfloor}\|\partial^a f\|_{L^\infty(\Omega)} + \max_{\|a\|_1 = \lfloor \beta \rfloor} \sup_{x,x' \in \Omega, x \neq x'} \frac{|\partial^a f(x) - \partial^a f(x')|}{|x-x'|^{\beta - \lfloor \beta \rfloor}} < \infty \right.\Biggr\}.
\end{align*}
Also, let $H_F^\beta(\Omega)$ be a ball in $H^\beta(\Omega)$ with its radius $F>0$ in terms of the norm $\|\cdot\|_{L^\infty}$.

\textbf{Pieces in the Domain}:
We describe pieces as subsets of the domain $I^D$ by dividing $I^D$ with several hypersurfaces.
For $j=1,...,J$, let $h_j \in H^\alpha_F(I^{D-1})$ be a function with input $x_{-d_j} \in I^{D-1}$ for some $d_j \in [D]$.
We define a family of $M$ pieces which are an intersection of one side of $J$ hypersurfaces.
Let $I^{+}_j:=\{x\in I^D \mid x_{d_j} \geq  h_j (x_{-{d_j}})\}$ and $I^{-}_j:=\{x\in I^D \mid x_{d_j} \leq  h_j (x_{-{d_j}})\}$.
For a $J$-tuple $t=(t_j)_{j=1}^J \in \{+,-\}^{[J]}$, a unit piece of $I^D$ is defined by  $I_t  := \bigcap_{j\in[J]} I_j^{t_j}$.
Let $\mathcal{T}$ be a subset of $\{+,-\}^{[J]}$, and define {\em piece} $R_\mathcal{T}$ by 
\[
  R_\mathcal{T} :=   \bigcup_{ t \in \mathcal{T}} I_t.
\]
Let $\{\mathcal{T}_1,\ldots,\mathcal{T}_M\}$ be a partition of $\{+,-\}^{[J]}$.  
Then, it is easy to see $\bigcup_{m\in[M]}R_{\mathcal{T}_m} = I^D$ and $R_{\mathcal{T}_m}\cap R_{\mathcal{T}_{m'}}$ is of Lebesgue measure zero. 
The family of pieces that is of size $M$ is given by 
\begin{align*}
    \mR_{\alpha,M} := \left\{ \{R_{\mathcal{T}_m}\}_{m\in[M]} \mid \{\mathcal{T}_m\}_{m\in[M]} \text{ is a partition of }  \{+,-\}^{[J]}\right\}.
\end{align*}
Intuitively, $\{R_{\mathcal{T}_m} \}_{m \in [M]}\in \mR_{\alpha,M}$ is a partition of $I^D$, allowing overlap on the piecewise $\alpha$-smooth boundaries. 
Figure \ref{fig:p-smooth} presents an example.
In the following, when the partition $\{\mathcal{T}_m\}_{m\in[M]}$ is fixed, we can also write $R_m := R_{\mathcal{T}_m}$ by slightly changing the notation.

\textbf{Piecewise Smooth Functions}:
By using $H_F^\beta(I^D)$ and $\mR_{\alpha,M}$, we introduce a space of piecewise smooth functions as 
\begin{align*}
    &\mF_{\alpha,\beta,M}^{PS} := \left\{ \sum_{m \in [M]} f_m \otimes \mone_{R_m} : f_m \in H_F^{\beta}(I^D), \{R_m\}_{m \in [M]} \in \mR_{\alpha,M} \right\}.
\end{align*}
Since $f_m(x)$ realizes only when $x \in R_m$, the notion of $\mF_{\alpha,\beta,M}^{PS}$ can express a combination of smooth functions on each piece $R_m$.
Hence, functions in $\mF_{\alpha,\beta,M}^{PS} $ are non-smooth (and even discontinuous) on boundaries of $R_m$.
Obviously, $H^\beta(I^D) \subset \mF_{\alpha,\beta,M}^{PS}$ holds for any $M$ and $\alpha$, since $f_1=f_2=\cdots=f_M$ makes the function globally smooth.

\begin{remark}[Similar definitions]
    Several studies  \citep{petersen2017optimal,imaizumi2018deep} also define a class of piecewise smooth functions.
    There are mainly two differences of our definition in this study.
    First, our definition can describe a wider class of pieces.
    The definition utilizes a direct definition of a smooth hypersurface function $h$, while the other definition defines pieces by a transformation of the Heaviside function which is slightly restrictive.
    Second, our definition is less redundant.
    We do not allow the pieces to overlap with one another, whereas some of the other definitions allow pieces to overlap.
    When overlap exists, it may make approximation and estimation errors worse, which is a problem that our definition can avoid.
\end{remark}

\begin{remark}[Comparison with a boundary by \cite{mammen1995asymptotical}]
    We compare our formulation with $h_j \in H_\alpha^\beta(I^{D-1})$ with another way of making boundaries obtained by continuous transformation of a sphere by \cite{mammen1995asymptotical}. 
    In our approach, each $h_j$ can only represent a singularity in one fixed dimensional direction. In contrast, the sphere-based boundary has the flexibility to create boundaries in various dimensional directions at once. 
    However, since our formulation can provide several $I_j^+$ and $I_j^-$ in different dimensional directions, we can reproduce the sphere-based boundary if we combine multiple $I_j^+$ and $I_j^-$ by taking their intersections as the definition of $I_t$.
    We adopt the current formulation because this way of decomposing the sphere-based singularity into its parts is more general. 
    Even if we adopt the sphere-based boundary, we can obtain the same result by using the approximation on step functions in Lemma \ref{lem:main_step}.
\end{remark}

\section{Generalization Error of Deep Neural Networks} \label{sec:rate}

We provide theoretical results regarding DNN performances for estimating piecewise smooth functions.
To begin with, we decompose the estimator error into an approximation error and a complexity error, analogously to the bias-variance decomposition.
{\bc 
By a simple calculation on \eqref{opt:erm}, we obtain the following inequality:
\begin{align}
    &\|\hat{f}^{DL} - f^*\|_n^2 \leq \underbrace{\inf_{f' \in \mG(L,S,B)}\|f^* - f'\|_n^2}_{\mB} + \underbrace{\frac{2}{n}\sum_{i=1}^n \xi_i (\hat{f}^{DL}(X_i) - f(X_i))}_{\mV}, \label{ineq:main_basic}
\end{align}
with some $f \in \mG(L,S,B)$ which will specified later.
}
Here, $\|f\|_n^2 := n^{-1}\sum_{i \in [n]}f(X_i)^2$ is an empirical (pseudo) norm.
We note that $\xi_i$ is the Gaussian noise displayed in \eqref{eq:reg}.
The term $\mB$ in the right hand side is the approximation error, and the term $\mV$ is the complexity error.
In the following section, we bound $\mB$ and subsequently combine it with the bound for $\mV$.

\subsection{Approximation Result}
We evaluate the approximation error with piecewise smooth functions according to the following three preparatory steps: approximating (i) smooth functions, (ii) step functions, and (iii) indicator functions on the pieces.
Thereafter, we provide a theorem for the approximation of piecewise smooth functions.

As the first step, we state the approximation power of DNNs for smooth functions in the H\"older space.
Although this topic has been studied extensively \citep{mhaskar1996neural,yarotsky2017error}, we provide a formal statement because a condition on activation functions is slightly different.
\begin{lemma}[Smooth function approximation] \label{lem:main_smooth}
    Let $\beta>0$ be a constant. Suppose Assumption \ref{asmp:activation} holds with $N > \beta$.
    Then, there exist constants $C_{\beta,D,F},C_{\beta,D,F,q}>0$ such that a tuple $(L,S,B)$ such as $L \geq C_{\beta,D,F} ( \lfloor  \beta \rfloor + \log_2(1/ \varepsilon) + 1)$, $ S \geq C_{\beta,D,F} \varepsilon^{-D/\beta} (  \log_2 (1/\varepsilon))^2$, and $B \geq C_{\beta,D,F,q} \varepsilon^{-C_\beta}$,
    which satisfies
    \begin{align*}
        &\inf_{g \in \mG(L,S,B)} \sup_{f \in H_F^\beta(I^D)} \|g - f\|_{L^2(R)}\leq  \mathrm{vol}(R)\varepsilon,
    \end{align*}
    for any non-empty measurable set $R \subset I^D$ and $\varepsilon > 0$.
\end{lemma}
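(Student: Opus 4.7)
The plan is to adapt the classical local-Taylor-polynomial-plus-partition-of-unity construction (in the spirit of Mhaskar and Yarotsky) to both activation regimes of Assumption~\ref{asmp:activation}. I would cover $I^D$ by a uniform grid of mesh $h \asymp \varepsilon^{1/\beta}$, producing $N_{\mathrm{cell}} \asymp \varepsilon^{-D/\beta}$ cells; at the centre of each cell $\nu$ let $p_\nu$ be the degree-$\lfloor\beta\rfloor$ Taylor polynomial of $f$, so that $f\in H^\beta_F(I^D)$ yields $\|f-p_\nu\|_{L^\infty(\mathrm{cell}_\nu)}\lesssim h^\beta \asymp \varepsilon$. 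With $\{\phi_\nu\}$ a smooth partition of unity subordinate to a slight enlargement of each cell, the DNN target is
\[
g(x) = \sum_\nu \tilde p_\nu(x)\,\tilde\phi_\nu(x),
\]
where $\tilde p_\nu,\tilde\phi_\nu$ are DNN approximations built from $\eta$. If each block has $L^\infty$ error $O(\varepsilon)$, then $\|g-f\|_{L^\infty(I^D)}\lesssim\varepsilon$ and the stated $L^2(R)$ bound follows, since $\mathrm{vol}(R)\leq 1$, after an absolute rescaling of $\varepsilon$.

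The heart of the argument is a multiplication gadget together with polynomial and bump primitives generated from $\eta$. Under condition~(ii) the ReLU/leaky-ReLU toolbox applies directly: $x\mapsto x^2$ on $[0,1]$ is approximated by the Yarotsky sawtooth in depth $O(\log(1/\varepsilon))$ with error $\varepsilon$, the identity $xy=\tfrac14\bigl((x+y)^2-(x-y)^2\bigr)$ turns squaring into multiplication, and affine combinations of $\max(\cdot,0)$ produce exact hat functions whose tensor products serve as bumps. Under condition~(i) the non-vanishing derivatives $|\partial^j\eta(x')|\geq c_\eta$ for $j\leq N$ let me extract each monomial $y^k$ with $k\leq N$ as a linear combination of $\eta(x'+c_i\delta y)$ via divided differences: inverting the triangular Taylor system of $\eta$ at $x'$ up to order $N$ returns $y^k$ plus a remainder of order $\delta^{N+1-k}$, so a choice $\delta\asymp\varepsilon^{C_\beta}$ pushes the remainder below $\varepsilon$ at the cost of coefficients of size $\delta^{-N}$; this is precisely what forces the bound $B\asymp\varepsilon^{-C_\beta}$. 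Monomials combine linearly into $\tilde p_\nu$, and multiplication follows from squaring once $y^2$ is available.

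For the bumps in case~(i) I would use the one-sided asymptotics of $\eta$: $\eta(\lambda x)=\overline c(\lambda x)^k+O(\lambda^{-q})$ as $\lambda x\to+\infty$ and $\eta(\lambda x)=\underline c+O(|\lambda x|^{-q})$ as $\lambda x\to-\infty$, with $\overline c>\underline c$, imply that an affine rescaling of $\eta(\lambda x)$ converges uniformly, off a shrinking neighbourhood of the transition, to $(x)_+^k$: a step function when $k=0$ and a ReLU-type ramp when $k=1$. Differencing two such rescaled gates gives an approximate indicator of an interval with tail error $O(\lambda^{-q})$, and $\lambda\asymp\varepsilon^{-C}$ for suitable $C$ keeps this below $\varepsilon$. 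Tensor products of $D$ univariate bumps, built through the multiplication gadget, give $\tilde\phi_\nu$. Counting: each cell contributes $O(\log(1/\varepsilon))$ parameters from the multiplication subnetwork plus $O(1)$ from its polynomial and bump; aligning depths by identity-padding yields $L=O(\log(1/\varepsilon)+\lfloor\beta\rfloor)$, $S=O(\varepsilon^{-D/\beta}\log^2(1/\varepsilon))$, and $B$ as above, matching the stated tuple.

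The principal obstacle is the smooth-activation case~(i). Polynomial and step-function approximation by sums $\sum_i c_i\eta(a_i x+b_i)$ is quantitatively delicate: one must simultaneously control (a) the divided-difference remainder, which forces $\delta$ small, (b) the resulting blow-up of weights, which inflates $B$ polynomially in $1/\varepsilon$, and (c) the bump tail error governed by the decay rate $q$, which constrains the sharpness of the partition of unity. These three budgets have to be balanced so that each cell's contribution to the $L^\infty$ error stays below $\varepsilon$ while the global size and depth remain at the advertised order. Case~(ii) is by comparison routine, since ReLU admits exact rather than merely asymptotic polynomial and gate identities.
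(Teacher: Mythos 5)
Your overall strategy coincides with the paper's: cover $I^D$ by a uniform grid of mesh $\asymp \varepsilon^{1/\beta}$, take local Taylor polynomials of degree $\lfloor\beta\rfloor$, synthesize a multiplication gadget and polynomial/step primitives from $\eta$ (Yarotsky sawtooth for condition (ii); triangular Taylor-system inversion — the Stirling-number trick — for condition (i)), and assemble a localized product structure. The parameter bookkeeping you sketch also matches the stated $(L,S,B)$ orders, including the reason $B$ blows up polynomially in $1/\varepsilon$ under condition (i).

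There is, however, a genuine gap in the assembly step, and it is the step that actually drives the proof. You announce a smooth partition of unity $\{\phi_\nu\}$ and claim that once ``each block has $L^\infty$ error $O(\varepsilon)$'' you get $\|g-f\|_{L^\infty(I^D)}\lesssim\varepsilon$, from which the $L^2(R)$ bound would follow. But the localizers $\tilde\phi_\nu$ you then construct are \emph{not} approximations of smooth bumps in $L^\infty$: they are differenced gates, i.e., approximate indicators. A rescaled activation $\eta(\lambda\cdot)$ approximates a step function only away from a transition region of width $\asymp\lambda^{-1}$; inside that region the pointwise discrepancy is bounded away from zero no matter how large $\lambda$ is, so the $L^\infty$ error to either the true indicator or to a smooth bump stays $\Theta(1)$. (Approximating a compactly supported smooth $\phi_\nu$ by a low-degree polynomial instead does not rescue the $L^\infty$ claim either, since the polynomial does not decay outside the cell and the localization has to come from the same gate construction.) What you actually get from the gate tails is an $L^2$ bound of order $\lambda^{-1/2}$ per localizer, which is exactly what the paper exploits: the paper's Lemma~\ref{lem:smooth_approx} defines $g_{I_\lambda}$ as a product of step approximators, proves $\|\mathbf{1}_{I_\lambda}-g_{I_\lambda}\|_{L^2}\leq D\varepsilon/\ell + D\varepsilon$, and tracks the whole error budget in $L^2(R)$, never in $L^\infty$. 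As a secondary point, even granting $\|g-f\|_{L^\infty}\leq\varepsilon$, this would give $\|g-f\|_{L^2(R)}\leq\mathrm{vol}(R)^{1/2}\varepsilon$, not $\mathrm{vol}(R)\varepsilon$ — the latter is smaller when $\mathrm{vol}(R)<1$, so the claimed bound does not follow from the $L^\infty$ estimate ``since $\mathrm{vol}(R)\leq 1$.'' To repair the argument you should work in $L^2(R)$ from the outset, tracking (a) the $L^\infty$ Taylor/polynomial error inside each cell, (b) the $L^2$ error of the gate localizers, and (c) the $L^\infty$ error of the multiplication gadget, and sum them over the cells that meet $R$; this is precisely what the paper's proof does, and it is this accounting that produces the $\mathrm{vol}(R)\varepsilon$ prefactor.
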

As we reform the result, the approximation error is written as ${O}(\mathrm{vol}(R)S^{-\beta/D})$ up to logarithmic factors, with $L$ and $B$ satisfying the conditions.

As the second step, we investigate an approximation for a step function $\mone_{\{\cdot \geq 0\}}$, which will play an important role in handling singularities of functions.
\begin{lemma}[Step function approximation] \label{lem:main_step}
    Suppose $\eta$ satisfies Assumption \ref{asmp:activation}.
    Then, for any $\varepsilon \in (0,1)$ and $T>0$, we obtain
    \begin{align*}
        \inf_{g \in \mG(2,6,C_{T,q}\varepsilon^{-8})} \|g - \mone_{\{\cdot \geq 0\}}\|_{L^2([-T,T])} \leq \varepsilon.
    \end{align*}
\end{lemma}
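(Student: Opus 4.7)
The plan is to construct, for each admissible activation, an explicit two-layer DNN with at most six nonzero parameters whose $L^{2}$ distance from $\mone_{\{\cdot \geq 0\}}$ on $[-T,T]$ is at most $\varepsilon$. I would split into three subcases according to Assumption \ref{asmp:activation}: (a) smooth $\eta$ with $k=0$ (sigmoid-shaped), (b) smooth $\eta$ with $k=1$ (linear growth at $+\infty$, saturation at $-\infty$, e.g.\ SoftPlus or Swish), and (c) the piecewise linear case. In every subcase the construction is an affine combination of one or two shifted and dilated copies of $\eta$, and the analytic task reduces to controlling the smearing near the origin and the tail decay away from it.

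For (a) I would take the single-unit network $g(x) = (\bar{c}-\underline{c})^{-1}[\eta(Mx)-\underline{c}]$. The tail conditions give $|g(x) - \mone_{\{x\geq 0\}}| = O(|Mx|^{-q})$ for $|x| \geq x_{0}/M$ (where $x_{0}$ is the threshold beyond which the tail expansion holds), while on $|x| < x_{0}/M$ the value of $g$ is uniformly bounded; integrating and using $q \geq 1$ yields squared error $O(1/M)$, so $M \asymp \varepsilon^{-2}$ suffices. For (b) a single copy of $\eta(Mx)$ is unbounded, so I would use the bounded-difference scheme
\begin{equation*}
g(x) = \frac{1}{2\bar{c}M\delta}\bigl[\eta(M(x+\delta))-\eta(M(x-\delta))\bigr].
\end{equation*}
For $|x|\geq 2\delta$ with $M\delta$ above threshold, the leading $\bar{c}x$ terms cancel and the residual is bounded by $O((M\delta)^{-1}(M|x\mp\delta|)^{-q})$; on the transition window $[-2\delta,2\delta]$, combining $|\eta(M(x\pm\delta))| \leq C_{K}(1+3M\delta)$ with the prefactor $(M\delta)^{-1}$ keeps $|g|$ uniformly bounded provided $M\delta \gtrsim 1$. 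Integrating gives squared error $O(\delta) + O(M^{-2-2q}\delta^{-1-2q})$, balanced by $\delta \asymp \varepsilon^{2}$ and $M \asymp \varepsilon^{-2}$. Case (c) is exact outside a small transition interval: with
\begin{equation*}
g(x) = \frac{1}{c_{1}-c_{2}}\bigl[\eta(Mx)-\eta(Mx-1)\bigr] - \frac{c_{2}}{c_{1}-c_{2}},
\end{equation*}
one has $g(x) = \mone_{\{x\geq 0\}}$ for $x \notin [0,1/M]$ and $|g|\leq 1$ inside, so the squared error is $O(1/M)$ and again $M \asymp \varepsilon^{-2}$ is enough.

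Counting nonzero parameters: (a) uses at most three; (b) and (c) use two hidden weights, two hidden biases, two output weights, and one output bias, for a total of six, matching the stated budget. The main obstacle I anticipate is (b): the smooth activation grows linearly at $+\infty$, and the only bounded-output affine combination of a fixed number of units is the symmetric difference, whose amplification factor $1/(M\delta)$ must be kept $O(1)$ while $\delta$ must simultaneously be as small as $\varepsilon^{2}$ to kill the transition-window error. This forces the coupled balance $M\delta \asymp 1$ with essentially no slack and makes the two error contributions compete directly. Once this balance is pinned down, all parameter magnitudes are $O(\varepsilon^{-2})$, comfortably within $C_{T,q}\varepsilon^{-8}$, and the lemma follows.
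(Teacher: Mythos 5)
Your proposal follows the same three-way case split and essentially the same constructions as the paper's proof (Lemma \ref{lem:step_approx}): a single rescaled unit for bounded smooth activations, a difference of two shifted copies for the linearly-growing case, and an affine combination that is exact outside a short transition interval for the piecewise-linear case. Your analysis of the linearly-growing case is in fact tighter --- holding $M\delta \asymp 1$ and exploiting the cancellation of the leading linear terms gives parameter magnitude $O(\varepsilon^{-2})$ rather than the paper's $O(\varepsilon^{-8})$, which comes from a cruder bound on the transition window --- but both constructions fit comfortably within the stated budget $\mG(2,6,C_{T,q}\varepsilon^{-8})$.
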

The result states that any activation functions satisfying Assumption \ref{asmp:activation} can approximate indicator functions.
Importantly, DNNs can achieve an arbitrary error $\varepsilon$ with $O(1)$ parameters. 
The approximation with this constant number of parameters is very important in obtaining the desired rate, since the generalization error of DNNs is significantly influenced by the number of parameters.

As the third step, we investigate the approximation error for the indicator function of a piece.
For approximation by DNNs, we reform the indicator function into a composition of a step function and a smooth function:
\begin{align*}
    \mone_{\{x_d \lesseqgtr h_j(x_{-d})\}} = \mone_{\{\cdot \geq 0\}} \circ (x \mapsto \mp (x_d - h_j(x_{-d}) ) ), ~ x \in I^D.
\end{align*}
Using this formulation, we obtain the following result:
\begin{lemma}[Indicator functions for $\mR_{\alpha,J}$] \label{lem:main_indicator}
    Suppose that Assumption \ref{asmp:activation} holds with $N > \alpha$.
    Then, there exist constants $C_{\alpha,D,F,J},C_{F,J,q} > 0$ such that for any $\{R_m\}_{m \in [M]} \in \mR_{\alpha,M}$ and  $\varepsilon>0$ we can find a function $f=(f_1,...,f_M)^\top \in \mG(L , S, B)$ such as $L \geq C_{\alpha,D,F,J} (\lfloor \alpha \rfloor  + \log_2(1/\varepsilon))$, $S \geq C_{\alpha,D,F,J} (J\varepsilon^{-2(D-1)/\alpha} (  \log_2(1/  \varepsilon))^2 + M(  \log_2(1/  \varepsilon))^2 )$,  and $B \geq C_{F,J,q}\varepsilon^{-C_\alpha}$, 
    which satisfies 
    \begin{align*}
        \|\mone_{R_m} - f_m\|_{L^2(I)} \leq \varepsilon,~ \forall m \in [M].
    \end{align*}
\end{lemma}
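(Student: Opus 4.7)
The overall strategy is to exploit the factorization
\begin{align*}
    \mone_{I_j^{\pm}}(x) = \mone_{\{\cdot \geq 0\}}\bigl(\pm(x_{d_j} - h_j(x_{-d_j}))\bigr),
\end{align*}
and the identity $\mone_{R_m} = \sum_{t \in \mathcal{T}_m}\prod_{j\in[J]}\mone_{I_j^{t_j}}$ that follows from the disjointness (up to measure zero) of the atomic cells $I_t$. This reduces the problem to three independent approximation tasks already in hand: approximating the $\alpha$-smooth boundary functions $h_j$, approximating a scalar step function, and combining the resulting half-region indicators by a small Boolean circuit. Building the DNN by concatenating subnetworks for these three tasks in parallel/series, the three terms of the size bound $J\varepsilon^{-2(D-1)/\alpha}(\log(1/\varepsilon))^2 + M(\log(1/\varepsilon))^2$ will correspond to the three stages, with the depth $\lfloor\alpha\rfloor + \log(1/\varepsilon)$ and weight bound $\varepsilon^{-C_\alpha}$ inherited from Lemma \ref{lem:main_smooth}.

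\textbf{Steps 1--2 (half-region indicators).} First, apply Lemma \ref{lem:main_smooth} with target accuracy $\delta := c\varepsilon^2/J$ (for a tunable $c>0$) to each $h_j \in H_F^\alpha(I^{D-1})$, obtaining a subnetwork $\tilde h_j$ of size $O(\delta^{-(D-1)/\alpha}(\log(1/\delta))^2) = O(\varepsilon^{-2(D-1)/\alpha}(\log(1/\varepsilon))^2)$ with sup-norm error at most $\delta$; running these $J$ subnetworks in parallel gives the first term of the bound on $S$. Form the affine signed distances $\tilde u_j^{\pm}(x) := \pm(x_{d_j} - \tilde h_j(x_{-d_j}))$, which requires one extra linear layer. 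Composing with the step-function subnetwork from Lemma \ref{lem:main_step}, tuned so the scalar $L^2$ error on a suitable interval $[-T,T]$ is $O(\varepsilon/J)$ at cost $O(1)$ parameters each, yields $\tilde\sigma_j^{\pm}\in[0,1]$ approximating $\mone_{I_j^{\pm}}$. The symmetric difference $\{\tilde u_j^\pm\geq 0\}\,\triangle\, I_j^\pm$ lies in a tube of Lebesgue measure $O(\delta)$ around the curve $\{x_{d_j}=h_j(x_{-d_j})\}$, so by splitting the $L^2$ error into a contribution from this tube (of order $\sqrt{\delta}=O(\varepsilon/\sqrt{J})$) and a contribution from the step approximation away from it, one obtains $\|\tilde\sigma_j^{\pm}-\mone_{I_j^{\pm}}\|_{L^2(I^D)} \leq C\varepsilon/J$.

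\textbf{Step 3 (Boolean assembly) and error aggregation.} To build $\mone_{R_m}$ from the $2J$ approximate half-indicators, implement each atomic product $\prod_{j\in[J]}\tilde\sigma_j^{t_j}$ using the multiplication gadget built from Lemma \ref{lem:main_smooth} applied to the smooth map $(a,b)\mapsto ab$ on $[0,1]^2$: one bilinear gate is achievable to accuracy $\varepsilon'$ with $O(\log(1/\varepsilon'))$ parameters and depth $O(\log(1/\varepsilon'))$, and chaining $J$ such gates (in a balanced binary tree) realizes the $J$-fold product to accuracy $O(\varepsilon)$ with $O_J(\log^2(1/\varepsilon))$ parameters. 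Since $\{\mathcal{T}_m\}$ partitions $\{+,-\}^{[J]}$, the total number of atomic products over all pieces is $2^J = O_J(1)$, so the Boolean stage contributes at most $O_J(M\log^2(1/\varepsilon))$ parameters in total, absorbing the $J$-dependence into $C_{\alpha,D,F,J}$ and supplying the second term. Finally, writing $\mone_{R_m} - \tilde f_m = \sum_{t\in\mathcal{T}_m}(\mone_{I_t} - \tilde\sigma_t)$ and expanding each $\mone_{I_t} - \prod_j \tilde\sigma_j^{t_j}$ by telescoping across the product chain, one bounds $\|\mone_{R_m}-\tilde f_m\|_{L^2}$ by $|\mathcal{T}_m|J$ times the worst per-factor error, so choosing the constants $c$ and $\varepsilon'$ above proportional to $\varepsilon/(J\cdot 2^J)$ makes the total error at most $\varepsilon$. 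The principal obstacle is precisely this error allocation, coupled with the need to keep the combinatorial cost of the Boolean layer additive in $M$: the multiplicative gadget must be shared efficiently across the $M$ output coordinates, and the polynomial-in-$J$ blowup from the tube estimate and the telescoping argument must be absorbed into the constant rather than into the $\varepsilon$ or $M$ dependence.
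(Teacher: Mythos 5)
Your plan matches the paper's proof in all essentials: decompose $\mone_{R_m}$ into products of half-region indicators $\mone_{I_j^\pm}$, approximate each half-indicator as a composition of the step subnetwork from Lemma \ref{lem:main_step} with a smooth subnetwork for $h_j$ from Lemma \ref{lem:main_smooth}, convert the boundary error into an $L^2$ bound via a tube estimate, and wire the factors together with a small multiplication subnetwork. On one point you are actually more careful than the paper: the paper's proof bounds $\|\mone_{R_m}-g_{R,m}\|$ by writing $\mone_{R_m}=\prod_{j\in[J]}\mone_{\{x_{d_j}\lesseqgtr h_j\}}$, which is only valid when $R_m$ is a single atomic cell $I_t$; your explicit sum over $t\in\mathcal{T}_m$ handles the general piece $R_m=\bigcup_{t\in\mathcal{T}_m}I_t$ directly.

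There are, however, two imprecisions worth correcting. First, you claim Lemma \ref{lem:main_smooth} delivers a sup-norm error for $\tilde h_j$; the lemma only guarantees an $L^2(I^{D-1})$ error $\leq\delta$. Your tube argument still works, but for a different reason: the measure of the symmetric difference equals $\|h_j-\tilde h_j\|_{L^1(I^{D-1})}\leq\|h_j-\tilde h_j\|_{L^2(I^{D-1})}\leq\delta$ by Cauchy--Schwarz, giving $\|\mone_{I_j^\pm}-\mone_{\{\tilde u_j^\pm\geq 0\}}\|_{L^2}\leq\sqrt{\delta}$, which is precisely the paper's bound on $T_{h,1,j}$; you should argue via $L^1$, not $L^\infty$. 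Second, attributing the $O(\log^2(1/\varepsilon'))$-parameter multiplication gate to Lemma \ref{lem:main_smooth} is incorrect: applying that lemma to $(a,b)\mapsto ab$ gives $S\asymp\delta^{-2/\beta}(\log(1/\delta))^2$, polynomial in $1/\delta$, not logarithmic. The logarithmic-size multiplication comes from the saw-tooth/Taylor constructions in Lemmas \ref{lem:approx_poly1}, \ref{lem:approx_poly2}, and \ref{lem:multi_multi}, and you need to cite those directly. Finally, a small arithmetic slip: with $\delta\asymp\varepsilon^2/J$ the per-factor tube error is $\sqrt{\delta}\asymp\varepsilon/\sqrt{J}$, not the $\varepsilon/J$ you then assert; take $\delta\asymp(\varepsilon/J)^2$ (equivalently $\varepsilon=C_FJ\delta^{1/2}$, as the paper does) to close the allocation.
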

Since the model of DNNs has a composition structure, DNNs can implicitly decompose the indicator function into a step function and a smooth function, which derives the convergence rate.
Importantly, even though the function $\mone_{\{x_d \lesseqgtr h_j(x_{-d})\}}$ with $h_j \in H_F^\alpha(I^{D-1})$ has singularity on a set $\{x \in I^D \mid x_d = h_j(x_{-d})\}$, DNNs can achieve a fast approximation rate as if the boundary is a smooth function in $H_F^\alpha(I^{D-1})$.

Based on the above steps, we derive an approximation rate of DNNs for a piecewise smooth function $f \in \mF_{\alpha,\beta,M}^{PS}$:
\begin{theorem}[Approximation Error] \label{thm:approx}
    Suppose Assumption \ref{asmp:activation} holds with $N > \alpha \vee \beta$.
    Then, there exist constants $C_{\alpha,\beta,D,F}, C_{\alpha,\beta,D,F,J}, C_{F,M,q}>0$ such that there exists a tuple $(L,S,B)$ such as $L \geq C_{\alpha,\beta,D,F}(\lfloor \alpha \rfloor + \lfloor \beta \rfloor + \log_2( 1/\varepsilon_1) + \log_2( M/\varepsilon_2)) + 1)$, $S \geq C_{\alpha,\beta,D,F,J} ( M  \varepsilon_1^{-D/\beta} ( \log_2( 1/\varepsilon_1))^2 +   (\varepsilon_2 / M)^{-2(D-1)/\alpha} (\log_2(M/\varepsilon_2))^2)$, and $B \geq C_{F,M,q}(\varepsilon_1 \wedge \varepsilon_2)^{-16 \wedge -C_{\alpha,\beta}}$,
    which satisfies
    \begin{align*}
        \inf_{f \in \mG(L,S,B)} \sup_{f^* \in \mF_{\alpha,\beta,M}^{PS}} \|f - f^*\|_{L^2(I^D)} \leq \varepsilon_1 + \varepsilon_2,
    \end{align*}
     for any $\varepsilon_1, \varepsilon_2 \in (0,1)$.
\end{theorem}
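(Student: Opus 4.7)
The strategy is to build a single DNN approximant of the form
\begin{align*}
\tilde{f}(x) \;=\; \sum_{m=1}^{M} \mathrm{Mult}\bigl(\tilde{f}_m(x),\, \tilde{g}_m(x)\bigr),
\end{align*}
where $\tilde{f}_m$ approximates the smooth piece $f_m$ via Lemma \ref{lem:main_smooth}, $\tilde{g}_m$ approximates the indicator $\mone_{R_m}$ via Lemma \ref{lem:main_indicator}, and $\mathrm{Mult}$ is an auxiliary DNN that approximates the bivariate product on a bounded domain. Running these subnetworks in parallel and summing their outputs at a final linear layer assembles everything into a single element of $\mG(L,S,B)$.

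For the error analysis I split off the multiplication error via the triangle inequality,
\begin{align*}
\|f^* - \tilde{f}\|_{L^2(I^D)} \;\leq\; \sum_{m=1}^{M} \|f_m \mone_{R_m} - \tilde{f}_m\tilde{g}_m\|_{L^2(I^D)} \;+\; M \sup_{|u|,|v|\leq C} \bigl|\mathrm{Mult}(u,v) - uv\bigr|,
\end{align*}
and bound each summand by adding and subtracting $f_m \tilde{g}_m$,
\begin{align*}
\|f_m \mone_{R_m} - \tilde{f}_m \tilde{g}_m\|_{L^2} \;\leq\; \|f_m\|_{L^\infty}\,\|\mone_{R_m} - \tilde{g}_m\|_{L^2} \;+\; \|\tilde{g}_m\|_{L^\infty}\,\|f_m - \tilde{f}_m\|_{L^2}.
\end{align*}
Using $\|f_m\|_{L^\infty} \leq F$ and clipping $\tilde{g}_m$ into $[0,1]$ by a fixed-size subnetwork built from $\eta$, I allocate $\|f_m - \tilde{f}_m\|_{L^2} \lesssim \varepsilon_1/M$ via Lemma \ref{lem:main_smooth} and $\|\mone_{R_m} - \tilde{g}_m\|_{L^2} \lesssim \varepsilon_2/M$ via Lemma \ref{lem:main_indicator} invoked at tolerance $\varepsilon_2/M$. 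Summing over $m$ and taking the multiplication tolerance to be $O((\varepsilon_1 \wedge \varepsilon_2)/M)$ delivers total error $\lesssim \varepsilon_1 + \varepsilon_2$.

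To realize $\mathrm{Mult}$ in both activation regimes I use the polarization identity $uv = \tfrac{1}{4}\bigl((u+v)^2 - (u-v)^2\bigr)$ and reduce to approximating $x \mapsto x^2$. Under Assumption \ref{asmp:activation}(i), a finite divided-difference combination of shifted activations centered at the reference point $x'$ (where $|\partial^j \eta(x')| \geq c_\eta$) reproduces $x^2$ up to a prescribed accuracy, and iterated refinement yields $O(\log(1/\varepsilon))$ depth with $\tilde{O}(1)$ weights; the inverse-step-size amplification produces the factor $\varepsilon^{-C_{\alpha,\beta}}$ in the weight bound $B$. Under Assumption \ref{asmp:activation}(ii) I use a ReLU-type multiplication block, which yields the alternative $\varepsilon^{-16}$ branch in $B$. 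Summing the depths, weight counts, and magnitude bounds of the three types of subnetworks — smooth pieces, indicator, multiplication — and noting that the multiplication and summation layers contribute only logarithmic overhead, one recovers exactly the tuple $(L,S,B)$ claimed in the theorem.

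The hard part will be realizing $\mathrm{Mult}$ cleanly for the smooth-activation case of Assumption \ref{asmp:activation}(i) without blowing up the weight magnitude, and justifying the truncation of $\tilde{g}_m$ into $[0,1]$ without disturbing the parameter counts from Lemma \ref{lem:main_indicator}; both amount to controlling how finite-difference schemes extracting higher derivatives of $\eta$ interact with the weight bound $B$, which is what forces the factor $\varepsilon^{-C_{\alpha,\beta}}$ and produces the $\max\{16, C_{\alpha,\beta}\}$-type exponent in the statement.
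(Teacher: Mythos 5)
Your overall architecture — parallel subnetworks for the smooth pieces, the piece indicators, and a multiplication block, combined by a final linear layer — is the same as the paper's, and your treatment of $B$ (the $\varepsilon^{-C_{\alpha,\beta}}$ coming from divided-difference extraction of derivatives under Assumption~\ref{asmp:activation}(i), versus a Yarotsky-style multiplication block under (ii)) is also in the spirit of Lemmas~\ref{lem:approx_poly1} and~\ref{lem:approx_poly2}. The one place your argument genuinely comes apart from the paper, and where it fails to recover the claimed $S$-bound, is the order in which you add and subtract inside the per-piece error term.

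You write
\begin{align*}
\|f_m \mone_{R_m} - \tilde{f}_m \tilde{g}_m\|_{L^2} \;\leq\; \|f_m\|_{L^\infty}\,\|\mone_{R_m} - \tilde{g}_m\|_{L^2} \;+\; \|\tilde{g}_m\|_{L^\infty}\,\|f_m - \tilde{f}_m\|_{L^2(I^D)},
\end{align*}
i.e.\ you add and subtract $f_m\tilde{g}_m$. The second term then forces you to control $\|f_m-\tilde{f}_m\|$ over all of $I^D$, which is why you allocate budget $\varepsilon_1/M$ to each of the $M$ smooth pieces. Invoking Lemma~\ref{lem:main_smooth} on $I^D$ at tolerance $\varepsilon_1/M$ costs $S \gtrsim (M/\varepsilon_1)^{D/\beta}\log^2$ per piece, so the smooth-part budget in $S$ becomes of order $M^{1+D/\beta}\varepsilon_1^{-D/\beta}\log^2$, an extra factor $M^{D/\beta}$ above the stated $M\,\varepsilon_1^{-D/\beta}\log^2(1/\varepsilon_1)$. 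This matters because the theorem tracks the $M$-dependence explicitly and it feeds into the generalization bound of Theorem~\ref{thm:non-bayes}.

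The paper's proof (Proposition~\ref{prop:approx}) adds and subtracts $g_{f,m}\,\mone_{R_m^*}$ instead, producing
\begin{align*}
 \|(f_m^* - g_{f,m})\otimes\mone_{R_m^*}\|_{L^2(I^D)} + \|g_{f,m}\otimes(\mone_{R_m^*}-g_{R,m})\|_{L^2(I^D)},
\end{align*}
so the smooth-piece error is $\|f_m^*-g_{f,m}\|_{L^2(R_m^*)}$, restricted to $R_m^*$. Lemma~\ref{lem:main_smooth} gives this as $\mathrm{vol}(R_m^*)\,\varepsilon_1$ with $S\sim\varepsilon_1^{-D/\beta}\log^2$ independently of $M$, and $\sum_m \mathrm{vol}(R_m^*)=1$ telescopes the factor $M$ away. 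So the per-piece smooth tolerance stays at $\varepsilon_1$, not $\varepsilon_1/M$, and the claimed linear-in-$M$ budget follows. The fix for your argument is simply to swap the add-and-subtract: decompose as $(f_m-\tilde{f}_m)\mone_{R_m} + \tilde{f}_m(\mone_{R_m}-\tilde{g}_m)$ and use the $\mathrm{vol}(R_m)$-weighted form of Lemma~\ref{lem:main_smooth}.

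Two smaller remarks. First, you do not need an explicit clipping subnetwork for $\tilde{g}_m$: Lemma~\ref{lem:main_indicator} (Lemma~\ref{lem:indicator_approx}) already guarantees $\|\tilde{g}_m\|_{L^\infty(I^D)}\leq 1+\varepsilon$, which suffices for the H\"older-inequality step. Second, the $\varepsilon^{-16}$ branch in the $B$-bound comes from the step-function approximation under the smooth activations of Assumption~\ref{asmp:activation}(i) with $k=1$ (Lemma~\ref{lem:step_approx}); the piecewise-linear case (ii) only needs $B\sim\varepsilon^{-1}$, so your attribution of $\varepsilon^{-16}$ to the ReLU branch is reversed, though this does not affect the final $\max\{16,C_{\alpha,\beta}\}$ exponent.
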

The result states that the approximation error contains two main terms.
A simple calculation yields that the error is reformulated as ${O}(S^{-\beta/D} + S^{-\alpha/2(D-1)})$ up to logarithmic factors, with $L$ and $B$ satisfying the conditions.
The first rate ${O}(S^{-\beta/D})$ describes approximation for $f_m \in H^{\beta}_F(I^D)$, and the second rate $O(S^{-\alpha/2(D-1)})$ is for $\mone_{R_m}$.

\subsection{Generalization Result}
We evaluate a generalization error of DNNs, based on the decomposition \eqref{ineq:main_basic}, associated with the bound on $\mB$.
To evaluate the remained term $\mV$, we utilize the celebrated theory of the local Rademacher complexity \citep{bartlett1998sample,koltchinskii2006local}.
Then, we obtain one of our main results as follows.
$\Ep_{f^*}[\cdot]$ denotes the expectation with respect to the true distribution of $(X,Y)$.
\begin{theorem}[Generalization Error] \label{thm:non-bayes}
    Suppose $f^* \in\mF_{\alpha,\beta,M}^{PS}$ and Assumption \ref{asmp:activation} holds with $N > \alpha \vee \beta$.
    Then, there exists a sufficiently large $F$ and a tuple $(S,B,L)$ satisfying $L \geq C_{\alpha,\beta,D,F}(1 + \lfloor\alpha \rfloor + \lfloor \beta \rfloor + \log_2 (n/M))$, $S = C_{\alpha,\beta,D,F,J} (M n^{D/(2\beta + D)}  +  n^{(D-1)/(\alpha + D-1)}) \log^2 n$, and $B \geq C_{F,M,q} n^{C_{\alpha,\beta,D}}$,
    such that there exist $C = C_{\sigma,\alpha,\beta,D,F,J,P_X} > 0 $ and $c_1 > 0$ which satisfy
    \begin{align*}
        &\Ep_{f^*} \left[\|\hat{f}^{DL} -f^*\|_{L^2(P_X)}^2  \right]\leq C M (  n^{-2\beta/(2\beta + D)} +  n^{-\alpha/(\alpha + D - 1)})\log^2n  + \frac{C_{ \sigma, F} \sqrt{\log n}}{n}.
    \end{align*}
\end{theorem}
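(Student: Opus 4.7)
The plan is to start from the in-sample bias–variance inequality \eqref{ineq:main_basic} and bound the two right-hand terms $\mB$ and $\mV$ in expectation, then choose $(\varepsilon_1,\varepsilon_2)$ so that the approximation error and the complexity penalty balance and reproduce the stated rate. The tuple $(L,S,B)$ in the statement will simply be the tuple forced by this choice through Theorem~\ref{thm:approx}.

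For the approximation term $\mB$, I would apply Theorem~\ref{thm:approx} with free parameters $\varepsilon_1,\varepsilon_2\in(0,1)$ to obtain a deterministic $\bar f\in\mG(L,S,B)$ with $\|\bar f - f^*\|_{L^2(I^D)}\le\varepsilon_1+\varepsilon_2$ uniformly over $f^*\in\mF_{\alpha,\beta,M}^{PS}$. Since $p_X$ is bounded above and $\|\bar f\|_\infty,\|f^*\|_\infty\le F$, this transfers to $\|\bar f-f^*\|_{L^2(P_X)}^2\lesssim\varepsilon_1^2+\varepsilon_2^2$, and, by Hoeffding applied to the bounded i.i.d.\ averages $n^{-1}\sum_i(\bar f-f^*)^2(X_i)$, to $\Ep[\mB]\lesssim \varepsilon_1^2+\varepsilon_2^2+O(1/n)$.

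For the stochastic term $\mV$, I would invoke the local Rademacher complexity theory of Koltchinskii and Bartlett–Mendelson. The crucial ingredient is the covering-number estimate
\[
\log\mN\bigl(\delta,\mG(L,S,B),\|\cdot\|_{\infty}\bigr)\lesssim S L \log\bigl(SLB/\delta\bigr),
\]
which I would establish under Assumption~\ref{asmp:activation} by showing that every $g\in\mG(L,S,B)$ is Lipschitz in its $S$ non-zero parameters with constant polynomial in $L$ and $B$; case (i) uses the uniform boundedness of $\eta'$ on compact intervals, while case (ii) is immediate from piecewise linearity. Combined with the $2F$ sup-norm bound on the centered class and a standard peeling argument over the shells $\{f:2^{k-1}r\le\|f-f^*\|_{L^2(P_X)}\le 2^kr\}$, this yields a critical radius of order $r_n^2\asymp S\log^2 n/n$. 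Gaussian concentration for $\xi_i$, handled by truncating $|\xi_i|$ at level $O(\sqrt{\log n})$ (which produces the additive residual $C_{\sigma,F}\sqrt{\log n}/n$), together with Talagrand's inequality, delivers the comparison $\Ep[\mV]\le\tfrac12\Ep[\|\hat f^{DL}-f^*\|_{L^2(P_X)}^2] + C\bigl(S\log^2n/n+\sqrt{\log n}/n\bigr)$ as well as the corresponding empirical-to-population transfer of $\|\hat f^{DL}-f^*\|_n^2$.

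Combining the two bounds and absorbing $\tfrac12\Ep[\|\hat f^{DL}-f^*\|_{L^2(P_X)}^2]$ into the left-hand side, I would choose $\varepsilon_1^2=n^{-2\beta/(2\beta+D)}$ and $\varepsilon_2^2=n^{-\alpha/(\alpha+D-1)}$, which by the scalings in Theorem~\ref{thm:approx} fix $S\asymp(Mn^{D/(2\beta+D)}+n^{(D-1)/(\alpha+D-1)})\log^2n$ as stated; the $M$ factor on the leading rate then arises from this $S$ through the variance bound $S\log^2n/n$. The hard part will be the covering-number/local-complexity estimate under Assumption~\ref{asmp:activation}: unlike in the ReLU-only setting, VC-type arguments via piecewise-linear cell counts are not available, so I would instead track how the parameter bound $B$ (which is allowed to grow polynomially in $n$) propagates through the depth-$L$ composition into the weight-space Lipschitz constant, ensuring that its contribution is confined to logarithmic factors and does not spoil the $S/n$ scaling of the variance.
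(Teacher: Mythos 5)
Your plan reaches the correct result and shares the same skeleton as the paper — the bias–variance split \eqref{ineq:main_basic}, the use of Theorem~\ref{thm:approx} to bound $\mB$, a covering-number estimate of order $S L\log(\cdot)$ for $\mG(L,S,B)$, and the choice $\varepsilon_1 \asymp n^{-\beta/(2\beta+D)}$, $\varepsilon_2 \asymp n^{-\alpha/2(\alpha+D-1)}$ that fixes $S$ and balances the two rates — but the engine driving the stochastic term $\mV$ is genuinely different. The paper does not actually run the local-Rademacher/peeling/Talagrand machinery you describe: after fixing a $\delta$-cover of $\mG(L,S,B)$, it controls $\Ep[\mV]$ directly via a Gaussian maximal inequality for the finite set of normalized scores (Lemma C.1 of Schmidt-Hieber), obtains a self-bounding inequality of the form $a\le b+c\sqrt a$, and then transfers the empirical norm $\|\hat f^{DL}-f^*\|_n^2$ to the population norm $\|\hat f^{DL}-f^*\|_{L^2(P_X)}^2$ using the off-the-shelf inequality (I) in the proof of Lemma~4 of Schmidt-Hieber. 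Your route — peeling over $L^2(P_X)$-shells, Talagrand's concentration for the $X$-marginal empirical process, noise truncation at level $O(\sqrt{\log n})$ to make the observations bounded, and reading off a critical radius $r_n^2\asymp S\log^2 n/n$ — is the classical localization approach and would also work, but it must handle an issue the paper's route sidesteps: since $f^*\notin\mG(L,S,B)$ the shells should be centered at the deterministic approximant $\bar f$ rather than at $f^*$, and the misspecification offset must be tracked through the peeling argument. The paper's covering-set oracle inequality handles this offset implicitly and is correspondingly shorter. Two minor points: the Hoeffding step you propose for $\Ep[\mB]$ is unnecessary, since $\Ep[\|\bar f-f^*\|_n^2]=\|\bar f-f^*\|_{L^2(P_X)}^2$ exactly by linearity of expectation; and the covering bound you state is, up to the placement of the $L$ factor, the paper's Lemma~\ref{lem:entropy_DNN}, whose proof via weight-space Lipschitz continuity is exactly the propagation argument you sketch, valid under either branch of Assumption~\ref{asmp:activation} without any VC-type cell-counting.
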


The dominant term appears in the first term in the right hand, thus it mainly describes the error bound for $\hat{f}^{DL}$.
We note that the main term of the squared error increases linearly in the number of pieces $M$.
To simplify the order of the bound, we provide the following:
\begin{corollary} \label{cor:non-bayes}
    With the settings in Theorem \ref{thm:non-bayes}, we obtain
    \begin{align*}
        \Ep_{f^*} \left[\|\hat{f}^{DL} -f^*\|_{L^2(P_X)}^2\right] = \tilde{O}\left(\max\{n^{-2\beta/(2\beta + D)}, n^{-\alpha/(\alpha + D - 1)}\}\right).
    \end{align*}
\end{corollary}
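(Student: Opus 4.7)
The plan is to read Corollary \ref{cor:non-bayes} as a direct simplification of the bound in Theorem \ref{thm:non-bayes}, absorbing constants, the piece count $M$, and polylogarithmic factors into the $\tilde{O}$ notation. Accordingly, I would start by invoking Theorem \ref{thm:non-bayes} verbatim to obtain
\begin{align*}
\Ep_{f^*}\!\left[\|\hat{f}^{DL}-f^*\|_{L^2(P_X)}^2\right] \leq C M\bigl(n^{-2\beta/(2\beta+D)} + n^{-\alpha/(\alpha+D-1)}\bigr)\log^2 n + \frac{C_{\sigma,F}\sqrt{\log n}}{n}.
\end{align*}
Since $M$ is a fixed constant (the number of pieces does not scale with $n$) and $C$ does not depend on $n$, the factor $CM$ can be merged into the hidden constant.

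Next I would use the elementary bound $a+b \leq 2\max\{a,b\}$ to rewrite the first parenthesis as $O\bigl(\max\{n^{-2\beta/(2\beta+D)},\,n^{-\alpha/(\alpha+D-1)}\}\bigr)$, and note that the multiplicative $\log^2 n$ is precisely the sort of polylogarithmic factor that the $\tilde{O}$ symbol is defined to suppress (per the notation subsection of the paper).

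It then remains to verify that the residual term $C_{\sigma,F}\sqrt{\log n}/n$ does not dominate. For this I would observe that both exponents satisfy $2\beta/(2\beta+D) < 1$ and $\alpha/(\alpha+D-1) < 1$ (using $D\geq 2$, so in particular $D-1\geq 1$), so that $\max\{n^{-2\beta/(2\beta+D)},n^{-\alpha/(\alpha+D-1)}\}$ decays strictly slower than $n^{-1}$; hence $\sqrt{\log n}/n$ is of strictly smaller order than the dominant rate and is absorbed. Combining these three observations yields
\begin{align*}
\Ep_{f^*}\!\left[\|\hat{f}^{DL}-f^*\|_{L^2(P_X)}^2\right] = \tilde{O}\!\left(\max\{n^{-2\beta/(2\beta+D)},\,n^{-\alpha/(\alpha+D-1)}\}\right),
\end{align*}
as claimed.

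There is essentially no obstacle here: the corollary is a bookkeeping consequence of Theorem \ref{thm:non-bayes}, and the only nontrivial checks are that $M$ and all prefactors are $n$-independent and that the parametric-looking $\sqrt{\log n}/n$ tail is genuinely of smaller order than the two nonparametric rates. Both are immediate from the statement of the theorem and the constraints $\alpha,\beta>0$, $D\geq 2$.
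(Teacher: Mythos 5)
Your proposal is correct and matches the paper's (implicit) reasoning: the corollary is stated as a direct simplification of Theorem \ref{thm:non-bayes}, obtained by treating $M$ and the other prefactors as $n$-independent constants, absorbing the $\log^2 n$ factor into the $\tilde{O}$, and noting that the $\sqrt{\log n}/n$ tail is of strictly lower order than either nonparametric rate since both exponents are strictly less than $1$ for $D\geq 2$.
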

The order is interpreted as follows.
The first term $n^{-2\beta/(2\beta + D)}$ describes an effect of estimating $f_m \in H_F^\beta(I^D)$ for $m \in [M]$.
The rate corresponds to the minimax optimal convergence rate of generalization errors for estimating smooth functions in $H_F^\beta(I^D)$ (for a summary, see \citet{tsybakov2003introduction}).
The second term $n^{-\alpha / (\alpha + D-1)}$ reveals an effect from estimation of $\mone_{R_m}$ for $m \in [M]$ through estimating the boundaries of $R_m \in \mR_{\alpha,J}$.
The same rate of convergence appears in a problem for estimating sets with smooth boundaries \citep{mammen1995asymptotical}.
Based on the result, we state that DNNs can divide a piecewise smooth function into its various smooth functions and indicators, and estimate them by parts. Thus, the overall convergence rate is the sum of the rates of the parts.
We also note that we consider a sufficiently large $F$ increasing in $n$ such as $F = \Theta(\log n)$, whose effect is asymptotically negligible.

\begin{remark}[Smoothness of $\eta$]
    It is worth noting that the rate in Theorem \ref{thm:non-bayes} holds regardless of smoothness of the activation function $\eta$, because Assumption \ref{asmp:activation} allows both smooth and non-smooth activation functions.
    That is, even when $\hat{f}^{DL}$ by DNNs is a smooth function with smooth activation, we can obtain the rate in Corollary \ref{cor:non-bayes} with non-smooth $f^*$.
\end{remark}

We can consider the error from optimization independently from the statistical generalization.
The following proposition provides the statement.
\begin{proposition}[Effect of Optimization]\label{cor:opt}
    If a learning algorithm outputs $\hat{f}^{\mathrm{Algo}} \in \mG(L,S,B)$ such as
    \begin{align*}
         n^{-1}\sum_{i \in [n]}(Y_i - \hat{f}^{\mathrm{Algo}}(X_i))^2 - \inf_{f \in \mG(L,S,B)}n^{-1}\sum_{i \in [n]}(Y_i - f(X_i))^2 \leq \Delta,
    \end{align*}
    with an existing constant $\Delta > 0$, then the following holds:
\begin{align*}
     &\Ep_{f^*} \left[ \|\hat{f}^{\mathrm{Algo}}-f^*\|_{L^2(P_X)}^2 \right]\leq \Tilde{O}\left( \max\{n^{-2\beta/(2\beta + D)}, n^{-\alpha/(\alpha + D - 1)}\}\right) + \Delta.
\end{align*}
\end{proposition}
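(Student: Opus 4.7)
The plan is to retrace the proof of Theorem~\ref{thm:non-bayes}, with the sole modification that the basic inequality~\eqref{ineq:main_basic} is replaced by a version that carries a slack of $\Delta$. Concretely, starting from the hypothesis
\begin{align*}
    \frac{1}{n}\sum_{i=1}^n (Y_i - \hat{f}^{\mathrm{Algo}}(X_i))^2 \leq \frac{1}{n}\sum_{i=1}^n (Y_i - f(X_i))^2 + \Delta, \quad \forall f \in \mG(L,S,B),
\end{align*}
and expanding both sides using $Y_i = f^*(X_i) + \xi_i$, the cross terms with $\xi_i^2$ cancel and I obtain, for every $f \in \mG(L,S,B)$,
\begin{align*}
    \|\hat{f}^{\mathrm{Algo}} - f^*\|_n^2 \leq \|f - f^*\|_n^2 + \frac{2}{n}\sum_{i=1}^n \xi_i (\hat{f}^{\mathrm{Algo}}(X_i) - f(X_i)) + \Delta.
\end{align*}
Taking the infimum over $f \in \mG(L,S,B)$ yields exactly the decomposition~\eqref{ineq:main_basic} but with an additive $\Delta$ on the right-hand side.

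Next, I would bound the approximation term $\mB := \inf_{f \in \mG(L,S,B)} \|f^* - f\|_n^2$ by invoking Theorem~\ref{thm:approx} together with a standard passage from $L^2(I^D)$ to the empirical norm (using that $p_X$ is bounded away from zero and infinity, and a uniform concentration argument over the function class, exactly as in Theorem~\ref{thm:non-bayes}). Choosing the tuple $(L,S,B)$ as in Theorem~\ref{thm:non-bayes}, this produces the bound of order $M(n^{-2\beta/(2\beta+D)} + n^{-\alpha/(\alpha+D-1)})\log^2 n$, up to the same logarithmic factors already present there.

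For the complexity term $\mV := (2/n)\sum_{i=1}^n \xi_i (\hat{f}^{\mathrm{Algo}}(X_i) - f(X_i))$, I would apply the local Rademacher complexity argument from the proof of Theorem~\ref{thm:non-bayes} verbatim: the argument only uses that $\hat{f}^{\mathrm{Algo}} \in \mG(L,S,B)$ (so that the supremum can be taken over this class) together with the Gaussian tail of $\xi_i$, and does \emph{not} use the exact ERM property. Thus the same bound on $\mV$ as in Theorem~\ref{thm:non-bayes} applies, giving a contribution of order $(\sqrt{\log n})/n$.

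Combining these three ingredients, the excess risk $\Ep_{f^*}[\|\hat{f}^{\mathrm{Algo}} - f^*\|_{L^2(P_X)}^2]$ is bounded by the same expression that appears in Theorem~\ref{thm:non-bayes} plus the additive slack $\Delta$, which is the claimed $\tilde{O}(\max\{n^{-2\beta/(2\beta+D)}, n^{-\alpha/(\alpha+D-1)}\}) + \Delta$. I do not anticipate a serious obstacle: the entire argument is a drop-in replacement once the modified basic inequality is in hand, since the approximation step relies on the class $\mG(L,S,B)$ alone and the variance step relies on the class-based local Rademacher bound rather than on exact optimality of $\hat{f}^{DL}$. The only mildly delicate point is tracking that the $\Delta$ term propagates additively through the transition from empirical to population $L^2$-norm, which follows from standard peeling arguments with an additive constant.
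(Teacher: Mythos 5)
Your proposal is correct, and the paper in fact omits a dedicated proof of Proposition~\ref{cor:opt}; your reconstruction is exactly the intended argument. The modified basic inequality with the additive $\Delta$ slack is derived correctly (the $\|\xi\|_n^2$ terms cancel, leaving $\|\hat{f}^{\mathrm{Algo}}-f^*\|_n^2 \leq \|f-f^*\|_n^2 + \tfrac{2}{n}\sum_i \xi_i(\hat{f}^{\mathrm{Algo}}(X_i)-f(X_i)) + \Delta$), the approximation bound depends only on the class $\mG(L,S,B)$, and the covering-number bound on the noise cross term (the actual tool used in the appendix, rather than the local Rademacher complexity mentioned in the main-text narration) uses only $\hat{f}^{\mathrm{Algo}}\in\mG(L,S,B)$ rather than exact minimality, so $\Delta$ propagates additively through the quadratic-inequality and empirical-to-$L^2(P_X)$ steps (picking up an absolute constant factor that is immaterial).
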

We can evaluate the generalization error, including the optimization effect, by combining several results on the magnitude of $\Delta$ (for example, \cite{kawaguchi2016deep} and \cite{allenzhu}).

\subsection{Minimax Lower Bound of Generalization Error}
We investigate the efficiency of the convergence rate in Corollary \ref{cor:non-bayes}.
To this end, we consider the minimax generalization error for a functional class $\mF$ such as
\begin{align*}
    \mR (\mF) := \inf_{\Bar{f} } \sup_{f^* \in {\mF}} \Ep_{f^*} \left[\|\Bar{f} - f^* \|_{L^2(I^D)}^2 \right],
\end{align*}
where $\Bar{f} = \Bar{f}(X_1,...,X_n,Y_1,...,Y_n)$ is taken from all possible estimators depending on the observations.
In this section, we derive a lower bound of the generalization error of DNNs, and then prove that it corresponds to the rate Theorem \ref{thm:non-bayes} up to logarithmic factors.
By the result, we can claim that the estimation by DNNs is (almost) optimal in the minimax sense.

We introduce several new notations.
For set $A$ equipped with a norm $\| \cdot\|$, let $\mN(\varepsilon, A, \| \cdot\|)$ be the covering number of $A$ in terms of $\|\cdot\|$, and  $\mM(\varepsilon, A, \| \cdot\|)$ be the packing number of $A$, respectively.
For sequences $\{a_n\}_n$ and $\{b_n\}_n$, we write $a_n = \Omega (b_n)$ for $\limsup_{n \to \infty} |a_n/b_n| > 0$.
Also, $a_n = \Theta(b_n)$ means that both of $a_n = O(b_n)$ and $a_n = \Omega(b_n)$ hold.

To derive the lower bound, we apply the following information theoretic result by \cite{yang1999information}:
\begin{theorem}[Theorem 6 in \cite{yang1999information}] \label{thm:yangbarron}
    Let ${\mF}$ be a set of functions, and $\varepsilon_n$ be a sequence such that $\varepsilon_n^2 = \mM(\varepsilon_n, {\mF}, \| \cdot\|_{L^2})/n$ holds.
    Then, we obtain
    \begin{align*}
         \mR (\mF)  = \Theta(\varepsilon_n^2).
    \end{align*}
\end{theorem}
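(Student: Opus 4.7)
The plan is to establish matching upper and lower bounds $\mR(\mF) = O(\varepsilon_n^2)$ and $\mR(\mF) = \Omega(\varepsilon_n^2)$ under the balance relation $n\varepsilon_n^2 \asymp \log \mM(\varepsilon_n, \mF, \|\cdot\|_{L^\infty})$ (I read the displayed equation as carrying an implicit logarithm on the right, since otherwise the packing number would be exponentially large). Since $\mN(\varepsilon_n) \leq \mM(\varepsilon_n) \leq \mN(\varepsilon_n/2)$, I will freely pass to a covering number $\mN(\varepsilon_n, \mF, \|\cdot\|_{L^\infty})$ of the same logarithmic order.

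For the upper bound, I would fix a minimal $\varepsilon_n$-cover $\mF_n$ of $\mF$ in $\|\cdot\|_{L^\infty}$ and use the sieve minimum-distance estimator $\hat f \in \argmin_{f \in \mF_n}\sum_i (Y_i - f(X_i))^2$. For any $f^* \in \mF$, pick $\bar f \in \mF_n$ with $\|\bar f - f^*\|_{L^\infty} \leq \varepsilon_n$. The basic inequality
\[
\|\hat f - f^*\|_n^2 \leq \|\bar f - f^*\|_n^2 + \frac{2}{n}\sum_{i=1}^n \xi_i\bigl(\hat f(X_i) - \bar f(X_i)\bigr)
\]
reduces the analysis to controlling a maximum of a finite Gaussian process indexed by $\mF_n$. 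A union bound with sub-Gaussian tails gives $\sup_{g \in \mF_n} |n^{-1}\sum_i \xi_i(g-\bar f)(X_i)| \lesssim \sigma\sqrt{\log|\mF_n|/n}\,\|g-\bar f\|_n + (\log|\mF_n|)/n$, and with $\log|\mF_n| \asymp n\varepsilon_n^2$ solving the resulting quadratic inequality yields $\|\hat f - f^*\|_n^2 = O_{\Pr}(\varepsilon_n^2)$. The passage from the empirical seminorm to $\|\cdot\|_{L^2(P_X)}$ follows from standard concentration for the uniformly bounded class $\mF - \mF$.

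For the lower bound, I would invoke Fano's inequality. Take a maximal $\varepsilon_n$-packing $\{f_1,\ldots,f_M\} \subset \mF$ with $M = \mM(\varepsilon_n, \mF, \|\cdot\|_{L^\infty})$; because $p_X$ is bounded away from $0$ and $\infty$ (assumed earlier in the paper), this remains an $\Omega(\varepsilon_n)$-packing in $L^2(P_X)$. Place the uniform prior on the packing, so the Bayes testing risk obeys $\inf_{\hat f}\max_i \Pr_i(\hat f \neq f_i) \geq 1 - (I(f^*;\mathbf Y) + \log 2)/\log M$. Converting the testing error back to $L^2$ estimation error via the packing separation then yields the $\Omega(\varepsilon_n^2)$ lower bound on $\mR(\mF)$.

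The main obstacle, and the technical heart of the argument, is obtaining a sharp bound on the mutual information $I(f^*;\mathbf Y)$ in the Fano step. The naive bound by the maximum pairwise KL divergence gives $n \cdot \mathrm{diam}(\mF)^2$, which is much too loose. Yang and Barron's key identity is $I(f^*;\mathbf Y) = \Ep_\pi D_{KL}(P_{f^*}\|P_\pi) \leq \Ep_\pi D_{KL}(P_{f^*}\|Q)$ for any candidate marginal $Q$; taking $Q$ to be the uniform mixture over a minimal $\varepsilon_n$-cover of $\mF$ turns this into $D_{KL}(P_{f^*}\|Q) \lesssim \tfrac{n}{\sigma^2}\varepsilon_n^2 + \log \mN(\varepsilon_n)$, so that $I(f^*;\mathbf Y) \lesssim n\varepsilon_n^2 + \log \mN(\varepsilon_n) \asymp \log M$ and the Fano denominator is controlled. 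This two-scale matching — an $\varepsilon_n$-cover for the KL/mutual-information bound and an $\varepsilon_n$-packing for the Fano testing reduction — is exactly what makes $n\varepsilon_n^2 \asymp \log \mM(\varepsilon_n)$ the correct balance equation.
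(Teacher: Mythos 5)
The paper does not prove this statement: it is quoted verbatim as Theorem~6 of Yang and Barron and used as a black box, so there is no in-paper proof to compare against. What you have written is an independent reconstruction of Yang--Barron's own argument, and you have reproduced its two pillars accurately: on one side a minimum-distance (sieve) estimator over an $\varepsilon_n$-net with a union bound on the indexed Gaussian process, on the other side Fano's inequality over an $\varepsilon_n$-packing combined with the variational bound $I(\pi;\mathbf Y)\leq \Ep_\pi D_{KL}(P_{f^*}\|Q)$ with $Q$ a uniform mixture over an $\varepsilon_n$-cover, which is precisely the device that lets the cover scale and the packing scale coincide at the balance point $n\varepsilon_n^2\asymp\log\mM(\varepsilon_n)$.

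Two remarks are worth making explicit. First, you correctly spotted that the paper's displayed balance equation $\varepsilon_n^2=\mM(\varepsilon_n,\mF,\|\cdot\|_{L^\infty})/n$ is missing a logarithm; the paper itself confirms this, since its only application (the derivation of $(\varepsilon_n^*)^2$ preceding Theorem~\ref{thm:minimax}) uses $\log\mM$ on the right-hand side. Second, the paper's phrasing silently conflates the metric used for the metric entropy ($\|\cdot\|_{L^\infty}$ in the statement, $\|\cdot\|_{L^2(I^D)}$ in the application) with the loss metric in $\mR(\mF)$; this is harmless here because $p_X$ is assumed bounded above and below so all these norms generate entropies of the same order, but your proof should make that equivalence explicit when passing between the $L^\infty$-cover used for the KL control and the $L^2(P_X)$ separation used in the Fano step, rather than treating them as interchangeable. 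Yang and Barron's Theorem~6 also carries regularity hypotheses (a rich enough, bounded class with comparable local and global entropies) that the paper's abbreviated restatement omits; your sketch implicitly uses them (e.g., uniform boundedness of $\mF$ in the empirical-to-population step), so it would be worth stating them.
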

Since the minimax rate for $\mF_{\alpha,\beta,M}^{PS}$ is bounded below by that of its subset, we will find a suitable subset of $\mF_{\alpha,\beta,M}^{PS}$ and measure its packing number.
In the rest of this section, we will take the following two steps.
First, we define a subset of $\mF_{\alpha,\beta,M}^{PS}$ by introducing a notion of basic pieces.
Second, we measure a packing number of the subset of $\mF_{\alpha,\beta,M}^{PS}$.

As the first step, we define a \textit{basic piece indicator}, which is a set of piecewise functions whose piece is an embedding of $D$-dimensional balls, and then define a certain subset of $\mF_{\alpha,\beta,M}^{PS}$.
As a preparation, let $\mS^{D-1} := \{x \in \R^D : \|x\|_2=1\}$ is the $D-1$ dimensional sphere, and let $(V_j,F_j)_{j=1}^\ell$ be its coordinate system with some $\ell$ as a $C^\infty$-differentiable manifold such that $F_j: V_j\to \mathring{B}^{D-1}:=\{x\in \R^{D-1}\mid \|x\|<1\}$ is a diffeomorphism.
Compactness of the domain guarantees that we can find finite $\ell$.
A function $f:\mS^{D-1}\to\R$ is said to be in the H\"older class $H^\alpha(\mS^{D-1})$ with $\alpha>0$ if $f\circ F_j^{-1}$ is in $H^\alpha(\mathring{B}^{D-1})$. 

\textbf{Basic Piece Indicator}:
A subset $R\subset I^D$ is called an {\em $\alpha$-basic piece}, if it satisfies two conditions: (i) there is a continuous embedding $g:\{x\in \R^D\mid \|x\|\leq 1\}\to \R^D$ such that its restriction to the boundary $\mS^{D-1}$ is in $H^\alpha(\mS^{D-1})$  and $R=I^D\cap \mathrm{Image}(g)$, (ii) there exist $d \in [D]$ and  $h\in H_F^\alpha(I^{D-1})$ such that the indicator function of $R$ is given by the graph
\begin{align*}
    \mone_{R}(x) = \Psi_d(x_1,\ldots,x_{d-1}, x_d +  h(x_{-d}),x_{d+1},...,x_D), x \in I^D,
\end{align*}
where $\Psi_d(x) := \mone_{\{x_d \geq 0\}}$ is the Heaviside function. 
{\bc 
Then, we define a \textit{basic piece indicator} as an indicator function $\mone_R: I^D \to \R$ with an $\alpha$-basic piece $R$.
We also define a set of {basic piece indicators} as
\begin{align*}
    \mI_\alpha := \left\{ \mone_R \mid R \mbox{~is~an~$\alpha$-basic~piece}\right\}.
\end{align*}
}
The condition (i) tells that a basic piece belongs to the \textit{boundary fragment class} which is developed by \cite{dudley1974metric} and \cite{mammen1999smooth}.
The condition (ii) means $R$ is a set defined by a horizon function discussed in \cite{petersen2017optimal}. 

In the following, we consider a functional class 
\begin{align*}
    H^\beta(I^D) \otimes \mI_\alpha := \left\{ f \otimes \mone_R \mid f \in H^\beta(I^D), \mone_R \in\mI_\alpha \right\},
\end{align*}
and use this set as a key subset for the minimax lower bound.
Obviously, $H^\beta(I^D) \otimes \mI_\alpha \subset \mF_{\alpha,\beta,M}^{PS}$ holds for any $M \in \N$.

At the second step of this section, we measure a packing number of $ H^\beta(I^D) \otimes \mI_\alpha$. 
\begin{proposition}[Packing Bound] \label{prop:packing_subset}
    For any $D \geq 2$, we have
    \begin{align*}
        \log \mM(\varepsilon, H^\beta(I^D) \otimes \mI_\alpha, \|\cdot\|_{L^2(I^D)}) = \Theta \left(\varepsilon^{-D/\beta} + \varepsilon^{-2(D-1)/\alpha}\right),~(\varepsilon \to 0).
    \end{align*}
\end{proposition}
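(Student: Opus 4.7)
The plan is to establish matching upper and lower bounds via the tensor-product structure of $H_F^\beta(I^D)\otimes \mI_\alpha$, reducing everything to classical H\"older metric entropy estimates.

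For the upper bound, I would construct a product covering. The Kolmogorov--Tikhomirov entropy estimates give $\log \mN(\delta, H_F^\beta(I^D),\|\cdot\|_{L^\infty})\lesssim \delta^{-D/\beta}$ and $\log \mN(\delta, H_F^\alpha(I^{D-1}),\|\cdot\|_{L^\infty})\lesssim \delta^{-(D-1)/\alpha}$. For two basic pieces described by horizon functions $h_1,h_2$ along the same axis $d$, the elementary identity
\begin{align*}
\|\mone_{R_1}-\mone_{R_2}\|_{L^2(I^D)}^2 = \mathrm{vol}(R_1\triangle R_2) \leq \|h_1-h_2\|_{L^1(I^{D-1})} \leq \|h_1-h_2\|_{L^\infty(I^{D-1})}
\end{align*}
shows that an $L^\infty$-net of $H_F^\alpha(I^{D-1})$ at radius $\delta^2$ yields an $L^2(I^D)$-net of $\mI_\alpha$ at radius $\delta$ of log size $\lesssim \delta^{-2(D-1)/\alpha}$ (the choice of axis $d\in[D]$ contributes only a multiplicative $D$). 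Pairing a $\delta_1$-net of $H_F^\beta(I^D)$ with a $\delta_2$-net of $\mI_\alpha$ and using
\begin{align*}
\|f_1\otimes\mone_{R_1}-f_2\otimes\mone_{R_2}\|_{L^2}\leq \|f_1-f_2\|_{L^\infty}+F\|\mone_{R_1}-\mone_{R_2}\|_{L^2}\leq \delta_1+F\delta_2,
\end{align*}
I would choose $\delta_1,\delta_2\asymp \varepsilon$ to obtain a covering, and hence packing, of the announced order.

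For the lower bound, I would take the maximum of two distinguished one-parameter subfamilies. First, fix a basic-piece indicator $\mone_{R_0}$ with $\mathrm{vol}(R_0)>c_0>0$ (for instance, the slab given by a constant horizon function lying strictly below the cube); the map $f\mapsto f\otimes \mone_{R_0}$ is an $L^2$-bi-Lipschitz embedding of $H_F^\beta(R_0)$ into the product class, transferring the classical $\varepsilon^{-D/\beta}$ H\"older packing lower bound. Second, fix a nonzero constant $f_0\equiv c\in H_F^\beta(I^D)$; then
\begin{align*}
\|f_0\otimes\mone_{R_1}-f_0\otimes\mone_{R_2}\|_{L^2(I^D)}^2 = c^2\|h_1-h_2\|_{L^1(I^{D-1})},
\end{align*}
reducing the problem to an $L^1$-packing of $H_F^\alpha(I^{D-1})$ at scale $\varepsilon^2/c^2$. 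I would construct this by the standard sign-flipping bump method: scale and translate one fixed $H^\alpha$ bump onto a regular grid of $k\asymp\varepsilon^{-2(D-1)/\alpha}$ points in $I^{D-1}$ (with bump heights tuned so that the resulting horizon stays in $H_F^\alpha$), then invoke Varshamov--Gilbert on $\pm 1$ sign assignments to obtain $\gtrsim 2^{k/4}$ horizon functions pairwise $(\varepsilon^2/c^2)$-separated in $L^1$. Combining the two subfamily bounds via $A+B\leq 2\max\{A,B\}$ yields the sum lower bound.

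The main obstacle is ensuring that the perturbed indicators genuinely lie in $\mI_\alpha$: condition (i) of the basic-piece definition requires the boundary to be the image of a continuous embedding $g:\{\|x\|\leq 1\}\to\R^D$ whose restriction to $\mS^{D-1}$ is in $H^\alpha(\mS^{D-1})$, so arbitrary small horizon perturbations do not automatically produce basic pieces. I would handle this by performing all bump perturbations strictly in the interior of a fixed reference basic piece (well away from $\partial I^D$), leaving the ambient embedding structure intact so that only the horizon condition (ii) registers the perturbation. With this geometric bookkeeping in place, both estimates reduce to routine H\"older-entropy asymptotics and Varshamov--Gilbert, matching the claimed rates up to universal constants.
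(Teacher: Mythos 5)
Your proposal is correct and follows the same overall architecture as the paper's proof: a product-net upper bound and a two-subfamily lower bound, with everything anchored to classical H\"older metric-entropy estimates. The differences are worth noting, though. For the upper bound, the paper passes from $\mI_\alpha$ to the boundary-fragment class $\tilde{\mR}_\alpha$ of Dudley (1974), reduces $\|\mone_R - \mone_{R'}\|_{L^2}^2$ to the symmetric-difference pseudometric $d_1$, and invokes Dudley's Theorem~3.1 for $\log\mN(\cdot,\tilde{\mR}_\alpha,d_1)$; you instead cover $\mI_\alpha$ directly through the horizon-function parametrization of condition~(ii), which is more self-contained but arrives at the same $\delta^{-2(D-1)/\alpha}$ exponent. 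For the lower bound, the paper embeds $\mI_\alpha$ and $H^\beta(I^D)$ as subsets (via the constant functions $1$ and $\mone_{I^D}$, respectively) and again cites Dudley for their packing numbers; you build the packing of horizon functions from scratch with scaled bumps and Varshamov--Gilbert, which is more explicit and, importantly, lets you verify directly that the constructed sets are $\alpha$-basic pieces under condition~(i). That verification is actually a point the paper glosses over: citing Dudley's packing bound for $\tilde{\mR}_\alpha$ only lower-bounds the packing of $\mI_\alpha$ if the packing elements happen to lie in $\mI_\alpha$, and you are right to flag and handle this by keeping the bump perturbations away from $\partial I^D$ so that the embedding structure is preserved. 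One small caution: both your argument and the paper's proof produce the exponents $\varepsilon^{-D/\beta}$ and $\varepsilon^{-2(D-1)/\alpha}$, whereas the proposition's displayed statement writes them with numerator and denominator swapped; this is a typo in the statement, and the exponents you derive (and the paper's own proof derives) are the ones consistent with the downstream minimax rates $n^{-2\beta/(2\beta+D)}$ and $n^{-\alpha/(\alpha+D-1)}$.
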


With the bound, we apply Theorem \ref{thm:yangbarron} and thus obtain the minimax lower bound of the estimation with $H^\beta(I^D) \otimes \mI_\alpha$.
By using the relation $\mR(\mF_{\alpha,\beta,M}^{PS}) \geq \mR(H^\beta(I^D) \otimes \mI_\alpha)$, we obtain the following theorem.
\begin{theorem}[Minimax Rate for $\mathcal{\mF}_{\alpha,\beta,M}^{PS}$]
\label{thm:minimax}
    For any $\alpha,\beta \geq 1$ and $M \in \N$ we obtain
    \begin{align*}
        \mR(\mF_{\alpha,\beta,M}^{PS}) = \Omega\left( \max\left\{n^{-2\beta/(2\beta + D)}, n^{-\alpha / (\alpha + D-1)}\right\}\right), ~(n \to \infty).
    \end{align*}
\end{theorem}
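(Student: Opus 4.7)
The plan is to combine three ingredients already in place: the monotonicity of the minimax risk with respect to the model, the information-theoretic lower bound of Yang and Barron (Theorem \ref{thm:yangbarron}), and the packing-number estimate of Proposition \ref{prop:packing_subset}.

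Step 1 (reduction to a tractable subclass). Because $H^\beta(I^D) \otimes \mI_\alpha \subset \mF_{\alpha,\beta,M}^{PS}$ for every $M \in \N$ (one just uses a two-piece partition and allows the two smooth ingredients to collapse onto the basic piece indicator), the minimax risk is monotone in the class and
\begin{align*}
    \mR(\mF_{\alpha,\beta,M}^{PS}) \;\geq\; \mR\bigl(H^\beta(I^D) \otimes \mI_\alpha\bigr).
\end{align*}
Hence it suffices to prove the claimed lower bound for the right-hand side, which no longer depends on $M$.

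Step 2 (invoke Yang-Barron with the packing estimate). Plug Proposition \ref{prop:packing_subset} into Theorem \ref{thm:yangbarron} and solve the fixed-point equation $\varepsilon_n^2 \asymp \log\mM(\varepsilon_n, H^\beta(I^D)\otimes\mI_\alpha, \|\cdot\|)/n$. Using the additive form of the packing bound, I would split into the two regimes and take the worse of them:
\begin{align*}
    \text{(i)}\quad \varepsilon_n^2 \;\asymp\; \varepsilon_n^{-D/\beta}/n &\quad\Longrightarrow\quad \varepsilon_n^2 \asymp n^{-2\beta/(2\beta+D)},\\
    \text{(ii)}\quad \varepsilon_n^2 \;\asymp\; \varepsilon_n^{-2(D-1)/\alpha}/n &\quad\Longrightarrow\quad \varepsilon_n^2 \asymp n^{-\alpha/(\alpha+D-1)}.
\end{align*}
Since a lower bound on the packing number from either summand alone already implies the corresponding lower bound on $\mR$, taking the maximum of the two rates yields the assertion.

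Step 3 (resolving the norm mismatch, which I expect to be the main subtle point). Theorem \ref{thm:yangbarron} as quoted is phrased with $L^\infty$-packing, whereas Proposition \ref{prop:packing_subset} is in $L^2(I^D)$. The clean way to bridge this is to use the Gaussian version of the Yang-Barron argument, where the relevant divergence on distributions is KL, which for the Gaussian regression model \eqref{eq:reg} equals $\tfrac{n}{2\sigma^2}\|f_1-f_2\|_{L^2(P_X)}^2$ up to constants; the assumption that $p_X$ is bounded away from $0$ and $\infty$ makes $\|\cdot\|_{L^2(P_X)}$ and $\|\cdot\|_{L^2(I^D)}$ equivalent, so the $L^2(I^D)$-packing estimate of Proposition \ref{prop:packing_subset} applies directly. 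The main obstacle is really bookkeeping here: one must verify that the constructed packing family lives inside the unit ball of $H^\beta_F(I^D) \otimes \mI_\alpha$ (so that $L^\infty$ control, hence bounded KL, is available), and that the two additive parts of the packing bound can be separately realized so that each of the two rates is obtainable without interaction. Once these are handled, Yang-Barron's theorem returns the desired $\Omega(\varepsilon_n^2)$ lower bound and Step 2 finishes the proof.
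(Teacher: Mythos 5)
Your proposal is correct and follows essentially the same route as the paper's proof: reduce to the subclass $H^\beta(I^D)\otimes\mI_\alpha$ by monotonicity, apply the Yang--Barron theorem, and plug in the packing bound from Proposition~\ref{prop:packing_subset} to solve the fixed-point equation. Your Step~3 is actually a bit more careful than the paper, which quietly applies Theorem~\ref{thm:yangbarron} (stated with $L^\infty$-packing) using the $L^2(I^D)$-packing bound without remarking on the change of norm; your KL-to-$L^2$ reconciliation via the density bounds on $p_X$ is the right way to fill that gap.
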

This result indicates that the convergence rate by $\hat{f}^{DL}$ is almost optimal in the minimax sense, since the rates in Theorem \ref{thm:non-bayes} correspond to the lower bound of Theorem \ref{thm:minimax} up to a log factor.
In other words, for estimating $f^* \in \mF_{\alpha,\beta,M}^{PS}$, no other methods could achieve a better rate than the estimators by DNNs.

\subsection{Singularity Control by Deep Neural Network} \label{sec:explain_dnn}
In this section, we present an intuition for the optimality of DNNs for the functions with singularities.
In the following, we will present that the approximation error of a non-smooth function by DNNs is as if the function to be approximated is smooth.

We consider an example with $D=2$ and $M=2$.
Let $f^S \in \mF_{\alpha,\beta,2}^{PS}$ be
\begin{align*}
    f^S(x_1,x_2) = \mone_{R}(x_1,x_2), ~ R=\{(x_1,x_2) \in I^2 \mid x_2 \geq h(x_1)\},
\end{align*}
with a function $h \in H_1^\alpha(I)$.
The function $f^S$ is singular on the set $\{(x_1,x_2) \in I^2 \mid x_2 = h(x_1)\}$.
Moreover, the function $f^S$ is rewritten as
\begin{align*}
    f^S(x_1,x_2) = \mone_{\{\cdot \geq 0\}} \circ \underbrace{((x_1,x_2) \mapsto (x_2 - h(x_1)))}_{=: f^H(x_1,x_2)},
\end{align*}
where $\mone_{\{\cdot \geq 0\}}$ is the step function and $f^H  \in H^\alpha(I^2)$ is a smooth function induced by $h$.
We can rewrite the function $f^S$ with the singularities as a composition of the step function and the smooth function $f^H$.

To approximate and estimate $f^S$, we consider an explicit function $g_E $ by DNNs as $g_E(x_1,x_2) = g_s \circ g_H (x_1,x_2)$, where $g_s$ is a DNN approximator for the step function by a DNN, and $g_H$ is a DNN approximator for $f^H$.
Subsequently, we can measure its approximation as
\begin{align*}
    \|f^E - g_E\|_{L^2(I^2)} \lesssim \|\mone_{\{\cdot \geq 0\}} - g_s\|_{L^2(I)} + \|f^H - g_H \|_{L^2(I^2)}.
\end{align*}
For the right hand side, Lemma \ref{lem:main_step} indicates that the first term $\|\mone_{\{\cdot \geq 0\}} - g_s\|_{L^2(I)}$ is negligible, because it is arbitrary small with a constant number of parameters.
Hence, a dominant error appears in the second term $\|f^H - g_H \|_{L^2(I^2)}$, which is an approximation error of a smooth function $f^H \in H^\beta(I^2)$.

In summary, DNNs can approximate and estimate non-smooth $f^E$, as if $f^E$ is a smooth function in $H^\beta(I^2)$.
This is because DNNs can represent a composition of functions, which can eliminate the singularity of $f^E$.

\section{Advantages of DNNs with Singularity}\label{sec:compare}

In this section, we compare the result of DNNs with several other methods.

\subsection{Sub-optimality of Linear Estimators}

We discuss sub-optimality of some of other standard methods in estimating piecewise smooth functions.
To this end, we consider a class of \textit{linear estimators}.
\begin{definition}[Linear Estimator]
The class contains any estimators written as
\begin{align}
    \hat{f}^{\mathrm{lin}} (x) = \sum_{i \in [n]} \Upsilon_i(x; X_1,...,X_n) Y_i, \label{def:lin}
\end{align}
where $\Upsilon_i$ is an arbitrary measurable function which depends on $X_1,...,X_n$.
\end{definition}
Linear estimators include various popular estimators such as kernel ridge regression, sieve regression, spline regression, and Gaussian process regression.
If a regression model is a linear sum of (not necessarily orthogonal) basis functions and its parameter is a minimizer of the sum of square losses, the model is a linear estimator.
Linear estimators have been studied extensively (e.g. \cite{donoho1998minimax} and \cite{korostelev2012minimax}, particularly Section 6 in \cite{korostelev2012minimax}), and the results can be adapted to our setting, thereby providing the following results.

In the following, we prove the sub-optimality of linear estimators:
\begin{proposition}[Sub-Optimality of Linear Estimators] \label{prop:linear}
    For any $\alpha,\beta \geq 1$ and $D \in \N$, we obtain
    \begin{align*}
        \sup_{f^* \in \mF_{\alpha,\beta,M}^{PS}}\Ep_{f^*}\left[\|\hat{f}^{\mathrm{lin}} - f^*\|_{L^2(P_X)}^2\right] = \Omega \left(    n^{-\alpha / (2\alpha + D-1)} \right). 
    \end{align*}
\end{proposition}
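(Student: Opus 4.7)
I plan to prove the bound by reducing to a subclass of indicator functions of basic pieces, invoking the convex-hull identity that characterizes the minimax rate of linear estimators, and then applying an Assouad-type argument on a hypercube of smoothed horizons inside the convex hull, adapting the boundary-fragment treatment in Section~6 of Korostelev--Tsybakov.

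\textbf{Step 1 (reduction to horizon indicators).} Fix $M=2$. The subclass
\begin{align*}
  \mF_0 := \left\{\mone_R : R = \{x \in I^D : x_D \geq h(x_{-D})\},\ h \in H_F^\alpha(I^{D-1}),\ 1/4 \leq h \leq 3/4\right\}
\end{align*}
lies inside $\mF_{\alpha,\beta,M}^{PS}$ via the two-piece partition $\{R, I^D \setminus R\}$ with $f_1 \equiv 1$, $f_2 \equiv 0$. By monotonicity of the supremum, it suffices to prove the lower bound over $\mF_0$.

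\textbf{Step 2 (convex-hull identity for linear estimators).} Because a linear estimator of the form \eqref{def:lin} is affine in the data $Y$, the risk $f \mapsto \Ep_f[\|\hat{f}^{\mathrm{lin}}-f\|_{L^2(P_X)}^2]$ is a convex quadratic functional of $f$. Hence
\begin{align*}
  \sup_{f\in\mF_0}\Ep_f[\|\hat{f}^{\mathrm{lin}}-f\|^2] \;=\; \sup_{g\in\overline{\mathrm{conv}}(\mF_0)}\Ep_g[\|\hat{f}^{\mathrm{lin}}-g\|^2] \;\geq\; \inf_{\hat f}\sup_{g\in\overline{\mathrm{conv}}(\mF_0)}\Ep_g[\|\hat f-g\|^2],
\end{align*}
where the last infimum ranges over all (possibly nonlinear) estimators. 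The problem thereby reduces to a minimax lower bound (without the linearity restriction) over $\overline{\mathrm{conv}}(\mF_0)$.

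\textbf{Step 3 (smoothed horizons and hypercube).} Mixtures of the form $g_\pi(x) = \Prb_{h\sim\pi}[h(x_{-D}) \leq x_D]$ lie in $\overline{\mathrm{conv}}(\mF_0)$. Choosing $\pi$ to be the law of $h = \phi + Z$, with $\phi$ deterministic and $Z$ an independent random shift having smooth CDF $F_Z$ of scale $\eta$, yields the \emph{smoothed horizon} $g_\phi(x) := F_Z(x_D - \phi(x_{-D}))$. Tile $I^{D-1}$ into $K \asymp \delta^{-(D-1)}$ cells of side $\delta$, fix a $C^\infty_c$ bump $\psi_k$ supported in the $k$-th cell, and for $\tau \in \{0,1\}^K$ set
\begin{align*}
  \phi_\tau := \tfrac{1}{2} + \delta^\alpha \sum_{k} \tau_k\psi_k \in H_F^\alpha(I^{D-1}).
\end{align*}
A direct calculation using $\|F_Z'\|_\infty \lesssim \eta^{-1}$ yields, for $d_H(\tau,\tau')=1$, the per-coordinate $L^2(P_X)$-separation $\|g_{\phi_\tau}-g_{\phi_{\tau'}}\|^2 \asymp \delta^{2\alpha+D-1}/\eta$ and a KL divergence for the induced Gaussian regression laws of the same order times $n/\sigma^2$. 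Approximating $F_Z$ by finite convex combinations of Dirac masses certifies that each $g_{\phi_\tau}$ lies inside $\overline{\mathrm{conv}}(\mF_0)$. Assouad's lemma then yields a squared-error lower bound of order $K\delta^{2\alpha+D-1}/\eta = \delta^{2\alpha}/\eta$ under the information constraint $n\delta^{2\alpha+D-1}/\eta \lesssim 1$; optimizing $(\delta,\eta)$ in the regime where both bias contributions (parallel and transverse to the reference boundary) are binding produces the advertised exponent $\alpha/(2\alpha+D-1)$.

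\textbf{Main obstacle.} The principal difficulty is that a naïve Assouad argument on sharp indicators inside $\mF_0$ recovers only the all-estimator rate $n^{-\alpha/(\alpha+D-1)}$ of Theorem~\ref{thm:minimax}, which is \emph{faster} than the bound we want and therefore fails to distinguish linear methods from optimal ones. The convex-hull enlargement is essential: it provides access to \emph{smoothed} horizons whose likelihood information is diluted across a vertical strip of thickness $\eta$, and tuning $\eta$ alongside the bump amplitude so as to match the linear-specific bias-variance balance of Korostelev--Tsybakov yields the slower rate $n^{-\alpha/(2\alpha+D-1)}$. Rigorously verifying the detailed scaling, and certifying that the smoothed horizons genuinely belong to $\overline{\mathrm{conv}}(\mF_0)$ (via finite-support approximation of the smoothing measure), is the technical heart of the argument.
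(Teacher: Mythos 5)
Your convex-hull device in Step~2 is a correct and attractive idea --- the risk $f\mapsto \Ep_f\|\hat f^{\mathrm{lin}}-f\|^2$ decomposes, conditionally on $X_1,\dots,X_n$, into a $f$-independent variance plus a bias term that is convex quadratic in $f$, so the supremum over $\mF_0$ does pass to $\overline{\mathrm{conv}}(\mF_0)$, and one may then drop linearity. The difficulty is that Step~3, as you have set it up, does not deliver the exponent $\alpha/(2\alpha+D-1)$. With $K\asymp \delta^{-(D-1)}$ cells and per-coordinate squared separation $\Delta^2\asymp \delta^{2\alpha+D-1}/\eta$, Assouad gives a lower bound of order $K\Delta^2=\delta^{2\alpha}/\eta$ subject to the KL constraint $n\Delta^2\lesssim 1$, i.e.~$\eta\gtrsim n\delta^{2\alpha+D-1}$, together with the geometric constraint $\eta\lesssim 1$. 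Hence $K\Delta^2\lesssim n^{-1}\delta^{-(D-1)}$, which is maximized at $\delta\asymp n^{-1/(2\alpha+D-1)}$ and yields $n^{-2\alpha/(2\alpha+D-1)}$, not $n^{-\alpha/(2\alpha+D-1)}$. (The alternative regime $\eta\asymp\delta^\alpha$ gives $\delta^\alpha\lesssim n^{-\alpha/(\alpha+D-1)}$.) Both outcomes are \emph{smaller} than the target, and indeed smaller than the all-estimator minimax rate $n^{-\alpha/(\alpha+D-1)}$ of Theorem~\ref{thm:minimax}, so the bound is vacuous for establishing linear sub-optimality. The ``optimization in the regime where both bias contributions are binding'' claimed at the end does not rescue the arithmetic: any admissible choice of $(\delta,\eta)$ with the stated $\Delta^2$ and $K$ stays at or below the all-estimator rate. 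To reach the slower exponent one needs a hypercube that also indexes the transverse amplitude (so that the effective number of coordinates is $\sim N^{D-1+\alpha}$ rather than $N^{D-1}$), or a mechanism exploiting the fact that the linear weights $\Upsilon_i$ are fixed once the design is drawn.

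This is in fact what the paper does, by a quite different route. Rather than reducing to an all-estimator minimax over a convex hull, the paper works directly with the bias--variance identity $\Ep_{f^*}\|f^*-\hat f^{\mathrm{lin}}\|^2 = \|f^*-\sum_i f^*(X_i)\Upsilon_i\|^2+\sigma^2\sum_i\|\Upsilon_i\|^2$, then constructs test functions indexed \emph{both} by spatial locations $\textbf{j}\in\mJ^{\otimes(M-1)}$ and by transverse levels $r\in D_R$ (with $|D_R|\asymp N^\alpha$). A counting/pigeonhole argument over $(r,j)$ isolates slabs and cells where the total linear weight is small (using the variance budget $\sum_i\|\Upsilon_i\|^2=O(1)$), so the bias there is forced to be of order $\mathrm{vol}(\hat R_{j,r,m})$; summing over $\asymp N^{D-1}$ such cells and setting $N=n^{1/(2\alpha+D-1)}$ produces exactly $n^{-\alpha/(2\alpha+D-1)}$. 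This argument is intrinsically linear-specific, whereas your reduction forgets the linearity after Step~2 and needs the convex hull alone to encode the inefficiency --- which the smoothed-horizon hypercube is too thin to capture.
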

This rate is slower than the minimax optimal rate $O \left(     n^{-2\beta / (2\beta + D)} \vee n^{-\alpha / (\alpha + D-1)} \right)$ in Theorem \ref{thm:minimax} with the parameter configuration $ \alpha < 2\beta (D-1) / (D-2\beta) $.
This result implies that linear estimators perform worse than DNNs when it is relatively difficult to estimate a hypersurface with singularity.

\begin{remark}[Linear Estimators with Cross-Validation]
    We discuss a sub-optimality of a certain class of nonlinear estimators associated with cross-validation (CV).
    If we select the hyperparameters of a linear estimator by CV, the estimator is no longer a linear estimator. 
    Typically, a bandwidth parameter of kernel methods and a number of basis functions for sieve estimators are examples of hyperparameters.
    Despite this fact, the generalization error of a CV-supported estimator is bounded below by that of the estimators with optimal hyperparameters, which are sometimes referred to as oracle estimators.
    Because oracle estimators are linear estimators in most cases, we can still claim that the CV-supported estimators are sub-optimal according to Proposition \ref{prop:linear}.
\end{remark}

{\bc 
\begin{remark}[Relation to Other Non-Optimality of Linear Estimators] \label{remark:relation_non-optimal}
    The non-optimality of the linear estimator has been shown in several studies: \cite{donoho1998minimax} showed the non-optimality of linear estimators in estimation for an element of a Besov space with a specific order.
    Proof by \cite{donoho1998minimax} is based on the fact that a linear estimator cannot adapt to the optimal rate on a non-convex set of functions, and shows the non-convexity of a unit ball in the Besov space makes linear estimators sub-optimal.
    
    In contrast, our result on the non-optimality is based on a different approach, which utilizes the shape of a hypersurface on which the singularity of $f^*$ is placed.
    A linear estimator uses a level-set of its basis function $\Upsilon_i$ to approximate the hypersurface. 
    However, due to the limited expressive power and flexibility of the utilization of the level-set, linear estimators cannot adapt to the higher-order smoothness of the hypersurface and lose their optimality. 
    This proof is an application of the image analysis by \cite{korostelev2012minimax} to our setting with the basic pieces.
\end{remark}
}

\subsection{Sub-Optimality of Wavelet Estimator}

We investigate the generalization error of a wavelet series estimator for $f^* \in \mF_{\alpha,\beta,M}^{PS}$.
The estimator by wavelets is one of the most common estimators for the nonparametric regression problem, and can attain the optimal rate in many settings (for a summary, see \cite{gine2015mathematical}).
Moreover, wavelets can handle discontinuous functions.
{\bc 
Since this estimator is a linear estimator, the analysis with wavelets is strictly included in Proposition \ref{prop:linear}.
However, we provide the results here, because it is useful for readers to understand how basis functions handle singularities through wavelets and other basis functions.}
In this section, we will prove the sub-optimality of the wavelet for the class of piecewise smooth functions.
Since these methods are well known for dealing with singularities, their investigation can provide a better understanding of why linear estimators lose their optimality.

Intuitively, wavelets resolve singularity by fitting the support of their basis functions to the singularity's shape.
That is, their approximation error decreases, when the supports fit the shape and fewer basis functions overlap the singularity hypersurface.
As illustrated in the left panel of Figure \ref{fig:support_let}, the wavelet divides $I^D$ into cubes, with each basis is concentrated on each cube.

\begin{figure}[htbp]
    \centering
    \includegraphics[width=0.6\hsize]{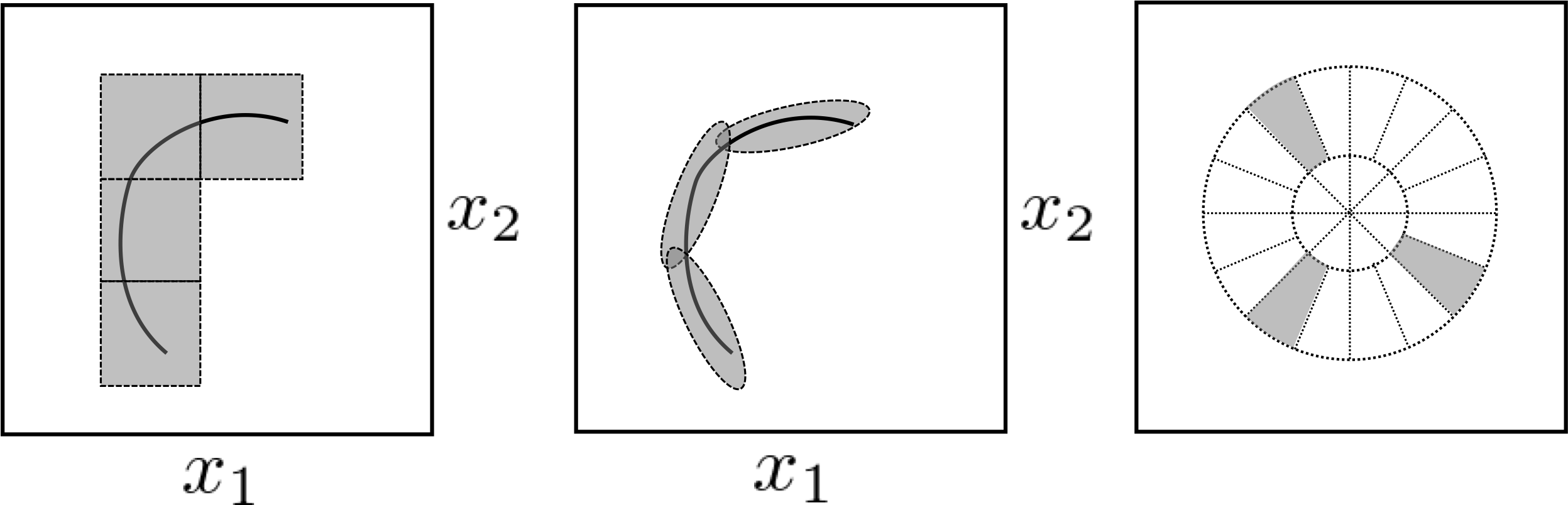}
    \caption{[Left] Hypersurface with singularity (black curve) in $I^2$ and cubes in which wavelet basis functions are concentrated.
[Middle] Hypersurface with singularity (black curve) in $I^2$ and ellipses in which curvelet basis functions are concentrated.
[Right] Domain on the frequency of the curvelet basis. Each basis function is concentrated on a colored shape.}
    \label{fig:support_let}
\end{figure}

To this end, we consider an orthogonal wavelet basis for $L^2(I)$.
We define a set of indexes $\mH := \{(j,k): j \in \{-1,0\} \cup \N, k \in K_j \}$ be the index set with a set $K_j= \{0\} \cup [2^j-1]$ and consider the wavelet basis $\{\phi_\kappa\}_{\kappa \in \mH}$ of $L^2(\R)$ with setting $\phi_{-1,k}$ be a shifted scaling function.
Then, we restrict its domain to $I$ and obtain the wavelet basis for $L^2(I)$.

For a wavelet analysis for multivariate functions, we consider a tensor product of the basis $\{\phi_\kappa\}_{\kappa \in \mH}$.
Namely, let us define $\Phi_{\kappa_1,...,\kappa_D}(x) := \prod_{d \in [D]} \phi_{\kappa_d}(x_d)$ with $x \in \R^D$.
Then, consider a orthonormal basis $\{\Phi_{\kappa_1,...,\kappa_D}\}_{(\kappa_1,...,\kappa_D) \in \mH^{\times D}}$ for $L^2(\R^D)$, then restrict it to $L^2(I^D)$.
$\mH^{\times D}$ is a $D$-times direct product of $\mH$. 
Then, a decomposition of a restricted function $f \in L^2(I^D)$ is formulated as
\begin{align*}
    f = \sum_{(\kappa_1,...,\kappa_D) \in \mH^{\times D}} w_{\kappa_1,...,\kappa_D}(f) \Phi_{\kappa_1,...,\kappa_D},
\end{align*}
where $w_{\kappa_1,...,\kappa_D}(f) = \langle f, \Phi_{\kappa_1,...,\kappa_D} \rangle$.

We define an estimator of $f^*$ by the wavelet decomposition.
For simplicity, let  $P_X$ be the uniform distribution on $I^D$.
Using a truncation parameter $\tau \in \N$, we set $\mH_\tau := \{(j,k): j \in \{-1,0\} \cup [\tau], k \in K_j\} \subset \mH$ be a subset of indexes.
Since the decomposition is a linear sum of orthogonal basis, an wavelet estimator $\hat{f}^{\mathrm{wav}}$ which minimizes an empirical squared loss has the following form
\begin{align*}
    \hat{f}^{\mathrm{wav}} = \sum_{(\kappa_1,...,\kappa_D) \in \mH_\tau^{\times D}} \hat{w}_{\kappa_1,...,\kappa_D} \Phi_{\kappa_1,...,\kappa_D}.
\end{align*}
Moreover, $\hat{w}_{\kappa_1,...,\kappa_D}$ is an empirical analogue version of the inner product as
    $\hat{w}_{\kappa_1,...,\kappa_D} = \frac{1}{n}\sum_{i \in [n]} Y_i \Phi_{\kappa_1,...,\kappa_D}(X_i)$.
{\bc 
$\tau$ is selected to follow an order in $n$ which balances a bias and variance of the error and minimizes its generalization error.
}
With the wavelet estimator, we obtain the following result:
\begin{proposition}[Sub-Optimality of Wavelets] \label{prop:wavelet}
    For any $\alpha,\beta \geq 1$ and $D \geq 2$, we obtain
    \begin{align*}
        \sup_{f^* \in \mF_{\alpha,\beta,M}^{PS}}\Ep_{f^*}\left[\|\hat{f}^{\mathrm{wav}} - f^*\|_{L^2(P_X)}^2\right] =\Omega \left(  n^{-1/2} \right). 
    \end{align*}
\end{proposition}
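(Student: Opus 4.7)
The plan is to exhibit a specific hard target $f^* \in \mF_{\alpha,\beta,M}^{PS}$ and argue, via an exact bias--variance decomposition, that the MSE of $\hat{f}^{\mathrm{wav}}$ on $f^*$ stays above $n^{-1/2}$ for every choice of truncation $\tau$. I would take $f^*(x) := \mone_{\{x_1+x_2 \leq 1\}}$, which is an element of $\mF_{\alpha,\beta,M}^{PS}$ for any $\alpha,\beta \geq 1$ and $M = 2$: its singular set is the graph of the affine function $h(x_{-1}) = 1 - x_2 \in H^\alpha_F(I^{D-1})$, and the two pieces carry the constants $0$ and $1$. Since $\hat{f}^{\mathrm{wav}}$ is a linear projection estimator in the orthonormal basis $\{\Phi_\kappa\}$, a direct computation of $\Ep[\hat{w}_\kappa] = w_\kappa(f^*) := \langle f^*, \Phi_\kappa\rangle_{L^2(I^D)}$ plus Parseval yields the exact decomposition
\begin{align*}
    \Ep_{f^*}\bigl[\|\hat{f}^{\mathrm{wav}} - f^*\|_{L^2(P_X)}^2\bigr] = \sum_{\kappa \notin \mH_\tau^{\times D}} w_\kappa(f^*)^2 \;+\; \sum_{\kappa \in \mH_\tau^{\times D}} \mathrm{Var}(\hat{w}_\kappa).
\end{align*}

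For the variance term, the Gaussian noise $\xi_i$ is independent of $X_i$ and $\int \Phi_\kappa^2 = 1$ under the uniform $P_X$, hence $\mathrm{Var}(\hat{w}_\kappa) = n^{-1}\mathrm{Var}(Y\Phi_\kappa(X)) \geq \sigma^2/n$, and $|\mH_\tau^{\times D}| \geq 2^{\tau D}$ yields a variance of at least $\sigma^2 2^{\tau D}/n$. For the bias term, I would exploit the fact that the indicator of a smoothly bounded set has only Besov regularity $1/2$ in $L^2$: at each isotropic scale $j > \tau$, the tensor-product wavelets whose supports intersect the codimension-one singularity number $\asymp 2^{j(D-1)}$ (a surface-area count), each such coefficient satisfies $|w_\kappa(f^*)| \lesssim \|\Phi_\kappa\|_{L^1(I^D)} \lesssim 2^{-jD/2}$, and a direct signed-area computation (transparent in the Haar tensor basis, extending to a general MRA basis by a Bernstein-type inequality) shows that a positive fraction of these boundary coefficients actually attains that order without accidental cancellation. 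Summing the resulting per-scale energy $\asymp 2^{j(D-1)} \cdot 2^{-jD} = 2^{-j}$ over all isotropic scales $j > \tau$, which all lie outside $\mH_\tau^{\times D}$, yields
\begin{align*}
    \sum_{\kappa \notin \mH_\tau^{\times D}} w_\kappa(f^*)^2 \geq c \cdot 2^{-\tau}.
\end{align*}

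Combining the two bounds gives
\begin{align*}
    \Ep_{f^*}\bigl[\|\hat{f}^{\mathrm{wav}} - f^*\|_{L^2(P_X)}^2\bigr] \geq c \bigl( 2^{-\tau} + 2^{\tau D}/n \bigr),
\end{align*}
whose infimum over $\tau \in \N$ is of order $n^{-1/(D+1)}$ (attained near $2^\tau \asymp n^{1/(D+1)}$). Since $D \geq 2$ forces $1/(D+1) \leq 1/3 \leq 1/2$, we have $n^{-1/(D+1)} \geq n^{-1/2}$ for all $n \geq 1$, and the bound on the single $f^*$ passes through to the supremum, establishing the claimed $\Omega(n^{-1/2})$. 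An essentially one-line alternative is to observe that $\hat{f}^{\mathrm{wav}}(x) = \sum_{i} Y_i\,\Upsilon_i(x; X_1,\ldots,X_n)$ with $\Upsilon_i(x; X_1,\ldots,X_n) = n^{-1}\sum_{\kappa \in \mH_\tau^{\times D}} \Phi_\kappa(X_i)\Phi_\kappa(x)$, so $\hat{f}^{\mathrm{wav}}$ is a linear estimator to which Proposition \ref{prop:linear} applies, and then use $\alpha/(2\alpha+D-1) \leq 1/2$ for $D \geq 1$. The main technical obstacle in the direct route is the quantitative bias bound: ruling out wholesale cancellation across the "diagonal" tensor wavelets at a boundary cube, so that the squared coefficient mass really is of the claimed order $2^{-jD}$ on a constant fraction of crossed cubes. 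This is classical for indicators of smoothly bounded sets (see, e.g., \cite{donoho1998minimax}), but the adaptation to the precise tensor-product index set $\mH_\tau^{\times D}$ of the paper is where the most bookkeeping is needed.
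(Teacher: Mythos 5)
Your framework (bias--variance via Parseval in the truncated orthonormal system, plus a variance term $\gtrsim |\mH_\tau^{\times D}|/n$) is the same as the paper's, but the choice of hard target and the bias argument differ, and the main route has two genuine gaps. The paper takes $f^* = \mone_{\dot{R}}$ with $\dot{R}$ an axis-aligned hyper-rectangle (corner at $2/3$ in each coordinate), so the coefficient $w_{\kappa_1,\ldots,\kappa_D}(f^*)$ factorizes into $D$ one-dimensional jump integrals, each of which is \emph{explicitly} $c\,2^{-j_d/2}\mone_{\{k_d=k_d^*\}}$ with $c>0$ bounded away from zero. This kills the two difficulties you flag at once: there is no cancellation to rule out, and the bias is bounded below by the elementary product $\prod_d \sum_{j_d>\tau}2^{-j_d}=2^{-\tau D}$, which combined with the $2^{\tau D}/n$ variance already gives $\Omega(n^{-1/2})$.

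By contrast, your surface-area count is incorrect for the tensor-product system $\mH^{\times D}$ applied to $\mone_{\{x_1+x_2\leq 1\}}$ when $D\geq 3$: at an isotropic scale $j_1=\cdots=j_D=j$ the factors $\phi_{j_d,k_d}$ for $d\geq 3$ are mean-zero, so $\int f^*\Phi_\kappa$ vanishes identically even though the support of $\Phi_\kappa$ meets the hyperplane. The $\asymp 2^{j(D-1)}$ boundary wavelets you count, each purportedly of size $\asymp 2^{-jD/2}$, actually contribute zero; the nonzero excluded coefficients instead live at anisotropic indices $(j_1,j_2,-1,\ldots,-1)$, of which there are only $\asymp 2^j$ per $j_1=j_2=j$, each of size $\asymp 2^{-j}$. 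The per-scale energy $\asymp 2^{-j}$ comes out the same by coincidence, but the stated reasoning is wrong and would not transfer to a genuinely $D$-dimensional singularity. The no-cancellation point you raise is an additional unresolved step. Your one-line alternative --- observing that $\hat f^{\mathrm{wav}}$ fits Definition of linear estimators with $\Upsilon_i(x)=n^{-1}\sum_{\kappa\in\mH_\tau^{\times D}}\Phi_\kappa(X_i)\Phi_\kappa(x)$, then invoking Proposition~\ref{prop:linear} together with $\alpha/(2\alpha+D-1)\leq 1/2$ --- is correct and does prove the claim; the paper presumably gives the direct wavelet argument for self-containedness, and its factorizing test function is exactly the device that makes that direct argument painless.
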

When comparing the derived rate with Theorem \ref{thm:minimax}, we can find that the wavelet estimator cannot attain the minimax rate when both $\beta > D/2$ and $\alpha > (D-1)$ hold.
This result describes wavelets cannot adapt a higher smoothness of $f^*$ and the boundary of pieces.

{\bc 
Note that the truncation parameter $\tau$ in this estimator depends on unknown data distribution, and it should be chosen in a data-dependent way in practice.
Even in this case, the same lower bound in Proposition \ref{prop:wavelet} holds by the data-dependent choice, because the current data-independent choice is the optimal choice that minimizes the error. 

\cite{schmidt2017nonparametric} shows that the wavelet estimator is sub-optimal in estimation of composite functions.
Although our wavelet estimator is identical to that of \cite{schmidt2017nonparametric}, the proof of the non-optimality is different. 
This is because our proof utilizes the difficulty of approximating the singularity of the true function, but the composite function by \cite{schmidt2017nonparametric} has no singularity.
This argument is similar to Remark \ref{remark:relation_non-optimal}.
}

\subsection{Sub-optimality of Harmonic Based Estimator}
We investigate another estimator from the harmonic analysis and its optimality.
The harmonic analysis provides several methods for non-smooth structures, such as curvelets \citep{candes2002recovering,candes2004new} and shearlets \citep{kutyniok2011compactly}.
The methods are designed to approximate piecewise smooth functions on pieces with $C^2$ boundaries.
As illustrated in Figure \ref{fig:support_let}, each base of the curvelet is concentrated on an ellipse with different scales, locations, and angles in $I^2$.
The ellipses covering the hypersurface resolve the singularity.
Each basis has a fan-shaped support with a different radius and angle in the frequency domain (see \cite{candes2004new} for details).

We focus on curvelets as one of the most common methods.
For brevity, we study the case with $D=2$, which is of primary concern for curvelets.
Curvelets can be extended to higher dimensions $D\geq 3$, and we can study the case in a similar manner.

As preparation, we define curvelets.
In this analysis, we consider the domain of $f^*$ is $[-1,1]^D$ for technical simplification.
Furthermore, we set $P_X$ as the uniform distribution on $[-1,1]^D$, as similar to the wavelet case.
Let $\mu := (j,\ell,k)$ be a tuple of scale index $j = 0,1,2,...$,  rotation index $\ell = 0,1,2,...,2^j$, and location parameter $k \in \Z^2$.
For each $\mu$, we define the parabolic scaling matrix
    $D_j = 
    \begin{pmatrix}
        2^{2j}&0 \\ 0 & 2^j
    \end{pmatrix}$,
the rotation angle $\theta_{j,\ell} = 2\pi 2^{-j} \ell$, and a location $k = k_\delta = (k_1\delta_1,k_2\delta_2)$ with hyper-parameters $\delta_1$ and $\delta_2$, where $\delta_1,\delta_2>0$.
Thereafter, we consider a curvelet $\gamma_\mu:\R^2 \to \R$ as
    $\gamma_\mu(x) = 2^{3j/2} \gamma (D_j R_{\theta_{j,\ell}}x - k_\delta)$.
In this case, $\gamma$ is defined by an inverse Fourier transform of a localized function in the Fourier domain.
Figure \ref{fig:support_let} provides an illustration of its support and its rigorous definition is deferred to the appendix. 
According to \citep{candes2004new}, it is shown that $\{\gamma_\mu\}_{\mu \in \mL}$ is 
a tight frame in $L^2(\R^2)$, where $\mL$ is a set of $\mu$.
Hence, we obtain the following formulation $f(x) = \sum_{\mu \in \mL} w_\mu(f) \gamma_\mu(x)$, where $w_\mu(f) = \langle f, \gamma_\mu \rangle$.
For estimation, we consider the truncation parameter $\tau \in \N$ and define an index subset as $\mL_\tau := \{\mu \mid j  \in [\tau]\} \subset \mL$.
Thereafter, similar to the wavelet case, a curvelet estimator which minimizes an empirical squared loss is written as
\begin{align*}
    \hat{f}^{\mathrm{curve}}(x) = \sum_{\mu \in \mL_\tau} \hat{w}_\mu \gamma_\mu(x), \mbox{~where~}\hat{w}_\mu = \frac{1}{n} \sum_{i \in [n]} Y_i \gamma_\mu(X_i).
\end{align*}
In this case, $\tau$ is selected to minimize the generalization error of the curvelet estimator.
We then obtain the following statement.
\begin{proposition}[Sub-Optimality of Curvelets] \label{prop:curvelet}
    For $D=2$ and any $\alpha,\beta \geq 2$, we obtain 
    \begin{align*}
        \sup_{f^* \in \mF_{\alpha,\beta,M}}\Ep_{f^*}\left[\|\hat{f}^{\mathrm{curve}} - f^*\|_{L^2(P_X)}^2\right] = \Omega\left( n^{-1/3}\right). 
    \end{align*}
\end{proposition}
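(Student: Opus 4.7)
The plan is to lower--bound the risk via a bias--variance decomposition for the linear estimator $\hat{f}^{\mathrm{curve}}$, which always lies in the finite--dimensional subspace $V_\tau := \mathrm{span}\{\gamma_\mu : \mu\in\mL_\tau\}$. Because $V_\tau$ is closed and $\hat{f}^{\mathrm{curve}}\in V_\tau$, we have $\Ep\hat{f}^{\mathrm{curve}}\in V_\tau$, so
\[
\Ep_{f^*}\bigl\|\hat{f}^{\mathrm{curve}} - f^*\bigr\|_{L^2(P_X)}^2 \;\geq\; \mathrm{dist}_{L^2(P_X)}(f^*, V_\tau)^2 \;+\; \Ep\bigl\|\hat{f}^{\mathrm{curve}} - \Ep\hat{f}^{\mathrm{curve}}\bigr\|_{L^2(P_X)}^2,
\]
for every truncation $\tau$. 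I will lower--bound each term using the parabolic scaling of curvelets and observe that the sum is $\gtrsim n^{-1/3}$ for every $\tau$.

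For the bias term I would take $f^\star = \mone_R\in\mF_{\alpha,\beta,M}^{PS}$ with $R\subset[-1,1]^2$ bounded by a smooth curve of non-vanishing curvature (e.g.\ $R=\{x_2\geq h(x_1)\}$ for a strictly convex $h\in H_F^\infty(I)$). Because of the parabolic scaling $D_j=\mathrm{diag}(2^{2j},2^j)$, the Fourier transform $\hat{\gamma}_\mu$ is essentially concentrated on the annulus $|\xi|\asymp 2^{2j}$, so $V_\tau$ is essentially contained in the band-limited space $\{f:\mathrm{supp}(\hat{f})\subset\{|\xi|\lesssim 2^{2\tau}\}\}$. Standard stationary--phase estimates for indicators of smoothly curved planar regions give $|\hat{f^\star}(\xi)|^2\asymp|\xi|^{-3}$ in generic angular directions, so integrating over $|\xi|>2^{2\tau}$ yields $\mathrm{dist}_{L^2}(f^\star,V_\tau)^2\gtrsim 2^{-2\tau}$.

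For the variance term I first count $N_\tau:=|\mL_\tau|$: at scale $j$ there are $\asymp 2^j$ rotations and $\asymp 2^{3j}$ translations with support meeting $[-1,1]^2$, giving $N_\tau\asymp 2^{4\tau}$. The tight-frame property from \cite{candes2004new} makes the truncated Gram matrix $G:=(\langle\gamma_\mu,\gamma_\nu\rangle)_{\mu,\nu\in\mL_\tau}$ close to a multiple of the identity, so $\mathrm{tr}(G^2)\asymp N_\tau$. A direct covariance computation gives $\mathrm{Cov}(\hat{w}_\mu,\hat{w}_\nu)\asymp(\sigma^2/n)\,G_{\mu\nu}$ (the Gaussian--noise contribution dominates the deterministic part), and hence
\[
\Ep\bigl\|\hat{f}^{\mathrm{curve}} - \Ep\hat{f}^{\mathrm{curve}}\bigr\|^2 \;=\; \mathrm{tr}\bigl(\mathrm{Cov}(\hat{w})\,G\bigr) \;\asymp\; \frac{\sigma^2}{n}\,\mathrm{tr}(G^2) \;\asymp\; \frac{N_\tau}{n} \;\asymp\; \frac{2^{4\tau}}{n}.
\]

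Combining, for every $\tau$ the risk is $\gtrsim 2^{-2\tau}+2^{4\tau}/n$; this sum is minimised near $\tau^\star=(\log_2 n)/6$ with value $\asymp n^{-1/3}$, yielding the stated bound uniformly in the choice of $\tau$. The main obstacle lies in the bias step: since curvelets are Schwartz (not strictly band-limited), $V_\tau$ only \emph{approximately} sits in $\{|\xi|\lesssim 2^{2\tau}\}$, so the inequality $\mathrm{dist}(f^\star,V_\tau)^2\gtrsim 2^{-2\tau}$ must be justified by controlling the fast-decaying tail of $\hat{\gamma}_\mu$ outside its nominal annulus (for example by slightly enlarging the ambient band-limited subspace and tracking the negligible correction). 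The variance estimate and the final optimisation over $\tau$ are routine adaptations of the arguments used for Proposition~\ref{prop:linear} and Proposition~\ref{prop:wavelet}.
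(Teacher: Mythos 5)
Your proposal is correct in outline but takes a genuinely different route from the paper. The paper chooses the explicit non-smooth test function $f^*(x_1,x_2)=\mone_{\{x_1\geq 0\}}\mone_{\{x_2\geq 0\}}$, whose Fourier transform is $-1/(\xi_1\xi_2)$, then mimics the wavelet argument: it lower-bounds $\sum_{\mu\in\mL_{j,\ell}}|w_\mu(f^*)|^2 = \int|\hat{f}^*(\xi)|^2|\chi_{j,\ell}(\xi)|^2 d\xi$ on the annular support of $\chi_{j,\ell}$ (getting $\gtrsim 2^{-4j}$ per orientation, hence $\gtrsim 2^{-3j}$ per scale and tail $\gtrsim 2^{-3\tau}$), and takes the variance to be $\gtrsim \sigma^2|\mL_\tau|/n$ with the count $|\mL_\tau|\asymp 2^\tau+2^{3\tau/2}$. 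You instead take $f^*$ to be the indicator of a smoothly curved region, invoke stationary-phase decay $|\hat{f}^*(\xi)|^2 \asymp |\xi|^{-3}$ to obtain tail $\gtrsim 2^{-2\tau}$, count $N_\tau\asymp 2^{4\tau}$, and lower-bound the variance via a Gram-matrix/tight-frame argument $\mathrm{tr}(\mathrm{Cov}(\hat{w})G)\asymp (\sigma^2/n)\mathrm{tr}(G^2)\asymp N_\tau/n$. Both routes reduce to optimizing a bias-plus-variance expression over $\tau$ and land on $n^{-1/3}$.

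Two remarks. First, your main obstacle (that $V_\tau$ is only approximately band-limited) is in fact not an obstacle in the paper's construction: the radial window $\omega$ has compact support $[1,2]$, so every $\gamma_\mu$ with $j\leq\tau$ has $\mathrm{supp}(\hat{\gamma}_\mu)\subset\{|\xi|\leq 2^{2\tau+1}\}$ and the inclusion of $V_\tau$ in the band-limited space is exact; there is nothing to control in the tail. Second, your handling of the variance via the Gram matrix is actually more careful than the paper's: the paper borrows the Parseval-type decomposition \eqref{ineq:wavelet0} from the wavelet proof, but that identity is specific to orthonormal systems, whereas curvelets form a redundant tight frame; your tight-frame argument addresses exactly this redundancy. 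So your proposal is not just a valid alternative; it patches a point the paper glosses over, at the cost of a slightly heavier test function (generic curvature and stationary phase, instead of an explicitly computable corner).
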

The result implies the sub-optimality of the curvelet; that is, the rate is slower than the minimax rate when $\beta > D/4 = 1/2$ and $\alpha > (D-1)/2 = 1/2$.
Similar to the wavelet estimator, the curvelet estimator does not adapt to the higher smoothness in the nonparametric regression setting.

\subsection{Advantage of DNNs against the Other Methods}

We summarize the sub-optimality of the other methods and compare them with DNNs.
To this end, we provide a formal statement for comparing the estimator $\hat{f}^{DL}$ by DNNs and the other estimators, namely, $\hat{f}^{\mathrm{lin}}, \hat{f}^{\mathrm{wac}}$, and $\hat{f}^{\mathrm{curve}}$.
The following corollary is the second main result of this study:
\begin{corollary}[Advantage of DNNs] \label{cor:compare}
    Fix $M \geq 2$.
   If $ \alpha < 2\beta (D-1) / D $ holds with $\alpha,\beta \geq 1$ and $D\geq 2$, the estimator $\widecheck{f} = \hat{f}^{\mathrm{lin}}$ satisfies
    \begin{align}
        \sup_{f^* \in \mF_{\alpha,\beta,M}^{PS}} \Ep_{f^*} \left[ \|\widecheck{f} - f^*\|_{L^2(P_X)}^2\right] \gnsim \sup_{f^* \in \mF_{\alpha,\beta,M}^{PS}} \Ep_{f^*} \left[ \|\hat{f}^{ DL} - f^*\|_{L^2(P_X)}^2\right]. \label{ineq:cor2}
    \end{align}

    If $\beta > D/2$ and $\alpha > D-1$ hold, \eqref{ineq:cor2} holds with $\widecheck{f} = \hat{f}^{\mathrm{wav}}$.

    If $\beta > D/4$ and $\alpha > (D-1)/2$ hold with $D=2$, \eqref{ineq:cor2} holds with $\widecheck{f} = \hat{f}^{\mathrm{curve}}$.
\end{corollary}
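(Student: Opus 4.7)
The proof is essentially a comparison of rates: combine the DNN upper bound from Corollary \ref{cor:non-bayes} with the three sub-optimality lower bounds from Propositions \ref{prop:linear}, \ref{prop:wavelet}, and \ref{prop:curvelet}, and verify in each case that the stated hypotheses force a strict separation of exponents in the sense of $\gnsim$. Concretely, I would state the upper bound
\[
\sup_{f^*\in\mF_{\alpha,\beta,M}^{PS}}\Ep_{f^*}\bigl[\|\hat{f}^{DL}-f^*\|_{L^2(P_X)}^2\bigr] = \tilde{O}\bigl(n^{-r_{DL}}\bigr), \quad r_{DL} := \min\bigl\{2\beta/(2\beta+D),\;\alpha/(\alpha+D-1)\bigr\},
\]
and check for each competitor $\widecheck f$ a lower rate $n^{-r_{\widecheck f}}$ with $r_{\widecheck f} < r_{DL}$ strictly; since the upper bound carries only polylog factors, strict separation at the polynomial level will produce $|a_n/b_n|\to\infty$.

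For the linear estimator, the lower bound is $n^{-\alpha/(2\alpha+D-1)}$, so I would verify two inequalities: first, $\alpha/(2\alpha+D-1) < \alpha/(\alpha+D-1)$, which is immediate since $\alpha>0$; second, under the hypothesis $\alpha < 2\beta(D-1)/D$, i.e.\ $\alpha D < 2\beta(D-1)$, I would show $\alpha/(2\alpha+D-1) < 2\beta/(2\beta+D)$ by cross-multiplying to $\alpha(2\beta+D) < 2\beta(2\alpha+D-1)$, which simplifies to $\alpha D < 2\beta(\alpha+D-1)$ and follows from the hypothesis plus $\alpha>0$. For the wavelet estimator, the lower bound is $n^{-1/2}$, and the hypothesis $\beta>D/2$ gives $1/2 < 2\beta/(2\beta+D)$, while $\alpha>D-1$ gives $1/2 < \alpha/(\alpha+D-1)$. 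For the curvelet estimator (with $D=2$), the lower bound is $n^{-1/3}$, and the hypotheses $\beta>1/2$ and $\alpha>1/2$ translate directly into $1/3 < \beta/(\beta+1)$ and $1/3 < \alpha/(\alpha+1)$ respectively.

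In each case I would then conclude by the following template: the ratio of the lower-bound-on-competitor to the upper-bound-on-DNN is at least of order $n^{r_{DL} - r_{\widecheck f}}/\mathrm{polylog}(n)$, and since $r_{DL} - r_{\widecheck f} > 0$, this diverges, establishing \eqref{ineq:cor2}. There is no real obstacle here beyond the algebraic bookkeeping; the only delicate point is to observe that the definition of $\gnsim$ tolerates polylog slack, so that the $\tilde O$ in Corollary \ref{cor:non-bayes} does not eat into the polynomial separation of exponents. The substance of the argument is fully contained in the already-established approximation and minimax-type results; Corollary \ref{cor:compare} is simply the packaging of three exponent comparisons into a single statement.
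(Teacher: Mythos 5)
Your proof is correct and takes exactly the route the paper indicates (the paper offers no explicit proof, only the remark that the corollary "is naturally derived from" Corollary \ref{cor:non-bayes} and Propositions \ref{prop:linear}, \ref{prop:wavelet}, \ref{prop:curvelet}); your algebraic verifications of the three exponent inequalities are the substance that the paper leaves implicit, and they check out. The only minor point you might add is the observation that the polynomial separation $r_{DL}-r_{\widecheck f}>0$ dominates the polylog factor hidden in $\tilde O$, which you do note explicitly — so the argument is complete.
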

These results are naturally derived from the discussion of the minimax optimal rate of DNN (Corollary \ref{cor:non-bayes} and Theorem \ref{thm:minimax}), and the sub-optimality of the other methods (Propositions \ref{prop:linear}, \ref{prop:wavelet}, and \ref{prop:curvelet}).
We can conclude that DNNs offer a theoretical advantage over the other methods in terms of estimation for functions with singularity; that is, with the parameter configurations in Corollary \ref{cor:compare}, there exist $f^* \in\mF_{\alpha,\beta,M}^{PS}$, such that
\begin{align*}
    \Ep_{f^*} \left[ \|\hat{f}^{DL} - f^*\|_{L^2(P_X)}^2\right] < \Ep_{f^*} \left[ \|\widecheck{f} - f^*\|_{L^2(P_X)}^2\right]
\end{align*}
holds for a sufficiently large $n$.

The parameter configurations introduced in Corollary \ref{cor:compare} are classified into two cases, as illustrated in Figure \ref{fig:parameter}.
First, the configuration for the sub-optimality of the linear estimators appears when $\alpha$ is relatively smaller than $\beta$.
With a small $\alpha$, the rate $O(n^{\alpha/(\alpha + D-1)})$ in the minimax optimal rate dominates the generalization error, because it is more difficult to estimate the boundary of pieces than to estimate the function within the pieces.
In this case, the linear estimators lose optimality, because their generalization error is sensitive to $\alpha$.
The second case is that in which both $\alpha$ and $\beta$ are above certain thresholds.
It is thought that the approximation of the singularity by the orthogonal bases has not been adapted to the higher smoothness of $f^*$.

\begin{figure}[t]
\begin{center}
\includegraphics[width=0.6\hsize]{./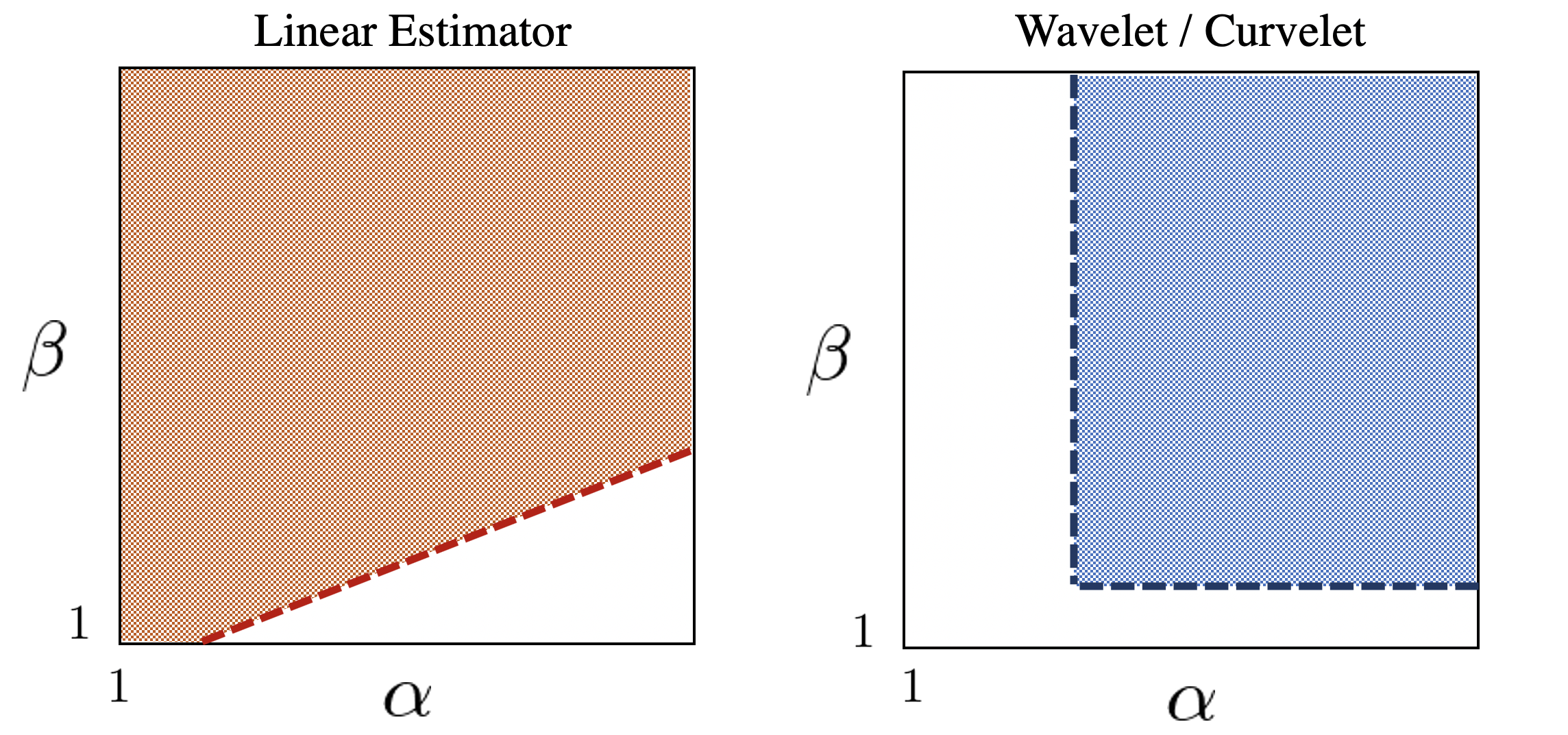}
\caption{Parameter spaces of $(\alpha,\beta)$. [Left] The red dashed line is $\alpha = 2\beta (D-1) /( D-2\beta)$. The red region presents $\{(\alpha,\beta)\mid \alpha < 2\beta (D-1) / ( D-2\beta) \}$, which a set of parameter configurations such that \eqref{ineq:cor2} holds with $\widecheck{f} = \hat{f}^{\mathrm{lin}}$. [Right] The blue region presents $\{(\alpha,\beta)\mid \alpha > D-1, \beta > D/2 \}$, which a set of parameter configurations such that \eqref{ineq:cor2} holds with $\widecheck{f} = \hat{f}^{\mathrm{wav}}$. The blue dashed line is its boundary. \label{fig:parameter}}
\end{center}
\end{figure}

We add a further explanation for the limitations of the other estimators by describing the difficulty of shape fitting to the hypersurface of singularities.
The other estimators take the form of sums of (not necessarily orthogonal) basis functions, and each base has a (nearly) compact support.
When the other methods approximate a function with the hypersurface of singularities, they approximate the hypersurface by fitting its support.
For example, consider the curvelet of which the basis function has an ellipsoid-shaped (nearly) compact support.
The ellipsoid is fitted to the hypersurface of the singularity with rotation, as illustrated in Figure \ref{fig:exp_compare}.
The number of bases determines the magnitude of the error.
However, when the hypersurface has higher-order smoothness, the fitting in the domain cannot adapt to the hypersurface with optimality.

DNNs do not use shape fitting to the hypersurface for singularity, but represent the hypersurface using a composition of functions, as mentioned in Section \ref{sec:explain_dnn}.
Therefore, even if the hypersurface has larger smoothness, DNNs can handle this without losing efficiency.

\begin{figure}[htbp]
\begin{center}
\includegraphics[width=0.5\hsize]{./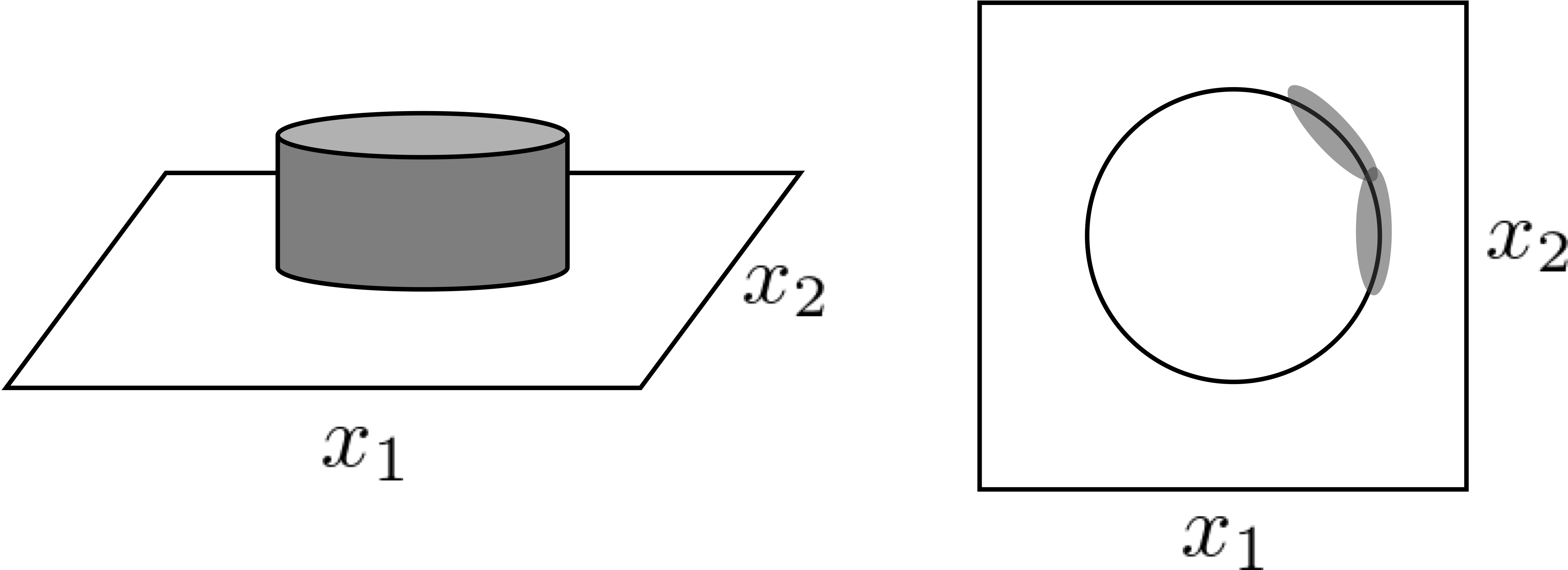}
\caption{[Left] An example of piecewise smooth functions $f^C$ such as the indicator function of a disk. [Right] The singularities of $f^C$ (black circle), and an illustration of approximation for the circle by a curvelet basis. 
The curvelet basis concentrates in the gray ellipses, that cannot have sufficient approximation ability when the curves are too smooth.
\label{fig:exp_compare}}
\end{center}
\end{figure}

\begin{remark}[Limitations of the advantage with learning algorithms]
    We also need to reiterate the limitations of these advantages we claim to have. 
    Needless to say, our analysis is based on the setting that a global optimum of the loss minimization problem has been found in \eqref{opt:erm}.
    In other words, we only study the errors from an approximation power and a degree of freedom of DNN models, and ignore the errors from the training algorithm of DNNs in practical use. 
    However, this algorithmic error requires a completely different analysis, and analyzing them all at the same time would be very difficult and not very meaningful. 
    What we argue in this work is that DNNs can handle singularity more optimally than existing linear estimators, including those for singularity resolution, even if only in terms of approximation power and a degree of freedom derived error aspects.
\end{remark}

\section{Conclusion} \label{sec:concl}

In this study, we have derived theoretical results that explain why DNNs outperform other methods.
We considered the regression setting in the situation whereby the true function is singular on a smooth hypersurface in its domain.
We derived the convergence rates of the estimator obtained by DNNs and proved that the rates were almost optimal in the minimax sense.
We explained that the optimality of DNNs originates from their composition structure, which can resolve singularity.
Furthermore, to analyze the advantage of DNNs, we investigated the sub-optimality of several other estimators, such as linear, wavelet, and curvelet estimators.
We proved the sub-optimality of each estimator with certain parameter configurations.
This advantage of DNNs comes from the fact that the shape of smooth curves for the singularity can be handled by DNNs, while the other methods fail to capture the shape efficiently.
Theoretically, this is a vital step for analyzing the mechanism of DNNs.

\acks{We have greatly benefited from insightful comments and suggestions by Alexandre Tsybakov, Taiji Suzuki, Bharath K Sriperumbudur, Johannes Schmidt-Hieber, and Motonobu Kanagawa.
We also thank Samory Kpotufe and the anonymous reviewers for thoughtful and constructive comments.
M.Imaizumi was supported by JSPS KAKENHI Grant Number 18K18114 and JST Presto Grant Number JPMJPR1852.}

\newpage

\appendix

{\bc 
\section{Proof Overview}

\subsection{Upper Bound on Generalization Error (Theorem \ref{thm:non-bayes})}

We give an overview of the proof of Theorem \ref{thm:non-bayes}, which derives the upper bound on the expected generalization error $\Ep_{f^*} [\|\hat{f}^{DL} -f^*\|_{L^2(P_X)}^2  ]$. 
As shown in \eqref{ineq:main_basic}, the generalization error is decomposed into the approximation error $\mathcal{B}$ and the complexity error $\mathcal{V}$. 
In the proof, we evaluate these two terms separately.
A flowchart of the overview is shown in Figure \ref{fig:flow_chart_theorems}.
}

\begin{figure}[htbp]
    \centering
\tikzset{every picture/.style={line width=0.75pt}} 

\begin{tikzpicture}[x=0.75pt,y=0.75pt,yscale=-1,xscale=0.94]

\draw  [draw opacity=0][fill={rgb, 255:red, 155; green, 155; blue, 155 }  ,fill opacity=0.28 ] (291,26) .. controls (291,15.51) and (299.51,7) .. (310,7) -- (628,7) .. controls (638.49,7) and (647,15.51) .. (647,26) -- (647,83) .. controls (647,93.49) and (638.49,102) .. (628,102) -- (310,102) .. controls (299.51,102) and (291,93.49) .. (291,83) -- cycle ;
\draw   (59,32) -- (231,32) -- (231,94) -- (59,94) -- cycle ;
\draw   (312,39) -- (459,39) -- (459,86) -- (312,86) -- cycle ;
\draw   (480,39) -- (627,39) -- (627,86) -- (480,86) -- cycle ;
\draw    (231,61) -- (292,61) ;
\draw [shift={(294,61)}, rotate = 180] [color={rgb, 255:red, 0; green, 0; blue, 0 }  ][line width=0.75]    (10.93,-3.29) .. controls (6.95,-1.4) and (3.31,-0.3) .. (0,0) .. controls (3.31,0.3) and (6.95,1.4) .. (10.93,3.29)   ;
\draw  [draw opacity=0][fill={rgb, 255:red, 155; green, 155; blue, 155 }  ,fill opacity=0.28 ] (19,165) .. controls (19,155.61) and (26.61,148) .. (36,148) -- (145,148) .. controls (154.39,148) and (162,155.61) .. (162,165) -- (162,269) .. controls (162,278.39) and (154.39,286) .. (145,286) -- (36,286) .. controls (26.61,286) and (19,278.39) .. (19,269) -- cycle ;
\draw    (66,154) -- (66,97) ;
\draw [shift={(66,95)}, rotate = 450] [color={rgb, 255:red, 0; green, 0; blue, 0 }  ][line width=0.75]    (10.93,-3.29) .. controls (6.95,-1.4) and (3.31,-0.3) .. (0,0) .. controls (3.31,0.3) and (6.95,1.4) .. (10.93,3.29)   ;
\draw  [draw opacity=0][fill={rgb, 255:red, 155; green, 155; blue, 155 }  ,fill opacity=0.28 ] (172,163) .. controls (172,154.72) and (178.72,148) .. (187,148) -- (636,148) .. controls (644.28,148) and (651,154.72) .. (651,163) -- (651,268) .. controls (651,276.28) and (644.28,283) .. (636,283) -- (187,283) .. controls (178.72,283) and (172,276.28) .. (172,268) -- cycle ;
\draw    (208,156) -- (208,97) ;
\draw [shift={(208,95)}, rotate = 450] [color={rgb, 255:red, 0; green, 0; blue, 0 }  ][line width=0.75]    (10.93,-3.29) .. controls (6.95,-1.4) and (3.31,-0.3) .. (0,0) .. controls (3.31,0.3) and (6.95,1.4) .. (10.93,3.29)   ;
\draw   (28,155) -- (156,155) -- (156,204) -- (28,204) -- cycle ;
\draw   (28,230) -- (156,230) -- (156,279) -- (28,279) -- cycle ;
\draw   (185,155) -- (339,155) -- (339,204) -- (185,204) -- cycle ;
\draw   (511,226) -- (648,226) -- (648,275) -- (511,275) -- cycle ;
\draw   (330,226) -- (493,226) -- (493,275) -- (330,275) -- cycle ;
\draw    (511,252) -- (493,252) ;
\draw [shift={(491,252)}, rotate = 360] [color={rgb, 255:red, 0; green, 0; blue, 0 }  ][line width=0.75]    (10.93,-3.29) .. controls (6.95,-1.4) and (3.31,-0.3) .. (0,0) .. controls (3.31,0.3) and (6.95,1.4) .. (10.93,3.29)   ;
\draw   (188,226) -- (311,226) -- (311,275) -- (188,275) -- cycle ;
\draw    (329,252) -- (311,252) ;
\draw [shift={(309,252)}, rotate = 360] [color={rgb, 255:red, 0; green, 0; blue, 0 }  ][line width=0.75]    (10.93,-3.29) .. controls (6.95,-1.4) and (3.31,-0.3) .. (0,0) .. controls (3.31,0.3) and (6.95,1.4) .. (10.93,3.29)   ;
\draw   (387,155) -- (510,155) -- (510,204) -- (387,204) -- cycle ;
\draw    (577,226) -- (511.9,204.62) ;
\draw [shift={(510,204)}, rotate = 378.18] [color={rgb, 255:red, 0; green, 0; blue, 0 }  ][line width=0.75]    (10.93,-3.29) .. controls (6.95,-1.4) and (3.31,-0.3) .. (0,0) .. controls (3.31,0.3) and (6.95,1.4) .. (10.93,3.29)   ;
\draw    (386,180) -- (339,180) ;
\draw [shift={(337,180)}, rotate = 360] [color={rgb, 255:red, 0; green, 0; blue, 0 }  ][line width=0.75]    (10.93,-3.29) .. controls (6.95,-1.4) and (3.31,-0.3) .. (0,0) .. controls (3.31,0.3) and (6.95,1.4) .. (10.93,3.29)   ;
\draw    (238,225) -- (238,205) ;
\draw [shift={(238,203)}, rotate = 450] [color={rgb, 255:red, 0; green, 0; blue, 0 }  ][line width=0.75]    (10.93,-3.29) .. controls (6.95,-1.4) and (3.31,-0.3) .. (0,0) .. controls (3.31,0.3) and (6.95,1.4) .. (10.93,3.29)   ;
\draw    (311,226) -- (385.08,204.56) ;
\draw [shift={(387,204)}, rotate = 523.86] [color={rgb, 255:red, 0; green, 0; blue, 0 }  ][line width=0.75]    (10.93,-3.29) .. controls (6.95,-1.4) and (3.31,-0.3) .. (0,0) .. controls (3.31,0.3) and (6.95,1.4) .. (10.93,3.29)   ;

\draw (145.18,38) node [anchor=north] [inner sep=0.75pt]   [align=left] {\begin{minipage}[lt]{128.13pt}\setlength\topsep{0pt}
\begin{center}
\textbf{Generalization\\Bound}\\(Thm. \ref{thm:non-bayes})
\end{center}

\end{minipage}};
\draw (311,12) node [anchor=north west][inner sep=0.75pt]   [align=left] {\textbf{Applications}};
\draw (384.18,43) node [anchor=north] [inner sep=0.75pt]   [align=left] {\begin{minipage}[lt]{87.77pt}\setlength\topsep{0pt}
\begin{center}
Convergence Rate\\(Cor. \ref{cor:non-bayes})
\end{center}

\end{minipage}};
\draw (25,9) node [anchor=north west][inner sep=0.75pt]   [align=left] {\textbf{\underline{Main Result}}};
\draw (555.18,43) node [anchor=north] [inner sep=0.75pt]   [align=left] {\begin{minipage}[lt]{84.53pt}\setlength\topsep{0pt}
\begin{center}
Algorithm Effect\\(Prop. \ref{cor:opt})
\end{center}

\end{minipage}};
\draw (255.18,103) node [anchor=north] [inner sep=0.75pt]   [align=left] {\textbf{Bound}\\\textbf{Approx. }$\displaystyle \mathcal{B}$};
\draw (119.18,102) node [anchor=north] [inner sep=0.75pt]   [align=left] {\textbf{Bound}\\\textbf{Complex. }$\displaystyle \mathcal{V}$};
\draw (92.18,161) node [anchor=north] [inner sep=0.75pt]   [align=left] {\begin{minipage}[lt]{86.08pt}\setlength\topsep{0pt}
\begin{center}
Peeling Technique\\(Sec. \ref{sec:entropy})
\end{center}

\end{minipage}};
\draw (94.18,236) node [anchor=north] [inner sep=0.75pt]   [align=left] {\begin{minipage}[lt]{75.87pt}\setlength\topsep{0pt}
\begin{center}
Covering Bound\\(Lem. \ref{lem:entropy_DNN})
\end{center}

\end{minipage}};
\draw (262.18,161) node [anchor=north] [inner sep=0.75pt]   [align=left] {\begin{minipage}[lt]{98.32pt}\setlength\topsep{0pt}
\begin{center}
Approx. on $\mF_{\alpha,\beta,M}^{PS}$\\(Thm. \ref{thm:approx} / Prop. \ref{prop:approx})
\end{center}

\end{minipage}};
\draw (88,207) node [anchor=north west][inner sep=0.75pt]   [align=left] {\textbf{+}};
\draw (579.18,232) node [anchor=north] [inner sep=0.75pt]   [align=left] {\begin{minipage}[lt]{96.06pt}\setlength\topsep{0pt}
\begin{center}
Approx. on step\\(Lem. \ref{lem:main_step} / \ref{lem:step_approx})
\end{center}

\end{minipage}};
\draw (413.18,232) node [anchor=north] [inner sep=0.75pt]   [align=left] {\begin{minipage}[lt]{128.75pt}\setlength\topsep{0pt}
\begin{center}
Approx. on polynomial\\(Lem. \ref{lem:approx_poly1} / \ref{lem:approx_poly2})
\end{center}

\end{minipage}};
\draw (250.18,230) node [anchor=north] [inner sep=0.75pt]   [align=left] {\begin{minipage}[lt]{87.56pt}\setlength\topsep{0pt}
\begin{center}
Approx. on $H^\beta$\\(Lem. \ref{lem:main_smooth} / \ref{lem:smooth_approx})
\end{center}

\end{minipage}};
\draw (449.18,161) node [anchor=north] [inner sep=0.75pt]   [align=left] {\begin{minipage}[lt]{89.27pt}\setlength\topsep{0pt}
\begin{center}
Approx. on piece\\(Lem. \ref{lem:main_indicator} / \ref{lem:indicator_approx})
\end{center}

\end{minipage}};


\end{tikzpicture}
    \caption{{\bc Flowchart showing the relationship between theorems/lemmas/propositions for Theorem \ref{thm:non-bayes} on the upper bound of the generalization error. 
    Theorem \ref{thm:non-bayes} is mainly derived from the two groups of results bounding the approximation and complexity errors, and also is applied to the additional results. 
    When multiple theorems/lemmas/propositions are specified within one box, it represents a more generalised result or a variant with different activation functions.}}
    \label{fig:flow_chart_theorems}
\end{figure}
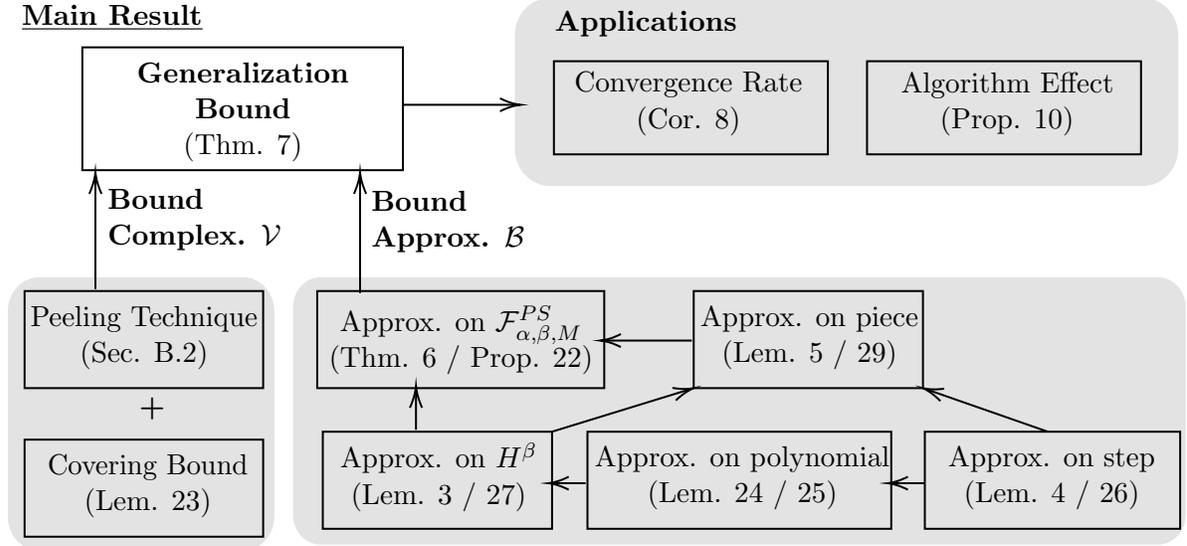

{\bc

The evaluation of the approximation error $\mathcal{B}$, which is carried out in Section \ref{sec:approx}, uses the evaluation of an approximation error of the piecewise smooth functions  $\mF_{\alpha,\beta,M}^{PS}$ by DNNs.
According to the definition of piecewise smooth functions $f \in \mF_{\alpha,\beta,M}^{PS}$ as
\begin{align*}
    f = \sum_{m \in [M]} f_m \otimes \mone_{R_m}, \mbox{~where~}f_m \in H_F^{\beta}(I^D) \mbox{~and~}\{R_m\}_{m \in [M]} \in \mR_{\alpha,M}. 
\end{align*}
the approximation error is described by an approximation error of a simple smooth function $f_m \in H_F^{\beta}(I^D)$ and the approximation error of the indicator function on a piece $\mone_{R_m}$.
This basic idea is originally proposed by \cite{petersen2017optimal}.
To approximate $f_m \in H_F^{\beta}(I^D)$, we apply an approximation technique based on an aggregation of local polynomial functions, which is originally developed by \cite{yarotsky2017error}. 
Since this technique requires a step function $\mone_{\{\cdot \geq 0\}}$, we show that a step function $\mone_{\{\cdot \geq 0\}}$ can be well approximated by the wide class of admissible activation functions in Lemma \ref{lem:main_step} and \ref{lem:step_approx}.
This approximation on a step function is a key part of our proof, and the approximation results on smooth functions are updated to adapt to this key part.
Based on this result, we can approximate polynomials in Lemma \ref{lem:approx_poly1} and \ref{lem:approx_poly2}, and a smooth function $f \in H_F^\beta$ in Lemma \ref{lem:main_smooth} and \ref{lem:smooth_approx}.
In contrast, approximation of the indicator function $\mone_{R_m}$ is evaluated by combining the approximation on both a step function $\mone_{\{\cdot \geq 0\}}$ and a smooth function $f \in H_F^\beta(I^D)$.
This approach takes advantage of the composition structure of DNNs described in Section \ref{sec:explain_dnn}.
As a result, the final approximation error has the following rate:
\begin{align*}
    \mB = \tilde{O}(S^{-2\beta/D} + S^{-\alpha/(D-1)}),
\end{align*}
which is a sum of the approximation errors of $f \in H_F^\beta(I^D)$ and $\mone_{R_m}$.

The evaluation of the complexity error $\mathcal{V}$ in Section \ref{sec:entropy} is based on a covering number analysis on the function set by DNNs $\mG(L,S,B)$.
This makes use of the technique of peeling from the empirical process theory, which is developed, for example, by \cite{koltchinskii2006local}, and its application to DNNs is proposed in \cite{schmidt2017nonparametric} and others.
Specifically, we define a localized subset of functions around a specific function $f \in \mG(L,S,B)$, then use uniform convergence over a finite subset in the localized subset to bound the complexity error $\mathcal{V}$. 
Since we use a covering set to construct the finite subset, the covering number $\log \mN(\varepsilon, \mG(L,S,B), \|\cdot \|_{L^\infty(I^D)})$ of the DNN functions $\mG(L,S,B)$ becomes essential. 
Briefly summarised, an expectation of the complexity error has the following order
\begin{align*}
    {O}\left(\frac{\log \mN(1/n, \mG(L,S,B), \|\cdot \|_{L^\infty(I^D)})}{n}\right) = \tilde{O}\left(\frac{SL \log (SLB)}{n}\right),
\end{align*}
with a number of parameters $S$ and layers $L$ and a parameter volume $B$.
The equality follows the covering number bound in Lemma \ref{lem:entropy_DNN} from \cite{nakada2019adaptive}.

The combination of the above two evaluations bounds the generalization error $\Ep_{f^*} [\|\hat{f}^{DL} -f^*\|_{L^2(P_X)}^2  ]$, together with a choice of  $S$,  $L$, and $B$.
By selecting $L$ and $B$ as a polynomial $\log n$, the generalisation error is bounded as
\begin{align}
    &\Ep_{f^*} [\|\hat{f}^{DL} -f^*\|_{L^2(P_X)}^2  ] 
    = \tilde{O}(S^{-2\beta/D} + S^{-\alpha/(D-1)}) + \tilde{O}(S/n) \label{eq:error_bound}
\end{align}
In \eqref{eq:error_bound}, the first item is an upper bound on the approximation error $\mB$, and the second term is a bound on the complexity error $\mV$. 
Since the approximation error $\mB$ is decreasing in $S$ and the complexity error $\mV$ is increasing in $S$, we select the number of parameter $S = \Theta(n^{D/(2\beta + D)} + n^{(D-1)/(\alpha + D - 1)})$ to handle the trade-off between $\mB$ and $\mV$.
Then, we obtain the statement of Theorem \ref{thm:non-bayes}.
By applying this result, the convergence rate is derived.

\subsection{Lower Bound of Generalization Error (Theorem \ref{thm:minimax})}

We provide an overview of the proof of Theorem \ref{thm:minimax} for the lower bound on the expected generalization error. 
The proof depends mainly on the information-theoretic analysis of the minimax rate, which is developed by \cite{yang1999information}.
We cite Theorem 6 of \cite{yang1999information} as Theorem \ref{thm:yangbarron} in this paper, then focus on a packing number $\mM(\varepsilon_n, \mF_{\alpha,\beta,M}^{PS}, \|\cdot\|_{L^2})$ to apply the theory.
In Proposition \ref{prop:packing_subset}, we investigate packing numbers of each element set that constitutes $\mF_{\alpha,\beta,M}^{PS}$.
We utilize several results from \cite{dudley2014uniform} and derive the bound for the packing number.
Figure \ref{fig:flow_chart_lower} illustrates the relation.
}

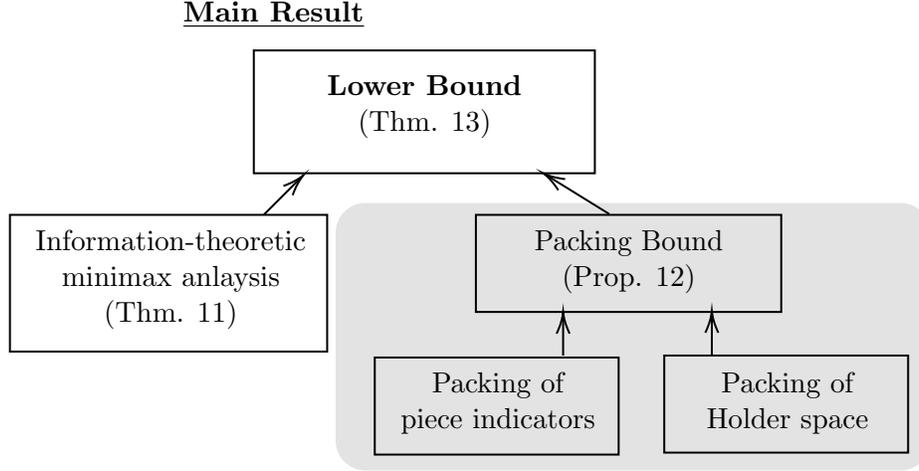
\begin{figure}[htbp]
    \centering
\tikzset{every picture/.style={line width=0.75pt}} 

\begin{tikzpicture}[x=0.75pt,y=0.75pt,yscale=-1,xscale=1]

\draw   (137,28) -- (309,28) -- (309,90) -- (137,90) -- cycle ;
\draw    (142,111) -- (160.59,92.41) ;
\draw [shift={(162,91)}, rotate = 495] [color={rgb, 255:red, 0; green, 0; blue, 0 }  ][line width=0.75]    (10.93,-3.29) .. controls (6.95,-1.4) and (3.31,-0.3) .. (0,0) .. controls (3.31,0.3) and (6.95,1.4) .. (10.93,3.29)   ;
\draw  [draw opacity=0][fill={rgb, 255:red, 155; green, 155; blue, 155 }  ,fill opacity=0.28 ] (178.25,120) .. controls (178.25,111.72) and (184.96,105) .. (193.25,105) -- (463.25,105) .. controls (471.53,105) and (478.25,111.72) .. (478.25,120) -- (478.25,225) .. controls (478.25,233.28) and (471.53,240) .. (463.25,240) -- (193.25,240) .. controls (184.96,240) and (178.25,233.28) .. (178.25,225) -- cycle ;
\draw    (316.25,110.56) -- (284.7,91.05) ;
\draw [shift={(283,90)}, rotate = 391.73] [color={rgb, 255:red, 0; green, 0; blue, 0 }  ][line width=0.75]    (10.93,-3.29) .. controls (6.95,-1.4) and (3.31,-0.3) .. (0,0) .. controls (3.31,0.3) and (6.95,1.4) .. (10.93,3.29)   ;
\draw   (14,111) -- (174,111) -- (174,180) -- (14,180) -- cycle ;
\draw   (249,111) -- (403,111) -- (403,160) -- (249,160) -- cycle ;
\draw   (198,183) -- (321,183) -- (321,232) -- (198,232) -- cycle ;
\draw   (344,182) -- (467,182) -- (467,231) -- (344,231) -- cycle ;
\draw    (368,182) -- (368,161.56) ;
\draw [shift={(368,159.56)}, rotate = 450] [color={rgb, 255:red, 0; green, 0; blue, 0 }  ][line width=0.75]    (10.93,-3.29) .. controls (6.95,-1.4) and (3.31,-0.3) .. (0,0) .. controls (3.31,0.3) and (6.95,1.4) .. (10.93,3.29)   ;
\draw    (293,182) -- (293,162) ;
\draw [shift={(293,160)}, rotate = 450] [color={rgb, 255:red, 0; green, 0; blue, 0 }  ][line width=0.75]    (10.93,-3.29) .. controls (6.95,-1.4) and (3.31,-0.3) .. (0,0) .. controls (3.31,0.3) and (6.95,1.4) .. (10.93,3.29)   ;

\draw (223.18,39) node [anchor=north] [inner sep=0.75pt]   [align=left] {\begin{minipage}[lt]{87.88pt}\setlength\topsep{0pt}
\begin{center}
\textbf{Lower Bound}\\(Thm. 13)
\end{center}

\end{minipage}};
\draw (100,2) node [anchor=north west][inner sep=0.75pt]   [align=left] {\textbf{\underline{Main Result}}};
\draw (95.18,117) node [anchor=north] [inner sep=0.75pt]   [align=left] {\begin{minipage}[lt]{116.27pt}\setlength\topsep{0pt}
\begin{center}
Information-theoretic\\minimax anlaysis\\(Thm. 11)
\end{center}

\end{minipage}};
\draw (326.18,117) node [anchor=north] [inner sep=0.75pt]   [align=left] {\begin{minipage}[lt]{71.34pt}\setlength\topsep{0pt}
\begin{center}
Packing Bound\\(Prop. \ref{prop:packing_subset})
\end{center}

\end{minipage}};
\draw (260.18,189) node [anchor=north] [inner sep=0.75pt]   [align=left] {\begin{minipage}[lt]{73.6pt}\setlength\topsep{0pt}
\begin{center}
Packing of \\piece indicators
\end{center}

\end{minipage}};
\draw (406.18,189) node [anchor=north] [inner sep=0.75pt]   [align=left] {\begin{minipage}[lt]{62.83pt}\setlength\topsep{0pt}
\begin{center}
Packing of\\Holder space
\end{center}

\end{minipage}};

\end{tikzpicture}
    \caption{{\bc Flowchart for Theorem \ref{thm:minimax} on the lower bound of the generalization error. 
     \label{fig:flow_chart_lower}}}
\end{figure}

\section{Proof of Theorem \ref{thm:non-bayes}}

We start with the basic inequality \eqref{ineq:main_basic}.
As preparation, we introduce additional notation.
Given an empirical measure, the empirical (pseudo) norm of a random variable is defined by  $\|Y\|_n := (  n^{-1} \sum_{i \in [n]} Y_i^2)^{1/2}$ and $\|\xi\|_n := (  n^{-1} \sum_{i \in [n]} \xi_i^2)^{1/2}$.
By the definition of $\hat{f}^{DL}$ in \eqref{opt:erm}, we obtain the following inequality 
\begin{align*}
    \|Y - \hat{f}^{DL}\|_n^2 \leq \|Y - f\|_n^2
\end{align*}
for all $f \in \mG(L,S,B)$.
It follows from $Y_i = f^*(X_i) + \xi_i$ that
\begin{align*}
    \|f^* +\xi - \hat{f}^{DL}\|_n^2 \leq \|f^* + \xi - f\|_n^2,
\end{align*}
then, simple calculation yields
\begin{align*}
    \| \hat{f}^{DL} - f^*\|_n^2 & \leq \|f^* - f\|_n^2 + \frac{2}{n}\sum_{i \in [n]} \xi_i (\hat{f}^{DL}(X_i) - f(X_i)).
\end{align*}
We set $f \in \mG(L,S,B)$ as satisfying $\|f^* - f\|_n = \inf_{f' \in \mG(L,S,B)}\|f^* - f'\|_n$, then we obtain
\begin{align}
    \| \hat{f}^{DL} - f^*\|_n^2 & \leq \inf_{f' \in \mG(L,S,B)}\|f^* - f'\|_n^2 + \frac{2}{n}\sum_{i \in [n]} \xi_i (\hat{f}^{DL}(X_i) - f(X_i)) \label{ineq:basic2} \\
    &=: \mB + \mV. \notag
\end{align}

In the first subsection, we bound $\mB$ by investigating an approximation power of DNNs.
In the second subsection, we evaluate $\mV$ by evaluating the complexity of the estimator.
Afterward, we combine the results and derive an overall rate. 

\subsection{Approximate Piecewise Smooth Functions by DNNs} \label{sec:approx}

In this subsection, we provide proof of Theorem \ref{thm:approx}.
The result follows the following proposition:
\begin{proposition}[General Version of Theorem \ref{thm:approx}] \label{prop:approx}
    Suppose Assumption \ref{asmp:activation} holds.
    Then, for any $\varepsilon_1 \in (0,1)$ and $\varepsilon_2 \in (0,1)$, there exists a tuple $(L,S,B)$ such as
    \begin{itemize}
        \item $L = C_{\alpha,\beta,D,F}(\lfloor \alpha \rfloor + \lfloor \beta \rfloor + \log_2( 1/\varepsilon_1) + \log_2( M/\varepsilon_2)) + 1)$,
        \item $S =  C_{\alpha,\beta,D,F,J} ( M  \varepsilon_1^{-D/\beta} ( \log_2( 1/\varepsilon_1))^2 +   (\varepsilon_2/M)^{-2(D-1)/\alpha} \log_2(M/\varepsilon_2)^2 + M(\log_2(M/\varepsilon_2))^2)$,
        \item $B = C_{F,M,q}(\varepsilon_1 \wedge \varepsilon_2)^{-16 \wedge -C_{\alpha,\beta}}$,
    \end{itemize}
    which satisfy
    \begin{align*}
        \inf_{f \in \mG(L,S,B)} \sup_{f^* \in \mF_{\alpha,\beta,M}^{PS}} \|f - f^*\|_{L^2} \leq \varepsilon_1 + \varepsilon_2.
    \end{align*}
\end{proposition}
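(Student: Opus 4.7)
The plan is to build the target DNN as a superposition of $M$ products, each product consisting of a smooth-function approximator multiplied by an indicator approximator, with the three ingredients (smooth part, indicator part, multiplication gate) supplied respectively by Lemma \ref{lem:main_smooth}, Lemma \ref{lem:main_indicator}, and a standard multiplication sub-network. Write the target as $f^* = \sum_{m\in[M]} f^*_m \otimes \mone_{R_m}$ with $f^*_m \in H_F^\beta(I^D)$ and $\{R_m\}_{m\in[M]}\in \mR_{\alpha,M}$. The candidate network will have the form
\begin{align*}
\hat f(x) \;=\; \sum_{m\in[M]} \mathrm{Mult}\bigl(g_{f,m}(x),\; g_{r,m}(x)\bigr),
\end{align*}
where $g_{f,m}\approx f^*_m$, $g_{r,m}\approx \mone_{R_m}$, and $\mathrm{Mult}$ is a small DNN implementing scalar multiplication $(a,b)\mapsto ab$ on $[-F,F]\times[0,1]$. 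All sums and the outer linear combination are absorbed into the final affine layer of the composed network.

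First I would choose accuracies $\varepsilon_1/M$ for the smooth parts and $\varepsilon_2/M$ for the indicators, apply Lemma \ref{lem:main_smooth} $M$ times to obtain $g_{f,m}\in \mG(L_f,S_f,B_f)$ with $\|g_{f,m}-f^*_m\|_{L^2(I^D)}\le \varepsilon_1/M$ and $\|g_{f,m}\|_\infty\le 2F$, and invoke Lemma \ref{lem:main_indicator} once to obtain a single vector-valued network $g_r=(g_{r,1},\ldots,g_{r,M})$ satisfying $\|g_{r,m}-\mone_{R_m}\|_{L^2(I^D)}\le \varepsilon_2/M$ for every $m$. For the multiplication gate I would use the standard construction: under Assumption \ref{asmp:activation}(i), a second derivative of $\eta$ at some point $x'$ is non-vanishing, so finite differences of $\eta$ realize an approximation of $x\mapsto x^2$; combined with the polarization identity $ab=\tfrac14((a+b)^2-(a-b)^2)$, this yields a DNN $\mathrm{Mult}$ approximating multiplication to accuracy $\varepsilon_1\wedge \varepsilon_2$ with $O(\log^2(1/(\varepsilon_1\wedge\varepsilon_2)))$ non-zero parameters and constant depth; under Assumption \ref{asmp:activation}(ii) the same is known for ReLU-type activations. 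One such gate is placed in parallel for each $m$, contributing $O(M\log^2(M/(\varepsilon_1\wedge\varepsilon_2)))$ parameters in total.

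The error control then proceeds by triangle inequality, bounding for each $m$
\begin{align*}
\bigl\|f^*_m\mone_{R_m}-\mathrm{Mult}(g_{f,m},g_{r,m})\bigr\|_{L^2}
&\le \|f^*_m\|_\infty \,\|\mone_{R_m}-g_{r,m}\|_{L^2}
 + \|g_{r,m}\|_\infty\,\|f^*_m-g_{f,m}\|_{L^2} \\
&\quad + \bigl\|g_{f,m}g_{r,m}-\mathrm{Mult}(g_{f,m},g_{r,m})\bigr\|_{L^2},
\end{align*}
and summing over $m$ to get a bound of order $F\varepsilon_2+\varepsilon_1+(\varepsilon_1\wedge\varepsilon_2)$. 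Absorbing the multiplicative constant $F$ into a redefinition of $\varepsilon_1,\varepsilon_2$ yields the claim $\|\hat f-f^*\|_{L^2}\le \varepsilon_1+\varepsilon_2$. Parameter counting: depth is $L_f+L_r+L_{\mathrm{Mult}}+O(1)$, matching the stated $L$; the non-zero parameter count is $M S_f + S_r + O(M\log^2(M/(\varepsilon_1\wedge\varepsilon_2)))$, which upon substitution of the sizes from Lemma \ref{lem:main_smooth} and Lemma \ref{lem:main_indicator} gives the stated $S$; the weight bound is dominated by the worst of the three sub-networks, giving the stated $B$.

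The main obstacle I expect is the clean realization of $\mathrm{Mult}$ and its integration with the indicator block under Assumption \ref{asmp:activation}, because unlike ReLU (for which polynomial emulation is well documented), a general smooth $\eta$ only guarantees non-vanishing derivatives of a fixed order at one point, so one has to implement the finite-difference polynomial emulator with rescaled inputs and carefully track how the weight bound $B$ grows with the rescaling. A secondary subtlety is that $g_{r,m}$ may take values slightly outside $[0,1]$, which forces a mild clipping argument (realized by an extra affine-plus-activation layer) before feeding into $\mathrm{Mult}$, so that the $L^\infty$ domain of the multiplication approximator is controlled uniformly in $n$. Once these two points are handled, the parameter accounting is straightforward bookkeeping that exactly reproduces the stated $(L,S,B)$.
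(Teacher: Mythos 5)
Your proposal follows essentially the same route as the paper's proof: decompose $f^* = \sum_m f^*_m \otimes \mone_{R_m}$, approximate the smooth parts via Lemma~\ref{lem:main_smooth}, the indicators via Lemma~\ref{lem:main_indicator}, combine with a multiplication sub-network built from the polarization identity and the squaring approximant (Lemma~\ref{lem:approx_poly2}, eq.~\eqref{ineq:multi}), and sum the three resulting error terms by triangle inequality. The two points you flag as obstacles are already resolved in the paper's machinery: Lemmas~\ref{lem:approx_poly1}/\ref{lem:approx_poly2} implement the finite-difference polynomial emulation with explicit weight-bound tracking, and Lemma~\ref{lem:main_indicator} supplies the uniform bound $\|f_m\|_{L^\infty}\leq 1+\varepsilon$ so no extra clipping layer is needed (one simply widens the $L^\infty$-domain of the multiplication gate). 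One cosmetic difference: you allocate accuracy $\varepsilon_1/M$ to each smooth block and sum, whereas the paper exploits the factor $\mathrm{vol}(R_m)$ in Lemma~\ref{lem:main_smooth} so that the $m$-sum telescopes with $\delta_1 = \varepsilon_1$; your variant costs an extra $M^{D/\beta}$ in the parameter count, which is harmless here since $M$ is fixed but is worth noting.
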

\begin{proof}[Proof of Proposition \ref{prop:approx}]
Fix $f^* \in\mF_{\alpha,\beta,M}^{PS}$ such that $f^* = \sum_{m \in [M]} f^*_m \otimes \mone_{R^*_m}$ with $f^*_m\in H_F^\beta(I^D)$ and $\{R_m\}_{m \in [M]} \in \mathcal{R}_{\alpha,M}$ for $m \in [M]$.
By Lemma \ref{lem:smooth_approx}, for any $\delta_1 \in (0,1)$, there exist a constant $c_1 > 0$ and functions $g_{f,1},...,g_{f,M} \in \mG(C_{\beta,D,F}(\lfloor \beta \rfloor +  ( \log_2(1/ \delta_1) + 1), C_{\beta,D,F} \delta_1^{-D/\beta} ( \log_2(1/\delta_1))^2 ,$ $ C_{F,q}\delta_1^{-16 \wedge -C_\beta} )$ such that $\|g_{f,m} - f^*_m\|_{L^2(R_m)} \leq \mathrm{vol}(R_m) \delta_1$ for $m \in [M]$.
Similarly, by Lemma \ref{lem:indicator_approx}, we can find $g_R \in \mG(C_{\alpha,D,F,J}(\lfloor \alpha \rfloor + \log_2(1/\delta_2) + 1), C_{\alpha,D,F,J}( \delta_2^{-2(D-1)/\alpha} \log_2(1/\delta_2)^2 + 1 + M(\log_2(1/\delta_2))^2),$ $ C_{F,q}\delta_2^{-16 \wedge -C_\alpha})$ such that $\|g_{R,m}- \mone_{R_m^*}\|_{L^2} \leq \delta_2$ for $\delta_2 \in (0,1)$.
For approximation, we follow \eqref{ineq:multi} in the proof of Lemma \ref{lem:approx_poly2} as $g_c \in \mG(\log_2(1/\delta_1),C((\log_2(1/\delta))^2 + 1) , C)$ which approximates a multiplication $\|((x,x') \mapsto xx') - g_c(x,x')\|_{L^\infty([-F,F]^2)} \leq F^2 \delta_1$.

With these components, we construct a function $\hat{g} \in \mG( C_{\alpha,\beta,D,F}(\lfloor \alpha \rfloor + \lfloor \beta \rfloor + \log_2( 1/\delta_1) + \log_2( 1/\delta_1)) + 1), C_{\alpha,\beta,D,F,J} ( M  \delta_1^{-D/\beta} ( \log_2( 1/\delta_1))^2 +   \delta_2^{-(D-1)/\alpha} (\log_2(1/\delta_2))^2 + 1 + M(\log_2(1/\delta_2))^2) ,$ $C_{F,q}(\delta_1^{-16 \wedge C_\alpha} \wedge \delta_2^{-16 \wedge C_\beta}))$ as
\begin{align}
    \hat{g}(x) = \sum_{m\in [M]} g_c(g_{f,m}(x),g_{R,m}(x)), \label{def:ghat}
\end{align}
by setting a parameter matrix $A_{L} = (1,1,...,1)^\top$ in the last layer of $\hat{g}$.
Then, we evaluate the distance between $f^*$ and the combined DNN:
\begin{align}
    &\|f^* - \hat{g}\|_{L^2(I^D)} \notag \\
    &= \left\| \sum_{m \in [M] }f^*_m \mone_{R^*_m} -\sum_{m\in [M]} g_c(g_{f,m}(\cdot),g_{R,m}(\cdot))\right\|_{L^2(I^D)} \notag \\
    & \leq \sum_{m \in [M] } \left\| f^*_m \otimes \mone_{R^*_m} - g_{f,m} \otimes g_{R,m}\right\|_{L^2(I^D)} \notag  \\
    & \quad + \sum_{m \in [M] } \left\|  g_{f,m} \otimes g_{R,m} - g_c(g_{f,m}(\cdot),g_{R,m}(\cdot))\right\|_{L^2(I^D)} \notag  \\
    & \leq \sum_{m \in [M] } \left\| ( f^*_m -  g_{f,m}) \otimes  \mone_{R^*_m} \right\|_{L^2(I^D)} +\sum_{m \in [M] }  \left\| g_{f,m}\otimes  (\mone_{R^*_m}  - g_{R,m})\right\|_{L^2(I^D)} \notag \\
    &\quad + \sum_{m \in [M] } \left\|  g_{f,m} \otimes g_{R,m} - g_c(g_{f,m}(\cdot),g_{R,m}(\cdot))\right\|_{L^2(I^D)} \notag  \\
    & =: \sum_{m \in [M]} B_{1,m} + \sum_{m \in [M]} B_{2,m} +\sum_{m \in [M] } B_{3,m}. \label{ineq:decomp1}
\end{align}
We will bound $B_{1,m},B_{2,m}$ and $B_{3,m}$ for $m \in [M]$.
About $B_{1,m}$, 
the choice of $g_{f,m}$ gives 
\begin{align*}
    & B_{1,m} = \left\| ( f^*_m -  g_{f,m}) \otimes  \mone_{R^*_m} \right\|_{L^2(I^D)} = \left\| f^*_m -  g_{f,m} \right\|_{L^2(R_m^*)} \leq \mathrm{vol}(R_m^*) \delta_1.
\end{align*}
About $B_{2,m}$, similarly, the H\"older inequality yields
\begin{align*}
    B_{2,m} &=  \left\| g_{f,m}\otimes  (\mone_{R^*_m}  - g_{R,m})\right\|_{L^2(I^D)} \\
    &\leq \left\|   g_{f,m}\right\|_{L^\infty(I^D)}\left\| \mone_{R^*_m}  - g_{R,m} \right\|_{L^2(I^D)} \\
     & \leq (1 +  \delta_2)\delta_2. 
\end{align*}
About $B_{3,m}$, since $g_{f,m}$ and $g_{R,m}$ is a bounded function by $F$, we obtain 
\begin{align*}
    B_{3,m} \leq \|(x,x' \mapsto xx') \to g_c\|_{L^\infty([-F,F]^2)} \leq F^2 \delta_2.
\end{align*}

We combine the results about $B_{1,m},B_{2,m}$ and $B_{3,m}$.
Substituting the bounds for \eqref{ineq:decomp1} yields
\begin{align*}
    \|f^* - \hat{g}\|_{L^2(I^D)} &\leq \sum_{m \in [M]} \left\{\mathrm{vol}(R_m^*) \delta_1 + \delta_2 + \delta_2^{2} + F^2 \delta_2 \right\}\\
    & \leq \delta_1 +  M \delta_2 + M \delta_2^2 + M F^2 \delta_2,
\end{align*}
where the second inequality follows $\sum_{m \in [M]} \mathrm{vol}(R_m^*) = \mathrm{vol}(I^D) = 1$.
Set $\delta_1 = \varepsilon_1$ and $\delta_2 = C_F \varepsilon_2 /M$.  Then we obtain 
\begin{align*}
     \|f^* - \hat{g}\|_{L^2(I^D)} \leq \varepsilon_1 + \varepsilon_2/2 + \varepsilon_2^2/2 \leq \varepsilon_1 + \varepsilon_2.
\end{align*}
Adjusting the coefficients, we obtain the statement.
\end{proof}

We are now ready to prove Theorem \ref{thm:approx}.
\begin{proof}[Proof of Theorem \ref{thm:approx}]
    We note that $\mG(L,S,B)$ has an inclusion property, namely, for $L' \geq L, S' \geq S$ and $B' \geq B$, we obtain
    \begin{align*}
        \mG(L',S',B') \supseteq \mG(L,S,B).
    \end{align*}
    Applying Proposition \ref{prop:approx} and adjusting the coefficients yield the statement.
\end{proof}

\subsection{Combining the Bounds} \label{sec:entropy}

Here, we evaluate $\mV$ in \eqref{ineq:basic2} by the empirical process and its applications \citep{koltchinskii2006local, gine2015mathematical,schmidt2017nonparametric}.
We then combine the result with Theorem \ref{thm:approx}, obtaining Theorem \ref{thm:non-bayes}.
Recall that $F$ denotes an upper bound of $g \in \mG(L,S,B)$ by its definition.

\begin{proof}[Proof of Theorem \ref{thm:non-bayes}]
The proof starts with the basis inequality \eqref{ineq:basic2} and follows the following two steps: (i) apply the covering number bound for $\mV$ in \eqref{ineq:basic2}, and (ii) combine the results with the approximation result (Theorem \ref{thm:approx}) on $\mB$.

\textbf{Step (i). Covering bound for the cross term.}
We bound an expected term $ |\Ep[\mV]|=  |\Ep[\frac{2}{n}\sum_{i \in [n] } \xi_i (\hat{f}^{DL}(X_i) - f(X_i)) ]|$ by the covering number of $\mG(L,S,B)$.
We fix $\delta \in (0,1)$ and consider a covering set $\{g_j\}_{j=1}^N \subset \mG(L,S,B)$ for $N= \mN(\delta, \mG(L,S,B), \|\cdot\|_{L^\infty})$, that is, for any $g \in \mG(L,S,B)$, there exists $g_j$ with $j \in [N]$ such that $\|g - g_j\|_{L^\infty} \leq \delta$.
For $\hat{f}^{DL}$, let $\hat{j} \in [N]$ be such that $\|\hat{f}^{DL} - g_{\hat{j}}\|_{L^\infty} \leq \delta$ holds.
Then, we bound the expected term as
\begin{align*}
    &\left| \Ep\left[ \frac{2}{n}\sum_{i \in [n]}\xi_i (\hat{f}^{DL}(X_i) - f(X_i))\right]  \right|\\
    & \leq \left|  \Ep\left[ \frac{2}{n}\sum_{i \in [n]} \xi_i (\hat{f}^{DL}(X_i) - g_{\hat{j}}(X_i))  \right]\right| +  \left| \Ep\left[ \frac{2}{n}\sum_{i \in [n]} \xi_i (g_{\hat{j}}(X_i) - f(X_i)) \right]  \right|\\
    & \leq 2 \delta \Ep\left[ \frac{1}{n}\sum_{i \in [n]} |\xi_i|\right] +  \Ep\left[ \left( \frac{\|\hat{f}^{DL} - f\|_n + \delta}{\|g_{\hat{j}} - f\|_n }\right) \left|\frac{2}{n}\sum_{i \in [n]} \xi_i (g_{\hat{j}}(X_i) - f(X_i)) \right|\right]   \\
    & \leq 2\delta (2 \sigma^2/\pi)^{1/2}+  \Ep\left[ \left( \frac{\|\hat{f}^{DL} - f\|_n + \delta}{\|g_{\hat{j}} - f\|_n }\right) \left|\frac{2}{n}\sum_{i \in [n]} \xi_i (g_{\hat{j}}(X_i) - f(X_i)) \right|\right]\\
    & \leq 2\delta (2 \sigma^2/\pi)^{1/2}+  2\Ep\Biggl[ \left( \frac{\|\hat{f}^{DL} - f\|_n + \delta}{ \sqrt{n} }\right) \Biggl|\underbrace{\frac{\sum_{i \in [n]} \xi_i (g_{\hat{j}}(X_i) - f(X_i))}{ \sqrt{n}\|g_{\hat{j}} - f\|_n} }_{=: \eta_{\hat{j}}}\Biggr|\Biggr] \\
    & \leq 2\delta (2 \sigma^2/\pi)^{1/2} +  \frac{2(\Ep[\|\hat{f}^{DL} - f\|_n^2]^{1/2} + \delta)}{\sqrt{n}} \Ep[\eta_{\hat{j}}^2]^{1/2},
\end{align*}
where the second inequality follows $\|g_{\hat{j}} - f\|_n \leq \|\hat{f}^{DL} - f\|_n + \delta$, and the last inequality follows the Cauchy-Schwartz inequality.
With conditional on the observed covariates $X_1,...,X_n$, $\eta_{j}$ follows a centered Gaussian distribution with its variance $\sigma^2$, hence $\Ep[\eta_{\hat{j}}^2] \leq  \sigma^2\Ep[\max_{j \in [N]}\eta_{j}^2] \leq 3 \log N + 1$ by Lemma C.1 in \citep{schmidt2017nonparametric}.
Then, we have
\begin{align}
    \left| \Ep\left[ \mV\right]  \right| &\leq 2\delta (2 \sigma^2/\pi)^{1/2}+  \frac{2\sigma^2(\Ep[\|\hat{f}^{DL} - f\|_n^2]^{1/2} + \delta)}{\sqrt{n}} (3 \log \mN(\delta, \mG(L,S,B), \|\cdot\|_{L^\infty}) + 1)^{1/2} \notag \\
    &\leq c_N\delta +  \frac{2\sigma^2\Ep[\|\hat{f}^{DL} - f\|_n^2]^{1/2} }{\sqrt{n}} (3 \log \mN(\delta, \mG(L,S,B), \|\cdot\|_{L^\infty}) + 1)^{1/2}, \label{ineq:var9}
\end{align}
where $c_N > 0$ is a constant.
The last inequality with $c_N$ follows $\mN(\delta, \mG(L,S,B), \|\cdot\|_{L^\infty}) / \sqrt{n} = O(1)$.

\textbf{Step (ii). Combine the results.}
We combine the results the bound for $\mB$ by Theorem \ref{thm:approx} and $|\Ep[\mV]|$ in the Step (i), then evaluate $\Ep[\|\hat{f}^{DL} - f^*\|_{L^2(P_X)}]$. 
Combining the bound \eqref{ineq:var9} with \eqref{ineq:basic2} yields that
\begin{align*}
    &\Ep \left[\|\hat{f}^{DL} - f^*\|^2_{n}\right] \\
    &\leq \Ep[\mB] + c_N \delta +  \frac{2\sigma^2\Ep[\|\hat{f}^{DL} - f\|_n^2]^{1/2} }{\sqrt{n}} (3 \log \mN(\delta, \mG(L,S,B), \|\cdot\|_{L^\infty}) + 1)^{1/2}.
\end{align*}
For any $a,b,c \in \R$, $a \leq b + c\sqrt{a}$ implies $a^2 \leq c^2 + 2b$.
Hence, we obtain
\begin{align}
    &\Ep \left[\|\hat{f}^{DL} - f^*\|^2_{n}\right] \notag \\ &\leq 2\Ep[\mB] + 2c_N \delta +  \frac{12 \sigma^2\log \mN(\delta, \mG(L,S,B), \|\cdot\|_{L^\infty}) + 4 }{{n}} \notag \\
    &= 2 \Ep\left[ \|\hat{g} - f^*\|_{n}^2\right]+ 2c_N \delta +  \frac{12\sigma^2\log \mN(\delta, \mG(L,S,B), \|\cdot\|_{L^\infty}) + 4 }{{n}}, \label{ineq:var8}
\end{align}
by following the definition of $\mB$. 
Here, we apply the inequality (I) in the proof of Lemma 4 of \cite{schmidt2017nonparametric} with $\varepsilon = 1$ and apply \eqref{ineq:var8} as
\begin{align*}
     &\Ep \left[\|\hat{f}^{DL} - f^*\|^2_{L^2(P_X)}\right]\\
     &\leq 2 \left\{\Ep\left[ \|\hat{f}^{DL} - f^*\|_n^2\right] + \frac{2 F^2}{n }(12 \log \mN(\delta,\mG(L,S,B),\|\cdot\|_\infty) + 70) + 26 \delta F \right\}\\
     & \leq 4\Ep\left[ \|\hat{g} - f^*\|_{n}^2\right] + (52F + 4c_N) \delta  \\
     & \quad +  \frac{24(\sigma^2+2F^2) \log \mN(\delta, \mG(L,S,B), \|\cdot\|_{L^\infty}) + 8 + 280F^2 }{{n}}.
\end{align*}
Substituting the covering bound in Lemma \ref{lem:entropy_DNN} yields
\begin{align}
    &\Ep\left[\|\hat{f}^{DL} - f^*\|^2_{L^2(P_X)}\right]\notag \\
    &\leq  4 B_P\|\hat{g} - f^*\|_{L^2(I^D)}^2 +\frac{24(\sigma^2+2F^2) S }{n} \left\{  \left(\log (nL B^L (S+1)^L) \vee 1\right) \right\}^2 \notag  \\
    &\quad  + \frac{8 + 2c_N + 26F +  280F^2}{n}, \label{ineq:bound_all_pre}
\end{align}
by setting $\delta = 1/(2n)$.
Here, $p_X$ is a density of $P_X$ and $\sup_{x \in I^D}p_X(x) \leq B_P$ is finite by the assumption.
About the last inequality, we apply the following 
\begin{align}
    \Ep\left[ \|\hat{g} - f^*\|_n^2 \right] 
    &=  \int_{I^D} (\hat{g} - f^*)^2 d \lambda \frac{dP_X}{d\lambda} \leq \|\hat{g} - f^*\|_{L^2(I^D)}^2 \sup_{x \in I^D}p_X(x) \label{ineq:bound_l2p}
\end{align}
by the H\"older's inequality.  

At last, we substitute the result of Theorem \ref{thm:approx}, and then adjust the coefficients.
For $\varepsilon$ in the statement of Theorem \ref{thm:approx}, we set 
\begin{align*}
    \varepsilon_1 = n^{-\beta/(2\beta + D)} ,\mbox{~and~}\varepsilon_2 = M n^{-\alpha/2(\alpha + D-1)} ,
\end{align*}
then, we rewrite the condition of Theorem \ref{thm:approx} as
\begin{align*}
    &L \geq C_{\alpha,\beta,D,F}(1 + \lfloor\alpha \rfloor + \lfloor \beta \rfloor + \log_2 n),\\
    &S \geq C_{\alpha,\beta,D,F,J} (M n^{D/(2\beta + D)}  +  n^{(D-1)/(\alpha + D-1)}) \log^2 n,
\end{align*}
and $B \geq C_{F,M,q} n^{C_{\alpha,\beta,D}}$.
Then, substitute them into \eqref{ineq:bound_all_pre} and obtain
\begin{align*}
    &\Ep \left[\|\hat{f}^{DL} - f^*\|^2_{L^2(P_X)} \right]\\
    &\leq  4B_P( n^{-2\beta/(2\beta + D)} +  n^{-\alpha/(\alpha + D - 1)})\\
    & +\frac{C_{\alpha,\beta,D,F,J}  (\sigma^2 + F^2)  }{n}( Mn^{D/(2\beta + D)} +  M n^{(D - 1)/(\alpha + D - 1)})  C_{\alpha,\beta,D,F} \log^2 (n)   + \frac{C_F}{n}\\
    &\leq C_{\sigma,\alpha,\beta,D,F,J,P_X}  ( M n^{-2\beta/(2\beta + D)} +  M n^{-\alpha/(\alpha + D - 1)})\log^2n  + \frac{C_F}{n}.
\end{align*}
Then, we adjust the coefficients and obtain the statement.
\end{proof}

We provide a lemma which provides an upper bound for a covering number of $\mG(L,S,B)$.
Although similar results are well studied in several studies \citep{anthony2009neural,schmidt2017nonparametric}, we cite the following result in \cite{nakada2019adaptive}, which is more suitable for our result.
\begin{lemma}[Covering Bound: Lemma 22 in \cite{nakada2019adaptive}] \label{lem:entropy_DNN}
    For any $\varepsilon > 0$, we have
    \begin{align*}
        \log \mN(\varepsilon, \mG(L,S,B), \|\cdot \|_{L^\infty(I^D)}) \leq S \log \left( \frac{2 LB^L (S+1)^L}{\varepsilon} \right).
    \end{align*}
\end{lemma}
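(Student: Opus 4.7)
The plan is to prove the covering bound by a standard two-step argument: (i) establish a Lipschitz-type stability of the realization map $\Theta \mapsto g_\Theta$ from parameters to functions, then (ii) cover the parameter space by a grid and push that cover forward to $\mG(L,S,B)$.

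For step (i), fix two parameter tuples $\Theta = \{(A_\ell,b_\ell)\}_{\ell \in [L]}$ and $\Theta' = \{(A'_\ell,b'_\ell)\}_{\ell \in [L]}$, each giving a network in $\mG(L,S,B)$, and write $g_\ell$ and $g'_\ell$ for their $\ell$-th layer outputs on input $x \in I^D$. I would show inductively that
\begin{align*}
\|g_\ell(x) - g'_\ell(x)\|_\infty \leq C_\ell \, \|\Theta - \Theta'\|_\infty,
\end{align*}
where $C_\ell$ grows like $(B(S+1))^\ell$. The inductive step uses that $\eta$ is Lipschitz (which follows from Assumption \ref{asmp:activation} in either case, since bounded derivative gives smoothness and the piecewise-linear case is trivially $\max\{c_1,|c_2|\}$-Lipschitz), combined with the bound $\|A_\ell z\|_\infty \leq B (S+1) \|z\|_\infty$ for matrices with at most $S+1$ nonzero entries per row of absolute value at most $B$ (which holds because the sparsity constraint $S(g)\leq S$ forces each layer's width and each row's support to be $\leq S+1$). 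Adding the contributions from $\|A_\ell - A'_\ell\|_\infty$ and $\|b_\ell - b'_\ell\|_\infty$ along with the propagated perturbation from the previous layer yields $C_\ell \leq L(B(S+1))^{L-1}$ at the top, so that
\begin{align*}
\|g_\Theta - g_{\Theta'}\|_{L^\infty(I^D)} \leq L\,B^{L-1}(S+1)^{L-1}\|\Theta - \Theta'\|_\infty.
\end{align*}

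For step (ii), discretize each coordinate of each parameter on a grid of spacing $\delta := \varepsilon / (L B^{L-1}(S+1)^{L-1})$ inside $[-B,B]$, giving $\lceil 2B/\delta\rceil + 1 \leq 2LB^L(S+1)^{L-1}/\varepsilon$ values per coordinate. Because $g \in \mG(L,S,B)$ has at most $S$ nonzero parameters, the choice of a network is determined by picking those $S$ coordinate locations and then choosing grid values for them; crucially, this can be encoded uniformly by viewing each of the $S$ nonzero entries as taking one of at most $2LB^L(S+1)^L/\varepsilon$ possible (location, value) tokens (an extra factor of $S+1$ or so for the position absorbs the combinatorial choice), so the total cardinality of the $\varepsilon$-cover is at most $(2LB^L(S+1)^L/\varepsilon)^S$. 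Taking logs yields the stated bound.

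The main obstacle is making the induction for the Lipschitz bound clean while correctly absorbing the sparsity constraint into the width factor $(S+1)$ rather than into a combinatorial factor outside the logarithm; one has to be careful that $\|A_\ell\|_{\infty \to \infty}$ is controlled by $B$ times the per-row sparsity rather than by $B$ times the ambient width $D_\ell$, and that the location-choice overhead for the $S$ nonzero parameters is swallowed by the extra $(S+1)$ factor inside the log rather than appearing as a separate binomial coefficient. Once those bookkeeping issues are handled, the remaining calculation is routine.
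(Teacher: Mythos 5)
The paper does not prove this lemma; it simply cites it as Lemma 22 of \cite{nakada2019adaptive}, so there is no in-paper proof to compare against. That said, your two-step strategy (Lipschitz stability of the parameterization map, then a grid cover of the at-most-$S$ active parameters) is exactly the standard route used in \cite{nakada2019adaptive} and in Lemma 5 of \cite{schmidt2017nonparametric}, and the main ingredients you identify --- $\eta$ Lipschitz in both cases of Assumption \ref{asmp:activation}, $\|A_\ell\|_{\infty\to\infty}\leq B(S+1)$ via the sparsity constraint, the resulting Lipschitz factor of order $LB^{L-1}(S+1)^{L-1}$, and the WLOG reduction that each intermediate width can be taken $\leq S+1$ --- are all the right ones. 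Two points need tightening before this would actually establish the stated bound. First, in the Lipschitz induction you also need a growth bound $\|g'_{\ell-1}\|_\infty\lesssim (B(S+1))^{\ell-1}$ on the intermediate layers (the constraint $\|g\|_{L^\infty}\leq F$ only controls the output, not the hidden activations); this is true but should be carried explicitly since it is what supplies the $(S+1)^{L-1}$ factor. Second, and more substantively, the parenthetical ``an extra factor of $S+1$ or so for the position absorbs the combinatorial choice'' is not justified as written: after the width reduction there are roughly $L(S+1)^2$ candidate parameter slots, so the location choice contributes $\log\binom{L(S+1)^2}{S}\approx S\log\bigl(eL(S+1)^2/S\bigr)$, which is of the same order as the grid term but is not obviously swallowed by a single $(S+1)$ inside the existing logarithm. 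Carrying it through honestly gives a bound of the form $S\log\bigl(C_L\,B^L(S+1)^L/\varepsilon\bigr)$ with $C_L$ growing polynomially in $L$, which is harmless for the paper's use (it disappears into the $\log n$ factors) but does not reproduce the clean constant $2L$ in the stated inequality. You would either need a sharper encoding of the sparse support, or simply state the weaker constant and observe it suffices downstream.
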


\section{Proof of Theorem \ref{thm:minimax}}

We first provide a proof of Proposition \ref{prop:packing_subset}, and then prove Theorem \ref{thm:minimax} by applying Theorem \ref{thm:yangbarron}.

\begin{proof}[Proof of Proposition \ref{prop:packing_subset}]
We give an upper bound and lower bound separately.

\textbf{ (i) Upper bound}:
First, we bound the packing number by a covering number as
\begin{align*}
    \log \mM(\varepsilon, H^\beta(I^D) \otimes \mI_\alpha, \|\cdot \|_{L^2(I^D)}) \leq \log \mN(\varepsilon/2, H^\beta(I^D) \otimes \mI_\alpha, \|\cdot \|_{L^2(I^D)}),
\end{align*}
by Section 2.2 in \cite{van1996weak}.

Further, we decompose the covering number of $H^\beta(I^D) \otimes \mI_\alpha$.
Let $\{f_j\}_{j=1}^{N_1} \subset H^\beta(I^D)$ be a set of centers of covering points with $N_1 = \mN(\varepsilon/2, H^\beta(I^D), \|\cdot \|_{L^2(I^D)})$, and $\{\iota_j\}_{j=1}^{N_2} \subset \mI_\alpha$ be centers of $\mI_\alpha$ as $N_2 = \mN(\varepsilon/2, \mI_\alpha, \|\cdot \|_{L^2(I^D)})$.
Then, we consider a set of points $\{f_j \otimes \iota_{j'}\}_{j,j'=1}^{N_1, N_2} \subset H^\beta(I^D) \otimes \mI_\alpha$ whose cardinality is $N_1N_2$.
Then, for any element $f \otimes \mone_R \in H^\beta(I^D) \otimes \mI_\alpha$, there exists $f_j \in H^\alpha(I^D)$ and $\iota_{j'}$ such as $\|f-f_j\|_{L^2(I^D)} \vee \|\mone_R-\iota_{j'}\|_{L^2(I^D)} \leq \varepsilon/2$ due to the property of covering centers.
Also, we can obtain
\begin{align*}
    &\|f \otimes \mone_R - f_j \otimes  \iota_{j'}\|_{L^2(I^D)} \\
    &\leq \|f \otimes (\mone_R - \iota_{j'}) \|_{L^2(I^D)} +  \|(f - f_j) \otimes  \iota_{j'} \|_{L^2(I^D)} \\
    & \leq \|f\|_{L^\infty(I^D)}\| \mone_R - \iota_{j'} \|_{L^2(I^D)} + \|f - f_j \|_{L^2(I^D)}\| \iota_{j'} \|_{L^\infty(I^D)} \\
    & \leq \frac{(F +1) \varepsilon}{2},
\end{align*}
where the second inequality follows the H\"older's inequality.
By this result, we find that the set $\{f_j \otimes \iota_{j'}\}_{j,j'=1}^{N_1, N_2}$ is a $(F+1)\varepsilon/2$ covering set of $H^\beta(I^D) \otimes \mI_\alpha$.
Hence, we obtain
\begin{align}
    &\log \mM((F+1)\varepsilon/2, H^\beta(I^D) \otimes \mI_\alpha, \|\cdot \|_{L^2(I^D)}) \notag \\
    & \leq \log \mM(\varepsilon/2, H^\beta(I^D) , \|\cdot \|_{L^2(I^D)}) + \log \mM(\varepsilon/2, \mI_\alpha , \|\cdot \|_{L^2(I^D)}). \label{ineq:ent_joint}
\end{align}

We will bound the two entropy terms for $H^\beta(I^D)$ and $\mI_\alpha$.
For $H^\beta(I^D)$, Theorem 8.4 in \citep{dudley2014uniform} provides
\begin{align}
    \log \mN(\varepsilon/2, H^{\beta}(I^D),\|\cdot\|_{L^2(I^D)}) &\leq \log \mN(\varepsilon/2, \notag  H^{\beta}(I^D),\|\cdot\|_{L^\infty(I^D)}) \\
    &= C_H (\varepsilon/2)^{-D / \beta}, \label{ineq:ent_h}
\end{align}
with a constant $C_H > 0$.
About the covering number of $\mI_{\alpha}$, we use the relation
\begin{align*}
    \|\mone_R - \mone_{R'}\|_{L^2}^2 &= \int_{I^D} (\mone_R(x) - \mone_{R'}(x))^2 dx = \int |\mone_R(x) - \mone_{R'}(x)| dx \\
    &=\int_{I^D}(\mone_{R\cup R'}(x) - \mone_{R' \cap R'}(x))  dx =: d_1(R,R'),
\end{align*}
for basic sets $R,R' \subset I^D$, and $d_1$ is a difference distance with a Lebesgue measure for sets by \cite{dudley1974metric}.
Here, we consider a \textit{boundary fragment class} $\Tilde{\mR}_\alpha$ defined by \cite{dudley1974metric}, which is a set of subset of $I^D$ with $\alpha$-smooth boundaries.
Since $R \subset I^D$ such that $\mone_R \in \mI_\alpha$ is a basis piece, we can see $R \in \Tilde{\mR}_\alpha$.
Hence, we obtain
\begin{align}
    &\log \mM(\varepsilon/2, \mI_\alpha , \|\cdot \|_{L^2(I^D)}) \notag \\
    &=\log \mN(\varepsilon^2/16, \{R \subset I^D \mid \mone_R \in \mI_\alpha\}, d_1) \notag \\
    &\leq \log \mN(\varepsilon^2/16, \Tilde{\mR}_\alpha, d_1)\notag \\
    &= C_{\lambda} (\varepsilon/2)^{-2(D-1)/\alpha}, \label{ineq:ent_i}
\end{align}
with a constant $C_\lambda > 0$.
Here, the last equality follows Theorem 3.1 in \citep{dudley1974metric}.

Combining \eqref{ineq:ent_h} and \eqref{ineq:ent_i} with \eqref{ineq:ent_joint}, we obtain
\begin{align*}
    &\log \mM((F+1)\varepsilon/2, H^\beta(I^D) \otimes \mI_\alpha, \|\cdot \|_{L^2(I^D)}) \\
    &\leq C_H (\varepsilon/2)^{-D / \beta} + C_{\lambda} (\varepsilon/2)^{-2(D-1)/\alpha}.
\end{align*}
Adjusting the coefficients yields the upper bound.

\textbf{(ii) Lower bound}:
We provide the lower bound by evaluating the packing number $ \log \mM(\varepsilon, H^\beta(I^D) \otimes \mI_\alpha, \|\cdot \|_{L^2(I^D)})$ directory.
Let $1(x)$ be a constant function $1(x) := 1, \forall x \in I^D$.

Here, we claim $ \mI_\alpha \subset H^\beta(I^D) \otimes \mI_\alpha$, because $1 \in H^\beta(I^D)$ yields that $H^\beta(I^D) \otimes \mI_\alpha \supset  \{1\} \otimes \mI_\alpha = \mI_\alpha$.
Hence, we have
\begin{align}
     \log \mM(\varepsilon, H^\beta(I^D) \otimes \mI_\alpha, \|\cdot \|_{L^2(I^D)}) &\geq  \log \mM(\varepsilon, \mI_\alpha, \|\cdot \|_{L^2(I^D)}) \notag \\
     &= C_{\lambda} \varepsilon^{-2(D-1)/\alpha}, \label{ineq:ent_low1}
\end{align}
by Theorem 3.1 in \cite{dudley1974metric}.
Similarly, $1 = \mone_{I^D} \in \mI_\alpha$ yields $ H^\beta(I^D) \subset H^\beta(I^D) \otimes \mI_\alpha$.
Hence, we achieve
\begin{align}
     \log \mM(\varepsilon, H^\beta(I^D) \otimes \mI_\alpha, \|\cdot \|_{L^2(I^D)}) &\geq  \log \mM(\varepsilon,H^\beta(I^D), \|\cdot \|_{L^2(I^D)}) \notag \\
     &= C_H \varepsilon^{-D / \beta}, \label{ineq:ent_low2}
\end{align}
by Theorem 8.4 in \cite{dudley2014uniform}.
Combining \eqref{ineq:ent_low1} and \eqref{ineq:ent_low2}, we obtain
\begin{align*}
    &\log \mM(\varepsilon, H^\beta(I^D) \otimes \mI_\alpha, \|\cdot \|_{L^2(I^D)}) \geq \max\{ C_H \varepsilon^{-D / \beta},C_{\lambda} \varepsilon^{-2(D-1)/\alpha} \}.
\end{align*}
Adjusting the coefficients, we obtain the statement.
\end{proof}

Now, we are ready to prove Theorem \ref{thm:minimax}.

\begin{proof}[Proof of Theorem \ref{thm:minimax}]
In this proof, we develop a lower bound of
\begin{align*}
    \inf_{\Bar{f}}\sup_{f^* \in  H^\beta(I^D) \otimes \mI_\alpha} \Ep_{f^*} \left[ \|\bar{f} - f^*\|_{L^2(P_X)}^2\right],
\end{align*}
where $\Bar{f}$ is any estimator. Then the statement is immediate because of the following inequality
\begin{align}
    \sup_{f^* \in \mathcal{\mF}_{\alpha,\beta,M}^{PS}} \Ep_{f^*} \left[ \|\bar{f} - f^*\|_{L^2(P_X)}^2\right] \geq \sup_{f^* \in  H^\beta(I^D) \otimes \mI_\alpha} \Ep_{f^*} \left[ \|\bar{f} - f^*\|_{L^2(P_X)}^2\right] \label{ineq:minimax_comp}
\end{align}
due to $H^\beta(I^D) \otimes \mI_\alpha \subset \mathcal{\mF}_{\alpha,\beta,M}^{PS}$.

To develop the lower bound with $H^\beta(I^D) \otimes \mI_\alpha$, we apply Theorem \ref{thm:yangbarron}.
Let $\varepsilon_n^*$ be a sequence for $n \in \N$ which satisfies
\begin{align*}
    (\varepsilon_n^*)^2 = \log \mM(\varepsilon_n^*, H^\beta(I^D) \otimes \mI_\alpha, \|\cdot\|_{L^2(I^D)})/n.
\end{align*}
From Proposition \ref{prop:packing_subset}, we obtain
\begin{align*}
     (\varepsilon_n^*)^2 = \Theta\left((\varepsilon_n^*)^{-D/\beta} + (\varepsilon_n^*)^{-2(D-1)/\alpha}\right)/n.
\end{align*}
Solving this equation gives 
\begin{align}
    (\varepsilon_n^*)^2 = \Theta\left( n^{-2\beta/(2\beta + D)} + n^{-\alpha/(\alpha + D-1)}\right). \label{eq:eps}
\end{align}
Application of Theorem \ref{thm:yangbarron} with  \eqref{eq:eps} derives
\begin{align*}
    \inf_{\Bar{f}}\sup_{f^* \in  H^\beta(I^D) \otimes \mI_\alpha} \Ep_{f^*} \left[ \|\bar{f} - f^*\|_{L^2(P_X)}^2\right] &= \Theta(\varepsilon_n^2) \\
    &=\Theta\left( n^{-2\beta/(2\beta + D)} + n^{-\alpha/(\alpha + D-1)}\right),
\end{align*}
which with \eqref{ineq:minimax_comp} yields the claim.
\end{proof}

\section{Approximation Results of DNNs with General Admissible Activation}

First, we show that the activation functions with Assumption \ref{asmp:activation} is suitable for DNNs to approximate polynomial functions, including an identity function.

\begin{lemma} \label{lem:approx_poly1}
    Suppose $\eta$ satisfies the condition $(i)$ in Assumption \ref{asmp:activation}.
    Then, for $\gamma \in \N \cup \{0\}$ with $\gamma \leq N+1$, any $\varepsilon > 0$ and $T>0$, there exists a tuple $(L,S,B)$ such that
    \begin{itemize}
        \item $L=2$,
        \item $S=3(\gamma + 1)$,
        \item $B=C_{\gamma,T} \varepsilon^{-C_\gamma}$,
    \end{itemize}
    and it satisfies
    \begin{align*}
        \inf_{g \in \mG( L, S,B )} \|g-(x \mapsto x^\gamma)\|_{L^\infty([-T,T])} \leq  \varepsilon.
    \end{align*}
\end{lemma}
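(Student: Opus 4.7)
The plan is to use a classical finite-difference construction that turns the Taylor expansion of $\eta$ at a good point into an exact polynomial of degree $\gamma$, plus a controllable remainder. The case $\gamma=0$ is trivial (one neuron whose input weight is zero and whose bias is $x'$, rescaled by $1/\eta(x')$ at the output, or simply an output bias equal to $1$), so assume $\gamma\ge 1$. By Assumption \ref{asmp:activation}(i) there exists $x'\in\R$ such that $|\eta^{(j)}(x')|\ge c_\eta$ for $j=1,\dots,N$, and $\eta^{(N+1)}$ exists and is bounded on all of $\R$. Fix such an $x'$ and fix $\gamma\le N+1$.

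The key computation is Taylor's theorem applied to $\eta$ at $x'$ with increment $ktx$ for $k=0,1,\dots,\gamma$ and $x\in[-T,T]$:
\begin{align*}
 \eta(x' + ktx) = \sum_{j=0}^{\gamma} \frac{(ktx)^j}{j!}\eta^{(j)}(x') + R_k(x,t),
\end{align*}
where $|R_k(x,t)| \le C_\eta (k T |t|)^{\gamma+1}$ and $C_\eta$ depends only on $\|\eta^{(\gamma+1)}\|_\infty$. Multiplying by the finite-difference weights $(-1)^{\gamma-k}\binom{\gamma}{k}$ and summing over $k=0,\dots,\gamma$, the well-known identity
\begin{align*}
 \sum_{k=0}^{\gamma}(-1)^{\gamma-k}\binom{\gamma}{k} k^j = \gamma!\,\delta_{j,\gamma} \quad (0 \le j \le \gamma)
\end{align*}
annihilates every Taylor term of degree $j<\gamma$ and extracts exactly $\gamma!(tx)^\gamma \eta^{(\gamma)}(x')/\gamma! = (tx)^\gamma \eta^{(\gamma)}(x')$ for $j=\gamma$. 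Dividing through by $t^\gamma \eta^{(\gamma)}(x')$, which is nonzero by the assumption, I define
\begin{align*}
 g(x) := \frac{1}{t^\gamma\,\eta^{(\gamma)}(x')}\sum_{k=0}^{\gamma}(-1)^{\gamma-k}\binom{\gamma}{k}\,\eta(ktx+x'),
\end{align*}
which satisfies $|g(x) - x^\gamma| \le C'_{\gamma,T}\,|t|$ uniformly in $x\in[-T,T]$, where $C'_{\gamma,T}$ collects $c_\eta^{-1}$, $\|\eta^{(\gamma+1)}\|_\infty$, binomial coefficients, and $T^{\gamma+1}$. Choosing $t = c_{\gamma,T}\,\varepsilon$ makes this error at most $\varepsilon$.

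It remains to realize $g$ as an element of $\mG(L,S,B)$ with the claimed budgets. The function $g$ is implemented by a depth-$2$ network: the hidden layer has $\gamma+1$ units with input weights $kt$ and biases $x'$ ($k=0,\dots,\gamma$), and the output layer applies the weights $(-1)^{\gamma-k}\binom{\gamma}{k}/(t^\gamma\eta^{(\gamma)}(x'))$ with zero bias. Counting nonzero parameters gives at most $(\gamma)+(\gamma+1)+(\gamma+1) = 3\gamma+2 \le 3(\gamma+1)$, so $S = 3(\gamma+1)$ suffices. Each parameter has magnitude at most $\max\{\gamma T,|x'|,\binom{\gamma}{\gamma/2}/(|t|^\gamma c_\eta)\} \le C_{\gamma,T}\,\varepsilon^{-\gamma}$, which is absorbed into a bound of the form $C_{\gamma,T}\,\varepsilon^{-C_\gamma}$. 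Finally, one must check that the resulting $g$ lies in the admissible set, i.e.\ $\|g\|_{L^\infty(I^D)}\le F$; this is automatic because $g$ agrees with $x^\gamma$ up to error $\varepsilon\le 1$ on $[-T,T]$, and $|x^\gamma|\le T^\gamma$ there, so we absorb this into the constants in $F$ (or threshold/clip by $F$, incurring no additional error for $T^\gamma + 1 \le F$).

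I expect no serious obstacle; the only mild care point is the finite-difference identity together with a clean accounting of the parameter magnitude $1/t^\gamma$ versus the final $\varepsilon$-dependence, and the verification that the assumption $|\eta^{(\gamma)}(x')|\ge c_\eta$ supplies exactly the nonvanishing needed to divide by $t^\gamma\eta^{(\gamma)}(x')$. Boundedness of $\eta^{(\gamma+1)}$, also granted by Assumption \ref{asmp:activation}(i) for $\gamma\le N$, controls the Taylor remainder uniformly on $[-T,T]$.
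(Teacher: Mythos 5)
Your proposal is correct and follows essentially the same route as the paper: both build a depth-$2$, width-$(\gamma+1)$ network that implements a $\gamma$-th finite difference of $\eta$ at a scale $t\to 0$, use the identity $\sum_{k=0}^{\gamma}(-1)^{\gamma-k}\binom{\gamma}{k}k^j=\gamma!\,\delta_{j,\gamma}$ (which the paper phrases via Stirling numbers of the second kind) to kill the lower-order Taylor terms and isolate $x^\gamma\,\partial^\gamma\eta(x')$, and then divide by $t^\gamma\,\partial^\gamma\eta(x')$, which is nonvanishing by the hypothesis on $x'$. Your $t$ corresponds to the paper's $1/\bar a$, and the remainder bookkeeping, parameter count $3\gamma+2\le 3(\gamma+1)$, and the $B\asymp t^{-\gamma}\asymp\varepsilon^{-\gamma}$ weight-magnitude estimate all match the paper.
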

\begin{proof}[Proof of Lemma \ref{lem:approx_poly1}]
Consider the following neural network with one layer:
\begin{align}
    g_{sp}(x) := \sum_{j=1}^{\gamma + 1} a_{2,j} \eta (a_{1,j} x + b_j).\label{eq:one-layer}
\end{align}
Since $\eta$ is $N$-times continuously differentiable by the condition $(i)$ in Assumption \ref{asmp:activation}, we set $b_j = x'$ for $j=1,...,\gamma+1$ and consider the Taylor expansion of $\eta$ around $b_j = x'$.
Then, for $j=1,...,\gamma+1$, we obtain
\begin{align*}
    \eta (a_{1,j}x + b_j) = \sum_{k=0}^\gamma \frac{\partial^k \eta(x') a_{1,j}^k x^k}{k!} + \frac{\partial^{\gamma+1} \eta(\Bar{x}) a_{1,j}^{\gamma+1}x^{\gamma + 1}}{(\gamma+1)!},
\end{align*}
with some $\Bar{x}$.
We substitute it into \eqref{eq:one-layer} and obtain
\begin{align}
    &\sum_{j=1}^{\gamma + 1} a_{2,j} \left( \sum_{k=0}^\gamma \frac{\partial^k \eta(x') a_{1,j}^k x^k}{k!} + \frac{\partial^{\gamma+1} \eta(\Bar{x}) a_{1,j}^{\gamma+1}x^{\gamma + 1}}{(\gamma+1)!} \right) \notag \\
    &= \sum_{k=0}^\gamma  \frac{\partial^k \eta(x')x^k}{k!} \sum_{j'=0}^\gamma a_{2,j'+1} a_{1,j'+1}^k + \sum_{j=1}^{\gamma + 1} a_{2,j} \frac{\partial^{\gamma+1} \eta(\Bar{x}) a_{1,j}^{\gamma+1}x^{\gamma + 1}}{(\gamma+1)!} . \label{eq:one-layer2}
\end{align}
For each $j'=0,...,\gamma$, we set $a_{1,j'+1} = j'/\bar{a}$ with $\Bar{a} > 0$ and $a_{2,j'+1} = (-1)^{-\gamma + j' } \bar{a}^\gamma \binom{\gamma}{j'} / \partial^{\gamma}\eta(x')$.
Note that $\partial^\gamma \eta(x') > 0$ holds by Assumption \ref{asmp:activation}.
Then, we obtain the following equality:
\begin{align*}
    \sum_{j'=0}^\gamma a_{2,j'+1} a_{1,j'+1}^k& =  \sum_{j'=0}^\gamma (-1)^{-\gamma + j'} \bar{a}^{\gamma-k} \binom{\gamma}{j'}\frac{{j'}^k}{\partial^{\gamma}\eta(x')}\\
    &= \frac{\bar{a}^{\gamma-k}}{(-1)^{\gamma }\partial^{\gamma}\eta(x')}\sum_{j'=0}^\gamma (-1)^{ j'}  \binom{\gamma}{j'}{j'}^k\\
    &=  
    \begin{cases}
        \frac{\bar{a}^{\gamma-k}}{(-1)^{\gamma }\partial^{\gamma}\eta(x')} \gamma!~(-1)^\gamma,&\mbox{~if~}k=\gamma\\
        0&\mbox{~otherwise},
    \end{cases}
\end{align*}
where the last equality follows the Stirling number of the second kind (described in \citep{graham1989concrete}).
Substituting it for \eqref{eq:one-layer2} yields
\begin{align*}
    \sum_{j=1}^{\gamma + 1} a_{2,j} \eta (a_{1,j} x + b_j)&=x^\gamma  +  \sum_{j=1}^{\gamma + 1} \frac{(-1)^{-\gamma + j-1 } \partial^{\gamma+1} \eta(\Bar{x}) (j-1)^{\gamma + 1} x^{\gamma + 1} }{ \partial^\gamma \eta(x')\bar{a} (\gamma+1)!}  \binom{\gamma}{j-1}\\
    &=x^\gamma  + \frac{ \partial^{\gamma+1} \eta(\Bar{x}) x^{\gamma + 1}}{\partial^\gamma \eta(x')\bar{a} (\gamma+1)!} \sum_{j=1}^{\gamma + 1} (-1)^{-\gamma + j-1 } (j-1)^{\gamma + 1}  \binom{\gamma}{j-1}.
\end{align*}
Regarding the second term, we obtain
\begin{align*}
    &\left|   \frac{ \partial^{\gamma+1} \eta(\Bar{x}) x^{\gamma + 1}}{\partial^\gamma \eta(x')\bar{a} (\gamma+1)!} \sum_{j=1}^{\gamma + 1} (-1)^{-\gamma + j-1 } (j-1)^{\gamma + 1}  \binom{\gamma}{j-1}\right|\\
    &\leq \frac{ T^{\gamma + 1}}{\Bar{a} c_\eta} \|\partial^{\gamma + 1} \eta\|_{L^\infty([-B,B])} \gamma^{\gamma + 1}(1+e^\gamma) =: \frac{C_{T,\gamma,\eta}}{\Bar{a}}.
\end{align*}
As setting $\Bar{a} = C_{T,\gamma,\eta}/\varepsilon$, we obtain the approximation with $\varepsilon$-error.
From the result, we know that $g_{sp} \in \mG(2,3(\gamma + 1), C_{T,\gamma} \varepsilon^{-C_{\gamma}})$.
\end{proof}

\begin{lemma} \label{lem:approx_poly2}
    Suppose $\eta$ satisfies the condition $(ii)$ in Assumption \ref{asmp:activation}.
    Then, for $\gamma \in \N \cup \{0\}$ with $\gamma \leq N+1$, any $\varepsilon > 0$ and $T>0$, there exists a tuple $(L,S,B)$ such that
    \begin{itemize}
        \item $L=(\gamma + 1)(\log_2(C_{\gamma,T}/\varepsilon)/2+1)$,
        \item $S=C_{\gamma,T} ((\log_2(1/\varepsilon))^2 + \log_2(1/\varepsilon))$,
        \item $B=C_{\gamma,T} \varepsilon^{-C_\gamma}$,
    \end{itemize}
    and it satisfies
    \begin{align*}
        \inf_{g \in \mG( L, S,B )} \|g-(x \mapsto x^\gamma)\|_{L^\infty([-T,T])} \leq  \varepsilon.
    \end{align*}
\end{lemma}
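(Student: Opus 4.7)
The plan is to reduce to a Yarotsky-style polynomial approximation with the ReLU activation, then transfer the construction to the general piecewise-linear activation $\eta$ of condition (ii) in Assumption \ref{asmp:activation}. Since $\eta(x) = c_1 x$ for $x \geq 0$ and $\eta(x) = c_2 x$ for $x < 0$ with $c_1 > c_2 \geq 0$, the ReLU $\rho(x) := \max\{x, 0\}$ is exactly representable as $\rho(x) = (\eta(x) - c_2 x)/(c_1 - c_2)$, and the identity map $x \mapsto x$ is likewise exactly representable as $\rho(x) - \rho(-x)$. Consequently, any ReLU-DNN can be re-expressed as an $\eta$-DNN of the same depth with at most a constant-factor blow-up in the number of nonzero weights, so it suffices to build the polynomial approximation with ReLU.

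For the core construction I would first invoke Yarotsky's sawtooth approximation of $x \mapsto x^2$: with the tent map $g(t) = 2\min\{t, 1-t\}$ and its $k$-fold iterate $g_k$, the function $\hat q_m(x) := x - \sum_{k=1}^m g_k(x)/4^k$ satisfies $\|\hat q_m - (x \mapsto x^2)\|_{L^\infty([0,1])} \leq 4^{-m-1}$ and is implemented by a ReLU-DNN of depth $O(m)$ with $O(m)$ nonzero weights. Extending to multiplication via the polarization identity $xy = \tfrac{1}{2}\bigl((x+y)^2 - x^2 - y^2\bigr)$, after affine rescaling of $(x,y) \in [-T,T]^2$ into $[0,1]$, produces a network approximating $(x,y) \mapsto xy$ on $[-T,T]^2$ to accuracy $\delta$ with depth and parameter count both of order $C_T \log_2(1/\delta)$, and maximal weight magnitude of order $\delta^{-C}$ coming from the $4^m$ factors in the sawtooth expansion.

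To obtain $x^\gamma$ I would iterate this approximate multiplication $\gamma - 1$ times: set $y_1 := x$ and let $y_{k+1}$ be the network output approximating $y_k \cdot x$, so that $y_\gamma \approx x^\gamma$. Since each intermediate $y_k$ lies in $[-T^\gamma, T^\gamma]$, an induction on $k$ using the local Lipschitz constant of multiplication on this range shows that choosing the per-stage tolerance $\delta = \varepsilon / C_{\gamma,T}$ yields $|y_\gamma - x^\gamma| \leq \varepsilon$. Stacking the $\gamma - 1$ subnetworks in series and converting back to $\eta$ yields the depth bound $L = (\gamma+1)(\log_2(C_{\gamma,T}/\varepsilon)/2 + 1)$ and the parameter bound $S = C_{\gamma,T}\bigl((\log_2(1/\varepsilon))^2 + \log_2(1/\varepsilon)\bigr)$; the quadratic logarithmic factor arises from the need to carry $x$ through the $\gamma - 1$ successive squaring blocks, each of depth $O(\log_2(1/\varepsilon))$ and equipped with a fresh copy of the sawtooth hierarchy, while the weight bound $B = C_{\gamma,T}\varepsilon^{-C_\gamma}$ follows from the compounded $4^k$ scalings and the affine rescalings.

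The main obstacle will be the error-propagation bookkeeping across the chained multiplications: naive chaining amplifies per-stage errors by factors growing like $T^\gamma$, so the per-stage tolerance $\delta$ must be calibrated carefully as a function of $(\varepsilon, \gamma, T)$ while simultaneously keeping the depth and parameter counts within the prescribed bounds. A secondary technical point is tracking the skip paths that carry $x$ (and the intermediate $y_k$) forward through many layers, since each ReLU-layer that such a value traverses requires an additional split $y = \rho(y) - \rho(-y)$ with its own weights, which is what produces the $(\log_2(1/\varepsilon))^2$ contribution to $S$ rather than a single $\log_2(1/\varepsilon)$ factor.
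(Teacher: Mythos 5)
Your proposal is correct and takes essentially the same route as the paper: Yarotsky's sawtooth approximation of $x^2$, the polarization identity to approximate multiplication, and iterated multiplication to build $x^\gamma$, with the same depth/size/weight scaling. The only presentational difference is that you reduce to ReLU as an intermediate step (note that $\rho(x)=(\eta(x)-c_2 x)/(c_1-c_2)$ needs a second copy $\eta(-x)$ to synthesize the linear term $c_2 x$, so ReLU costs two $\eta$-units rather than one), whereas the paper constructs the teeth function, absolute value, and identity directly as linear combinations of shifted copies of $\eta$ — the same content, without the detour.
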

\begin{proof}[Proof of Lemma \ref{lem:approx_poly2}]
As a preparation, we construct a \textit{saw-tooth function} by \cite{yarotsky2017error} with our Assumption \ref{asmp:activation}.
Let us define a \textit{teeth function} $g_w : [0,1] \to [0,1]$ by a difference of two $\eta$ as
\begin{align*}
    g_w(x) &:= \frac{2 c_2 + 2 c_1}{c_2(c_2 - c_1)} \eta(x) - \frac{4}{ c_2 - c_1} \eta(x-1/2) - \frac{2 \overline{c}}{ c_2 - c_1} \\
    &= 
    \begin{cases}
    2x, & \mbox{~if~}x \in [0,1/2],\\
    -2x + 2, & \mbox{~if~}x \in [1/2,1].
    \end{cases}
\end{align*}
Then, we consider the $t$-hold composition of $g_w \in \mG(2,6,c_w)$ with $c_w > 0$ as $g_t = g_w \circ \cdots \circ g_w \in \mG(t+1, 3(t+1), c_w )$ which satisfies
\begin{align*}
    g_t(x) = 
    \begin{cases}
        2^t(x-2k2^{-t}), \mbox{~if~} x \in [2k 2^{-t}, (2k+1)2^{-t}], k = 0,1,...,2^{t-1}-1\\
        -2^t(x-2k2^{-t}), \mbox{~if~} x \in [(2k-1) 2^{-t}, 2k2^{-t}], k = 1,...,2^{t-1}.
    \end{cases}
\end{align*}
Here, the domain $[0,1]$ of $g_t$ is divided into $2^{t+1}$ sub-intervals.

Then, we approximate a quadratic function by a linear sum of $g_t$.
For $m \in \N$, We define a function $g_m:[0,1] \to [0,1]$ as
\begin{align*}
    g_m(x) := x-\sum_{t=1}^m \frac{g_t(x)}{2^{2t}},
\end{align*}
and it satisfies $\|g_m - (x \mapsto x^2)\|_{L^\infty([0,1])} \leq 2^{-2-2m}$ for $g_m \in \mG(m+1, 3m^2/2 + 5m/2 + 1,c_m)$ with $c_m > 0$, by the similar way in Proposition 3 in \citep{yarotsky2017error}.

Further, we approximate a multiplicative function by $f_m$, intuitively, we represent a multiplication by a sum of a quadratic function as $xx' = \{|x+x'|^2 - |x|^2 - |x'|^2\} /2$.
To the aim, we define an absolute function $g_a:[-1,1] \to [0,1]$ by DNNs as
\begin{align*}
    g_a(x) := (\eta(x)-\eta(-x))/(c_2 + c_1) = |x|.
\end{align*}
Then, we define $g_c:[-T,T]\times [-T,T] \to \R$ as
\begin{align}
    g_c(x,x') := \frac{T^2}{2} \left\{  g_m \circ g_a ((x + x')/T) - g_m \circ g_a (x/T) - g_m \circ g_a (x'/T) \right\}. \label{ineq:multi}
\end{align}
By the similar proof in Proposition 3 in \cite{yarotsky2017error}, we obtain $\|((x,x') \mapsto xx') - g_c(x,x')\|_{L^\infty([-T,T]^2)} \leq T^2 2^{-2m}$ with $g_c \in \mG(m+1,(9m^2 + 15m)/2 + 10 , c_c)$ with $c_c > 0$. 

Finally, we approximate the polynomial function $x^\gamma$ by an induction by $g_c$.
When $\gamma = 1$, we consider
\begin{align*}
    g_{p,1}(x) := (c_2 + c_1)^{-1} (\eta(x) - \eta(-x)) = x,
\end{align*}
then obviously $\|g_{p,1}-(x \mapsto x)\|_{L^\infty([-T,T])} = 0$ holds with $g_{p,1} \in \mG(2,3,c_{p,1})$ with a constant $c_{p,1} > 0$.

Now, for the induction, assume that there exists a function by DNNs $g_{p,\gamma-1} \in \mG(\gamma(m+1), c_{\gamma-1,T} ((9m^2 + 15m)/2 + 10), c_p)$ with constants $c_p > 0$ and $C_{\gamma-1,T}$ depending $\gamma-1$ and $T$, and it satisfies $\|g_{p,\gamma-1} - (x \mapsto x^{\gamma-1})\|_{L^\infty([-T,T])} \leq C_{\gamma-1,T}2^{-2m}$ with $C_{\gamma-1,T} > 0$.
Also, we set $g_m$ as \eqref{ineq:multi}.
Then, we consider the following function by DNNs as
\begin{align*}
    g_{p,\gamma}(x) := g_m(g_{p,\gamma-1}(x),g_{p,1}(x)).
\end{align*}
Then, we consider the following approximation error
\begin{align*}
    &\|g_{p,\gamma} - (x \mapsto x^\gamma)\|_{L^\infty([-T,T])} \\
    &= \|g_m(g_{p,\gamma-1}(\cdot),g_{p,1}(\cdot)) - (x \mapsto x^{\gamma-1}) \otimes (x \mapsto x)\|_{L^\infty([-T,T])}\\
    &\leq \|g_m(g_{p,\gamma-1}(\cdot),g_{p,1}(\cdot)) - g_{p,\gamma-1} \otimes g_{p,1}\|_{L^\infty([-T,T]^2)}\\
    &\quad  + \| g_{p,\gamma-1} \otimes g_{p,1} - (x \mapsto x^{\gamma-1}) \otimes g_{p,1}\|_{L^\infty([-T,T])}\\
    &\quad  + \| (x \mapsto x^{\gamma-1}) \otimes g_{p,1} - (x \mapsto x^{\gamma-1}) \otimes (x \mapsto x)\|_{L^\infty([-T,T])}\\
    & \leq 2T^2 2^{-2m} + 2T \|g_{p,\gamma-1} - (x \mapsto x^{\gamma-1})\|_{L^\infty([-T,T])} + 0\\
    &\leq 2T^2 2^{-2m} + 2TC_{\gamma-1,T}2^{-2m} \\
    &= (2T^2 + 2TC_{\gamma-1,T})2^{-2m} =: C_{\gamma,T}2^{-2m}.
\end{align*}
Then, we obtain the statement with the condition $(ii)$ in Assumption \ref{asmp:activation}.

We combine the result with the conditions $(i)$ and $(ii)$, and obtain the statement with $\varepsilon = 2^{-2m}$.
\end{proof}

\begin{lemma}[General version of Lemma \ref{lem:main_step}] \label{lem:step_approx}
    Suppose $\eta$ satisfies Assumption \ref{asmp:activation}.
    Then, for any $\varepsilon \in (0,1)$ and $T>0$, we obtain
    \begin{align*}
        \inf_{g \in \mG(2,6,C_{T,q}\varepsilon^{-8\vee (2q-1/2)})} \|g - \mone_{\{\cdot \geq 0\}}\|_{L^2([-T,T])} \leq \varepsilon.
    \end{align*}
\end{lemma}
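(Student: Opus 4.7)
The plan is to build a two-layer network $g$ with at most six nonzero scalar parameters whose shape mimics the Heaviside, and to estimate its $L^2$ distance to $\mone_{\{\cdot\geq 0\}}$ by splitting $[-T,T]$ into a narrow transition region around $0$ and the two saturated tails, where Assumption \ref{asmp:activation} gives quantitative control over how fast $\eta$ approaches its asymptotic values.

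I would first treat the three sub-cases encoded by Assumption \ref{asmp:activation} separately, each with a tailor-made ansatz. When $\eta$ is of sigmoid type (condition (i) with $k=0$), one has $\eta(x)\to \overline{c}$ as $x\to+\infty$ and $\eta(x)\to\underline{c}$ as $x\to-\infty$, so the simple rescaling
\begin{align*}
g(x)=\frac{\eta(ax)-\underline{c}}{\overline{c}-\underline{c}}
\end{align*}
already has the right asymptotic shape, with a transition that sharpens as $a\to\infty$. When $\eta$ satisfies condition (i) with $k=1$ (softplus- or Swish-type) or condition (ii) (leaky-ReLU-type), $\eta$ is asymptotically affine on the positive tail, so I would instead take a difference of two shifted copies,
\begin{align*}
g(x)=\frac{\eta(ax+b_+)-\eta(ax+b_-)}{c_*},
\end{align*}
with offsets $b_\pm$ and normalization $c_*$ chosen so that the two affine tails cancel and the output saturates at $0$ and $1$. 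In both ansätze the implementation is a 2-layer network with at most two hidden units; choosing the shifts symmetrically around zero is what allows the output bias to vanish and keeps the count of nonzero parameters at $6$.

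Next I would estimate $\|g-\mone_{\{\cdot\geq 0\}}\|_{L^2([-T,T])}$ by splitting the domain at $\pm\delta$ for a tunable $\delta>0$. On $[-\delta,\delta]$ both $g$ and the target are bounded, so the $L^2$-squared contribution is $O(\delta)$. On the saturated tails $|x|\in[\delta,T]$, the decay estimates $|\eta(x)-\overline{c}x^k|=O(|x|^{-q})$ and $|\eta(x)-\underline{c}|=O(|x|^{-q})$ from Assumption \ref{asmp:activation} give a pointwise error $O((a|x|)^{-q})$, whose $L^2$ integral is controlled in terms of $a$, $\delta$ and $q$ (in case (ii) the tail error is in fact exactly zero once $|x|$ exceeds a constant multiple of $1/a$). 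Balancing the transition contribution $O(\delta)$ against the tail contribution gives a total $L^2$ error of the form $a^{-\gamma(q)}$ for a $q$-dependent exponent $\gamma(q)$, and choosing $a$ as a suitable negative power of $\varepsilon$ yields the desired bound with $B$ of order $\varepsilon^{-8\vee(2q-1/2)}$, the stated exponent being the worst over the three sub-cases together with a slack factor to keep the algebra clean.

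The main difficulty, in my view, is not the $L^2$ estimate itself but the bookkeeping required to keep the number of nonzero parameters at exactly $6$. A generic difference-of-shifts ansatz uses two input weights, two first-layer biases, two output weights, and an output bias, which is already seven nonzero entries; one must choose $b_\pm$ and the normalization so that a natural cancellation forces one of them (typically the output bias) to vanish. In case (ii) with $c_2>0$, the residual constant $c_2/(c_1-c_2)$ is particularly awkward and must be absorbed by a symmetric shift choice rather than spawning a seventh nonzero parameter.
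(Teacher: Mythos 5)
Your overall plan mirrors the paper's: rescale a single activation for the sigmoid-type case, normalize a difference of two shifted copies when $\eta$ has an affine tail, and split $[-T,T]$ into a transition zone $[-\delta,\delta]$ and saturated tails controlled by the decay rate in Assumption \ref{asmp:activation}. For both sub-cases of condition (i) this goes through: when $k=0$ a single $\eta(a\cdot)$ already has the right limits, and when $k=1$ the affine terms of the two shifted copies cancel on the right while the constants $\underline{c}$ cancel on the left, so the left saturation is exactly $0$.

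The genuine gap is condition (ii) with $c_2>0$. Your ansatz $g(x)=\bigl(\eta(ax+b_+)-\eta(ax+b_-)\bigr)/c_*$ saturates at $c_1(b_+-b_-)/c_*$ for $x\to+\infty$ and at $c_2(b_+-b_-)/c_*$ for $x\to-\infty$; after normalizing the jump to $1$, the left tail sits at $c_2/(c_1-c_2)$, and this residual is \emph{independent of $b_\pm$}, so no symmetric choice of shifts can absorb it, contrary to what you suggest. Within depth $2$ one must either add an output bias (breaking your parameter budget) or make the two hidden units more than translates of each other. The paper does the latter: a sign flip inside the first layer, namely $\eta(z)+\tfrac{c_2}{c_1}\,\eta(-z)=\bigl(c_1-\tfrac{c_2^2}{c_1}\bigr)\max(z,0)$, cancels the negative linear branch exactly and yields a scalar multiple of the positive-part function, whose left tail is identically $0$ by construction; the ramp approximating the step is then obtained by differencing two shifts of this building block. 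That sign-flip-in-the-first-layer trick is the missing idea in your sketch, and without it the leaky-ReLU case with $c_2>0$ does not close.
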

\begin{proof}[Proof of Lemma \ref{lem:step_approx}]
First, we consider $\eta$ that satisfies the condition $(i)$ in Assumption \ref{asmp:activation}.
Without loss of generality, we set $\overline{c}=1$ and $\underline{c}=0$.
We start with $\eta$ with $k=0$, where $k$ is given in Assumption 1.
Let us consider a shifted activation $\eta(ax)$ with $a>0$.
Then, the difference between $\eta(ax)$ and  $\mone_{\{\cdot \geq 0\}}$ is decomposed as
\begin{align*}
    |\eta(ax) - \mone_{\{\cdot \geq 0\}}| \leq 
    \begin{cases}
        c_\eta(a^{-q} x^{-q} \wedge 1) , & \mbox{~if~} x > 0, \\
        c_\eta(a^{-q}|x|^{-q} \wedge 1) , & \mbox{~if~} x < 0,
    \end{cases}
\end{align*}
with an existing constant $c_\eta > 0$.
The upper bound by $1$ comes from the uniform bound by $C_K$ in Assumption \ref{asmp:activation}.
Hence, the difference of them in terms of $L^2(\R)$ norm is
\begin{align*}
    \|\eta(a \cdot ) -\mone_{\{\cdot \geq 0\}} \|_{L^2(\R)} &\leq \left(2c_\eta\int_{0}^\infty ( a^{-2q}x^{-2q} \wedge 1 ) dx \right)^{1/2}\\
    &= \left(\frac{2 c_\eta}{a} + \frac{2 c_\eta}{(2q-1)a^{4q-1}} \right)^{1/2}.
\end{align*}
As we set $a \geq (4 c_\eta / \varepsilon^2) \vee ( 4 c_\eta (2q-1)/ \varepsilon^2)^{4q-1}$, we obtain the statement with $\mG(2,1,C_{q}\varepsilon^{-(2q-1/2)})$.

We consider $\eta$ with $k= 1$.
With a scale parameters $a$ and a shift parameters $\delta/2 > 0$, we consider a difference of two $\eta$ with difference shift as 
\begin{align*}
    g_s(x;a,\delta) = \eta(ax - \delta/2) - \eta(ax + \delta/2).
\end{align*}
Let $t \in (0,B)$ be a parameter for threshold.
When $x > t$, the property of $\eta$ yields
\begin{align*}
    g_s(x;a,\delta) \in [ \delta \pm 2c_\eta |ax-\delta/2|^{-q}].
\end{align*}
We set $\delta = 1$, hence we obtain
\begin{align}
    |g_s(x;a,1)-1| \leq 2c_\eta |ax-1/2|^{-q}, \label{ineq:step1}
\end{align}
for $x > t$.
Similarly, when $x < -t$, we have
\begin{align}
    &g_s(x;a,\delta) \in [0 \pm 2c_\eta |ax-\delta/2|^{-q}]. \label{ineq:step2}
\end{align}
When $x \in [-t,t]$, the bounded property of $\eta$ yields
\begin{align}
   g_s(x;a,\delta) \in [0  \pm  C_K (2+|ax-\delta/2| + |ax + \delta / 2|)]. \label{ineq:step3}
\end{align}
Combining the inequalities \eqref{ineq:step1}, \eqref{ineq:step2} and \eqref{ineq:step3} with $\delta = 1$, we bound the difference between $\eta(ax - \delta/2) - \eta(ax + \delta/2)$ and $\mone_{\{\cdot \geq 0\}}$ as
\begin{align*}
    &\|g_s(\cdot;a,1) - \mone_{\{\cdot \geq 0\}} \|_{L^2([-B,B])}^2 \\
    &\leq \|g_s(\cdot;a,1) \|_{L^2([-B,-t))}^2 + \|g_s(\cdot;a,1) - \mone_{\{\cdot \geq 0\}} \|_{L^2((t,B])}^2  \\
    &\quad +\|g_s(\cdot;a,1) - \mone_{\{\cdot \geq 0\}} \|_{L^2([-t,t])}^2\\
    &\leq 4c_\eta^2 \int_{[-B,-t)}|ax-1/2|^{-2q}dx + 4c_\eta^2 \int_{(t,B]}|ax-1/2|^{-2q}dx \\
    &\quad + C_K^2 \int_{[-t,t]} \{2 + |ax - 1/2| + |ax + 1/2|\}^2 dx\\
    &=: T_{s,1} + T_{s,2} + T_{s,3}.
\end{align*}
About $T_{s,2}$, simple calculation yields
\begin{align*}
    T_{s,2} \leq 4c_\eta^2 (B-t) |at - 1/2|^{-2q} \leq 4c_\eta^2 B|at - 1/2|^{-2q}.
\end{align*}
By the symmetric property, we can also obtain $T_{s,1} = 4c_\eta^2 B (at - 1/2)^{-2q}$.
About $T_{s,3}$, the similar calculation yields
\begin{align*}
    T_{s,3} \leq 8 C_K^2 t (1+|at + 1/2|)^2 \leq 8 C_K^2 t (at + 3/2)^2.
\end{align*}
Combining the results of the terms, we bound the norm $\|g_s(\cdot;a,1) - \mone_{\{\cdot \geq 0\}} \|_{L^2([-B,B])}^2$.
Here, we set $t = a^{-3/4}$ and $a \geq 1/16$, then we obtain
\begin{align*}
    \|g_s(\cdot;a,1) - \mone_{\{\cdot \geq 0\}} \|_{L^2([-B,B])}^2 &= 4 c_\eta^2 B a^{-3q/4} + 8 C_K^2 a^{-3/4}(a^{1/4}+3/2)^2\\
    &= 4 c_\eta^2 B a^{-3q/4} + 8 C_K^2 (a^{-1/4} + 3 a^{-1/2} + 9a^{-3/4}/4).
\end{align*}
Then, we set $a= 16c_\eta^2 B \varepsilon^{-8/3q} \vee 32 C_K^2 \varepsilon^{-8} \vee 96C_K^2 \varepsilon^{-4} \vee 9 \varepsilon^{-8/3} \vee 1/16$, then we obtain the statement, then we have $g_s \in \mG(2,6,C_{B,q}\varepsilon^{-8})$.

Second, we consider $\eta$ satisfies the condition $(ii)$ in Assumption \ref{asmp:activation}.
We consider a sum of two $\eta(x) = c_1x + (c_2 - c_1)x_+$ with some scale change as
\begin{align*}
    \eta(x) + \eta \left( -\frac{c_1}{c_2} x \right) &= ( c_1 x + (c_2-c_1)x_+ ) + \left( -c_1 x + \left( c_1 - \frac{c_1^2}{c_2} \right)x_+ \right)\\
    &= \left( c_2 - \frac{c_1^2}{c_2} \right)x_+ =: \eta_s(x:c_1,c_2).
\end{align*}
Then, we consider a difference of two $\eta_s(x:c_1,c_2)$ with shift change by $\delta/2 > 0$ and scale change $a > 0$ as
\begin{align*}
    \eta_d(x)&=\eta_s(ax - \delta/2:c_1,c_2) - \eta_s(ax + \delta/2:c_1,c_2)\\
    &=
    \begin{cases}
        \left( c_2 - \frac{c_1^2}{c_2} \right)\delta,&\mbox{~if~}x > \delta/2a,\\
        \left( c_2 - \frac{c_1^2}{c_2} \right)(ax-\delta/2),&\mbox{~if~}x \in [-\delta/2a,\delta/2a],\\
        0,&\mbox{~if~}x < -\delta/2a.
    \end{cases}
\end{align*}
Here, we set $\delta = ( c_2 - {c_1^2}/{c_2} )^{-1} < ( c_2 - c_1 )^{-1} < \infty$.
Then, the $L^2([-B,B])$-distance between $g_d(x)$ and $\mone_{\{\cdot \geq 0\}}$ is written as
\begin{align*}
    \|g_d - \mone_{\{\cdot \geq 0\}}\|_{L^2([-B,B])} = \left( \int_{[-\delta/2a,\delta/2a]} \left| \frac{ax}{\delta} \right|^2 dx \right)^{1/2}= \frac{\delta^{1/2}}{12^{1/2}a}.
\end{align*}
As we set $a = \delta^{1/2}/(12^{1/2}\varepsilon)$, we obtain the statement with $g_d \in \mG(2,6,C_{B,q}\varepsilon^{-1})$.

We combine the results with all the conditions and consider the largest functional set, and then we obtain the statement.
\end{proof}

\begin{lemma}[General Version of Lemma \ref{lem:main_smooth}] \label{lem:smooth_approx}
    Suppose Assumption \ref{asmp:activation} holds with $N \geq \alpha \vee \beta$.
    For any non-empty measurable set $R \subset I^D$ and $\delta > 0$, a tuple $(L,S,B)$ such as
    \begin{itemize}
        \item $L =  C_{\beta,D,F,J} (\lfloor \beta \rfloor + \log_2(1/\delta) + 1)$
        \item $  S =  C_{\beta,D,F,J}  \delta^{-D/\beta} (  \log_2(1/ \delta))^2 $
        \item $B= C_{\beta,D,F,J,q}  \delta^{-16 \wedge -C_\beta}$,
    \end{itemize}
    satisfies
    \begin{align*}
        &\inf_{g \in \mG(L,S,B)} \sup_{f \in H_F^\beta(I^D)} \|g - f\|_{L^2(R)}\leq  \mathrm{vol}(R)\delta.
    \end{align*}
\end{lemma}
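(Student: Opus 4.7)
The plan is to follow the classical recipe of Yarotsky-type approximation: localize $f$ with a partition of unity on a dyadic grid, replace $f$ by its Taylor polynomial on each cell, and then emulate both the bumps and the polynomials inside a single DNN. Specifically, fix $K = \lceil c\,\delta^{-1/\beta}\rceil$ and partition $I^D$ into $K^D$ cubes $Q_k$ of side $1/K$ with centers $x_k$. Let $P_k(x) = \sum_{|a|\leq \lfloor\beta\rfloor} \partial^a f(x_k)(x-x_k)^a / a!$ be the local Taylor polynomial; the Hölder estimate on $H_F^\beta$ gives $|f(x)-P_k(x)|\leq C_{\beta,F} K^{-\beta}\leq \delta/3$ uniformly on $Q_k$.

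Next I would emulate each $P_k$ by a DNN. Monomials $x^\gamma$ of degree $\gamma\leq\lfloor\beta\rfloor$ are realized via Lemma \ref{lem:approx_poly1} under Assumption \ref{asmp:activation}(i) with $O(1)$ parameters and $B=O(\delta^{-C_\beta})$, or via Lemma \ref{lem:approx_poly2} under Assumption \ref{asmp:activation}(ii) with $O(\log^2(1/\delta))$ parameters, $L=O(\lfloor\beta\rfloor\log(1/\delta))$ and $B=O(\delta^{-C_\beta})$. Multivariate monomials $\prod_d(x_d-x_{k,d})^{a_d}$ are then assembled with the multiplication gadget $g_c$ of Lemma \ref{lem:approx_poly2}, whose product error is controllable to $\delta$. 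By linearity, each $P_k$ admits a DNN emulator $\widehat P_k$ of the stated order. For the partition of unity, take tensor products of one-dimensional trapezoid bumps; each trapezoid is a sum of four shifted step-function approximants produced by Lemma \ref{lem:step_approx}, and after multiplying coordinates through $g_c$, one obtains DNN bumps $\widehat\phi_k$ supported essentially on $2Q_k$ with $\sum_k \widehat\phi_k\equiv 1$ on $I^D$ up to error $\delta$, each using $O(\log^2(1/\delta))$ parameters.

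I would then set
\begin{align*}
g(x) \;:=\; \sum_{k} g_c\bigl(\widehat\phi_k(x),\,\widehat P_k(x)\bigr),
\end{align*}
realized as a single feed-forward DNN by running the $K^D$ branches in parallel and summing in the final linear layer. A triangle inequality decomposes $|g(x)-f(x)|$ into the Taylor residual $O(\delta)$, the multiplication error $O(\delta)$, the monomial/polynomial error $O(\delta)$, and the partition-of-unity error $O(\delta)$; since each $x$ is covered by a bounded number ($\leq 2^D$) of bumps, the total error is $\|g-f\|_{L^\infty(I^D)} \leq C_{\beta,D,F}\,\delta$, from which $\|g-f\|_{L^2(R)}\leq \mathrm{vol}(R)^{1/2}\|g-f\|_{L^\infty(R)}\leq \mathrm{vol}(R)\delta$ after a constant rescaling (using $\mathrm{vol}(R)\leq 1$ to absorb a factor and renaming $\delta$). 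The parameter counts aggregate to $S\lesssim \delta^{-D/\beta}\log^2(1/\delta)$ (with the leading $K^D$ factor coming from the number of patches), $L=O(\lfloor\beta\rfloor+\log(1/\delta))$, and $B=C_{\beta,D,F,J,q}\,\delta^{-16\wedge -C_\beta}$, where the $\delta^{-16}$ branch comes from the step-function gadget of Lemma \ref{lem:step_approx} under Assumption \ref{asmp:activation}(i) with slow tail $q$.

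The main obstacle is the uniform bookkeeping across the two activation regimes: Assumption \ref{asmp:activation}(i) gives polynomials cheaply (Lemma \ref{lem:approx_poly1}) but pushes the cost into the weight bound $B$ via the step-function scaling, while Assumption \ref{asmp:activation}(ii) needs the iterated multiplication trick of Lemma \ref{lem:approx_poly2} and consequently the $(\log(1/\delta))^2$ factor in $S$ and the $\log(1/\delta)$ factor in $L$. A secondary subtlety is verifying that all $K^D$ patch-emulators can be packed into a single network of the claimed width/depth without inflating the layer count; this is handled by the standard observation that parallel branches of equal depth can be realized inside one feed-forward graph, with the final weighted sum implemented by the last linear layer. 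Once both issues are handled, taking the worse of the two tuples $(L,S,B)$ across the two cases of Assumption \ref{asmp:activation} gives the statement as written.
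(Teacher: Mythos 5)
Your construction is architecturally the same as the paper's (dyadic grid, local Taylor polynomial with Hölder remainder, monomial emulation via Lemmas~\ref{lem:approx_poly1}/\ref{lem:approx_poly2}, multiplication gadget, step-function localizers from Lemma~\ref{lem:step_approx}, summing patchwise products in the last layer), and your choice of a smooth partition of unity instead of the paper's hard indicator approximants $g_{I_\lambda}$ is a harmless stylistic variant. The parameter bookkeeping also matches.

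However, there is a genuine gap in the last step. You establish $\|g-f\|_{L^\infty(I^D)}\le C\delta$ and then convert to $L^2(R)$ by $\|g-f\|_{L^2(R)}\le \mathrm{vol}(R)^{1/2}\|g-f\|_{L^\infty(R)}$, claiming this becomes $\mathrm{vol}(R)\delta$ ``after a constant rescaling using $\mathrm{vol}(R)\le 1$.'' That rescaling goes the wrong way: since $\mathrm{vol}(R)\le 1$, one has $\mathrm{vol}(R)^{1/2}\ge\mathrm{vol}(R)$, and the ratio $\mathrm{vol}(R)^{1/2}/\mathrm{vol}(R)=\mathrm{vol}(R)^{-1/2}$ is unbounded as $\mathrm{vol}(R)\to 0$, so no $R$-independent constant can absorb it. Consequently your argument, as written, only proves the weaker bound $\mathrm{vol}(R)^{1/2}\delta$, not the stated $\mathrm{vol}(R)\delta$. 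The linear $\mathrm{vol}(R)$ prefactor is actually used downstream: in the proof of Proposition~\ref{prop:approx} the per-piece errors $\mathrm{vol}(R_m^*)\delta_1$ are summed and telescoped via $\sum_m\mathrm{vol}(R_m^*)=1$; with a square root one instead picks up a $\sqrt{M}$ factor by Cauchy--Schwarz. To recover the intended scaling you would need to do what the paper does: define $\Lambda_R=\{\lambda:\mathrm{vol}(R\cap I_\lambda)\neq 0\}$, build $g$ only from the cells in $\Lambda_R$ (so that the network itself is $R$-dependent and ``sees'' only $\approx\mathrm{vol}(R)\ell^D$ patches), and then accumulate the error cell-by-cell in $L^2(I_\lambda)$ over $\Lambda_R$ rather than passing through a global $L^\infty(I^D)$ estimate. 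Your proposal instead sums over all $K^D$ patches and measures $L^\infty$ globally, which discards exactly the localization that is supposed to produce the $\mathrm{vol}(R)$ factor.
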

\begin{proof}[Proof of Lemma \ref{lem:smooth_approx}]

Before the central part of the proof, we provide some preparation.
We divide the domain $I^D$ into several hypercubes.
Let $\ell \in \N$ and consider a $D$-dimensional multi-index $\lambda \in \{1,2,...,\ell\}^{D}=: \Lambda$.
For each $\lambda$, define a hypercube
\begin{align*}
    I_\lambda := \prod_{d = 1}^D \left[ \frac{\lambda_d -1}{\ell}, \frac{\lambda_d }{\ell} \right]
\end{align*}
By the definition, $I_\lambda$ is a $D$-dimensional hypercube whose side has a length $1/\ell$, and $\bigcup_{\lambda \in \Lambda} I_\lambda= I^D$.
Also, the center of $I_\lambda$ is denoted by $x_\lambda$; i.e.,
\begin{align*}
    x_\lambda := \left( \frac{\lambda_1 - 1/2}{\ell},\frac{\lambda_2 - 1/2}{\ell},...,\frac{\lambda_D - 1/2}{\ell} \right) \in I^D
\end{align*}

We provide a Taylor polynomial for a smooth function.
Fix $\lambda \in \Lambda$ and $f \in H_F^\beta(I^D)$ arbitrary.
Let $a \in \N^D$ be a multi-index.
Then, we consider the Taylor expansion of $f$ in $I_\lambda$ as
\begin{align*}
    f(x) &= f(x_\lambda) + \sum_{a: |a| \leq \lfloor \beta \rfloor -1 } \frac{\partial^a f(x_\lambda)}{a!}(x-x_\lambda)^a \\
    & \quad + \sum_{a: |a| = \lfloor \beta \rfloor } \frac{1}{a!}(x-x_\lambda)^a \int_0^1 (1-t)^{\lfloor \beta \rfloor-1} \partial^a  f(x_\lambda + t(x-x_\lambda))  dt\\
    &= f(x_\lambda) + \sum_{a: |a| \leq \lfloor \beta \rfloor -1 } \frac{\partial^a f(x_\lambda)}{a!}(x-x_\lambda)^a \\
    & \quad + \sum_{a: |a| = \lfloor \beta \rfloor } \frac{1}{a!}(x-x_\lambda)^a \int_0^1 (1-t)^{\lfloor \beta \rfloor -1 } \partial^a f(x_\lambda ) dt\\
    & \quad + \sum_{a: |a| = \lfloor \beta \rfloor } \frac{1}{a!}(x-x_\lambda)^a \int_0^1 (1-t)^{\lfloor \beta \rfloor-1}\{ \partial^a f(x_\lambda + t(x-x_\lambda))- \partial^a f(x_\lambda)\} dt\\ 
    &=f(x_\lambda) + \sum_{a: |a| \leq \lfloor \beta \rfloor } \frac{\partial^a f(x_\lambda)}{a!}(x-x_\lambda)^a \\
    & \quad + \sum_{a: |a| = \lfloor \beta \rfloor } \frac{1}{a!}(x-x_\lambda)^a \int_0^1 (1-t)^{\lfloor \beta \rfloor-1} \{ \partial^a f(x_\lambda + t(x-x_\lambda)) - \partial^a f(x_\lambda) \} dt\\
    &=: f(x_\lambda) +  f_{\lfloor \beta \rfloor }(x;x_\lambda) + R_\lambda(x),
\end{align*}
where $a! = \prod_{d \in [D]}a_d!$, 
 $f_{\lfloor \beta \rfloor -1}(x;x_\lambda)$ is the Taylor polynomial with an order $\lfloor \beta \rfloor$, and $R_\lambda(x)$ is the remainder.
By the H\"older continuity and the bounded property of $\partial^{a} f(x)$, the remainder $R_\lambda(x)$ is bounded as
\begin{align*}
    |R_\lambda(x)| &\leq  \sum_{a: |a| = \lfloor \beta \rfloor } \frac{F}{a!}|(x-x_\lambda)^a|\int_{0}^1 (1-t)^{\lfloor \beta \rfloor - 1}  |t(x-x_\lambda)|^{ \beta - \lfloor \beta \rfloor} dx\\
    & \leq F \sum_{a:|a|=\lfloor\beta\rfloor}(a!)^{-1} |x-x_\lambda|^\beta \leq C_{D,\beta,F} \left(\frac{1}{\ell} \right)^\beta,
\end{align*}
for $x \in I_\lambda$.
Here, $C_{F,\beta} > 0$ is a constant which depends on $F$ and $\beta$.
The last inequality follows since $\|x-x_\lambda\|_{\infty} \leq 1/\ell $ holds for any $x \in I_\lambda$.

Now, we approximate the Taylor polynomial $f_{\lfloor \beta \rfloor}(x;x_\lambda)$ by DNNs for each $\lambda \in \Lambda$.
By the binomial theorem, we can rewrite the Taylor polynomial with a multi-index $b \in \N^{D}$ and $x \in I_\lambda$ as
\begin{align*}
    f_{\lfloor \beta \rfloor}(x;x_\lambda) &= \sum_{a: |a| \leq \lfloor \beta \rfloor  } \frac{\partial^a f(x_\lambda)}{a!} \sum_{b \leq a}\binom{a}{b}(-x_\lambda)^{a-b}x^b\\
    &= \sum_{b:|b| \leq \lfloor \beta \rfloor} x^b \sum_{a:a \geq b, |a| \leq\lfloor \beta \rfloor } \frac{\partial^a f(x_\lambda)}{a!} \binom{a}{b}(-x_\lambda)^{a-b}x^b\\
    &=: \sum_{b:|b| \leq \lfloor \beta \rfloor} x^b c_{b}.
\end{align*}
Since $\|x_\lambda\|_\infty \leq 1$ and $\partial^a f(x_\lambda) \leq F $ by their definition, we obtain $|c_{b}| \leq F$ for any $b$.
Then, for each $d \in [D]$, we define a univariate function by DNNs $g_{\lambda,d} \in \mG(C_{b}(\log_2(2/\varepsilon)+1),C_{\beta}(\log_2(1/\varepsilon))^2,C_{b}\varepsilon^{-C_b})$ which satisfies $\|( x \mapsto x^{b_d}) - g_{\lambda,d}\|_{L^\infty([-1/\ell,1/\ell])} \leq \varepsilon$ with any $\varepsilon > 0$ by following Lemma \ref{lem:approx_poly1} or Lemma \ref{lem:approx_poly2}.
Also, we set $g_{c,D} \in \mG(C_D \log_2(1/\varepsilon),C_D (\log_2(1/\varepsilon))^2,c_c)$ which approximate $D$-variate multiplication as Lemma \ref{lem:multi_multi} by substituting $m = \log_2(1/\varepsilon)/2$.
With the functions, we consider the following difference
\begin{align*}
    &\|(x \mapsto x^b) - g_{c,D} (g_{\lambda,1}(\cdot),...,g_{\lambda,D}(\cdot))\|_{L^\infty([-1/\ell,1/\ell]^D)}\\
    &\leq \|(x \mapsto x^b) - g_{\lambda,1}(\cdot)\otimes \cdots \otimes g_{\lambda,D}(\cdot)\|_{L^\infty([-1/\ell,1/\ell]^D)}\\
    & \quad + \|g_{\lambda,1}(\cdot)\otimes \cdots \otimes g_{\lambda,D}(\cdot) - g_{c,D} (g_{\lambda,1}(\cdot),...,g_{\lambda,D}(\cdot))\|_{L^\infty([-1/\ell,1/\ell]^D)}\\
    & \leq D \ell^{-2}\varepsilon + D\ell^{-2}\varepsilon.
\end{align*}
We then define a function by DNNs as $g_\lambda(x) := f(x_\lambda) +  \sum_{b:|b| \leq \lfloor \beta \rfloor} c_{b} g_{c,D} (g_{\lambda,1}(x + x_{\lambda,1}),...,g_{\lambda,D}(x + x_{\lambda,D})) \in \mG(C_{\beta,D}\lfloor \beta \rfloor( \log_2(1/\varepsilon) + 1),C_{\beta,D} (\log_2 (1/\varepsilon))^2,C_{\beta}\varepsilon^{- C_\beta})$ with $C_\beta > 0$, which satisfies
\begin{align}
    &\|f - g_\lambda\|_{L^\infty(I_\lambda)}\notag \\
    & \leq \sum_{b:|b| \leq \lfloor \beta \rfloor}c_b \| (x \mapsto x^b) - g_{c,D} (g_{\lambda,1}(\cdot + x_{\lambda,1}),...,g_{\lambda,D}(\cdot + x_{\lambda,D}))\|_{L^\infty((I_\lambda)} \notag \\
    & \quad + \|  R_\lambda\|_{L^\infty((I_\lambda)}\notag  \\
    &\leq 2C_\beta D\ell^{-2}\varepsilon + C_{D,\beta,F} \ell^{-\beta}, \label{ineq:g_f_lambda}
\end{align}
for any $\lambda \in \Lambda$.

Finally, we approximate $f \in H^\beta(I^D)$ on $R$.
For a preparation, we define an approximator for the indicator function $\mone_{I_\lambda}$ for $\lambda \in \Lambda$.
From Lemma \ref{lem:step_approx}, we define $g_s \in \mG(3,11,C_{B,q}\varepsilon^{-8})$ be an approximator for a step function $\mone_{\{\cdot \geq 0\}}$.
Then, we define 
\begin{align*}
    g_I(x) := ( g_s(x + 1/2) + g_s(-x - 1/2) - 1), 
\end{align*}
which satisfies
\begin{align*}
    &\|g_I - \mone_{\{-1/2 \leq \cdot \leq 1/2\}}\|_{L^2(\R)} \\
    &\leq \|( g_I(\cdot + 1/2) + g_s(-\cdot - 1/2) - 1) - (\mone_{\{\cdot \geq -1/2\}} + \mone_{\{\cdot \leq 1/2\}} -1)\|_{L^2(\R)} \\
    & \quad + \|(\mone_{\{\cdot \geq -1/2\}} + \mone_{\{\cdot \leq 1/2\}} -1) - \mone_{\{-1/2 \leq \cdot \leq 1/2\}}\|_{L^2(\R)}\\
    & \leq \| g_s(\cdot + 1/2) - \mone_{\{\cdot \geq -1/2\}} \|_{L^2(\R)} + \|g_s(-\cdot - 1/2)  + \mone_{\{\cdot \leq 1/2\}}\|_{L^2(\R)} + 0 \\
    & \leq 2\varepsilon.
\end{align*}
Further, for $\lambda \in \Lambda$ and $x \in I^D$, we define $g_{I_\lambda} \in \mG(4+(\log_2(1/\varepsilon)/2+1)(D-1),25D + (\log_2(1/\varepsilon)/2)^2 D,C_{B,q}\varepsilon^{-8})$ as
\begin{align*}
    g_{I_\lambda} (x) = g_{c,D} (g_I((x_1 - x_{\lambda,1})\ell),...,g_I((x_D - x_{\lambda,D})\ell)),
\end{align*}
which is analogous to $\mone_{I_\lambda}(x) = \Pi_{d \in [D]} \mone_{\{x_{\lambda,d}- 1/(2\ell) \leq x_d \leq x_{\lambda,d}+ 1/(2\ell)\}}(x)$.
We bound the distance as
\begin{align}
    &\|\mone_{I_\lambda} - g_{I_\lambda} \|_{L^2(I^D)} \notag \\
    & \leq \|\Pi_{d = 1}^D \mone_{\{x_{\lambda,d} -1/(2\ell) \leq x_d \leq x_{\lambda,d}+ 1/(2\ell)\}} -\Pi_{d = 1}^D g_c((x_d - x_{\lambda,d})\ell )\|_{L^2(I^D)}\notag \\
    & \quad + \|\Pi_{d = 1}^D g_c((x_d - x_{\lambda,d})\ell) - g_\times (g_c((x_1 - x_{\lambda,1})\ell),...,g_c((x_D - x_{\lambda,D})\ell)) \|_{L^2(I^D)}  \notag \\
    &\leq \Sigma_{d=1}^D \|(\mone_{\{x_{\lambda,d} -1/(2\ell) \leq x_d \leq x_{\lambda,d}+ 1/(2\ell)\}} -g_c((x_{d} - x_{\lambda,d})\ell ) ) \notag \\
    &\quad  \times \Pi_{d'\neq d}^{d-1} \mone_{\{x_{\lambda,d'} -1/(2\ell) \leq x_{d'} \leq x_{\lambda,d'}+ 1/(2\ell)\}} \vee g_c((x_{d'} - x_{\lambda,d'})\ell ) \|_{L^2(I^D)} + D \varepsilon \notag \\
    &\leq \Sigma_{d=1}^D  \|\mone_{\{x_{\lambda,d} -1/(2\ell) \leq x_d \leq x_{\lambda,d}+ 1/(2\ell)\}} -g_c((x_{d} - x_{\lambda,d})\ell ) \|_{L^2(I^D)}  + D \varepsilon \notag \\
    & \leq D\varepsilon /\ell + D \varepsilon. \label{ineq:g_i_lamda}
\end{align}
Here, the second last inequality follows a bounded property of the indicator functions and $g_c$, and the H\"older's inequality.

Finally, we unify the approximator on a set $R\subset I^D$.
Let us define $\Lambda_R := \{\lambda: \mathrm{vol}(R \cap I_\lambda) \neq 0\}$, and 
\begin{align*}
    I_R := \bigcup_{\lambda \in \Lambda_R} I_\lambda.
\end{align*}
Now, we can find a constant $C_\Lambda$ such that we have
\begin{align}
    \left| \mathrm{vol}(R) - \mathrm{vol}(I_R) \right|& \leq \bigcup_{\lambda \in \Lambda : \partial R \cap I_\lambda \neq \emptyset} \mathrm{vol}(I_\lambda) \leq C_{\Lambda,J} \ell^{-1},\label{ineq:R}
\end{align}
where $\partial R$ is a boundary of $R$.
The second inequality holds since $\partial R$ is a $(D-1)$-dimensional set in the sense of the box counting dimension.
Then, we define an approximator $g_f \in \mG(C_{\beta,D}(\lfloor \beta \rfloor + \log_2(1/\varepsilon) + 1), C_{\beta,D}\ell^{D}((\log_2(1/\varepsilon))^2 + 1) , C_{B,q} \varepsilon^{-8 \wedge -C_\beta})$ as
\begin{align*}
    g_f(x) := \sum_{\lambda \in \Lambda_R} g_c(g_\lambda(x), g_{I_\lambda}(x)).
\end{align*}
Then, the error between $g_f$ and $f$ is decomposed as
\begin{align*}
    &\|f-g_f\|_{L^2(R)} \\
    &\leq  \|\Sigma_{\lambda \in \Lambda_R} \mone_{I_\lambda} \otimes f - \Sigma_{\lambda \in \Lambda_R}  g_\times (g_\lambda(\cdot),g_{I_\lambda}(\cdot)) \|_{L^2(I_\lambda)} \\
    & \leq \Sigma_{\lambda \in \Lambda} \|\mone_{I_\lambda} \otimes  f - g_\lambda \otimes g_{I_\lambda}\|_{L^2(I^D)} + |\Lambda_R| 2 \ell^{-1} \varepsilon \\
    & \leq \Sigma_{\lambda \in \Lambda} \|\mone_{I_\lambda}\otimes (f - g_{I_\lambda}) \|_{L^2(I^D)} + \|( \mone_{I_\lambda}- g_{I_\lambda}) \otimes g_{\lambda}\|_{L^2(I^D)} + |\Lambda_R| 2 \ell^{-1} \varepsilon\\
    & \leq \Sigma_{\lambda \in \Lambda} \|f - g_{I_\lambda}\|_{L^2(I_\lambda)}   + \| g_{\lambda} \|_{L^\infty(I^D)} \| (\mone_{I_\lambda}- g_{I_\lambda})\|_{L^2(I^D)} + |\Lambda_R| 2 \ell^{-1} \varepsilon\\
    & \leq \mathrm{vol}(I_R) ( 2C_\beta D \ell^{-2}\varepsilon + D \varepsilon / \ell +  C_{D,\beta,F} \ell^{-\beta})\\
    & \quad +|\Lambda_R|( F (D\varepsilon + 2 D\varepsilon) +  2 \ell^{-1} \varepsilon)\\
    & \leq \mathrm{vol}(R) (2 C_\beta D \ell^{-2}\varepsilon  + D \varepsilon / \ell + C_{D,\beta,F} \ell^{-\beta}) (1 +  C_{\Lambda,J}\ell^{-1})\\
    & \quad +\ell^{-D}(3 F D\varepsilon+  2 \ell^{-1} \varepsilon),
\end{align*}
by the H\"older's inequality and the result in \eqref{ineq:g_f_lambda}, \eqref{ineq:g_i_lamda} and \eqref{ineq:R}.
We set $\ell=\lceil \delta^{-1/\beta} \rceil$ and $\varepsilon = \delta^2$ with $\delta > 0$, hence we have
\begin{align*}
    &\|f-g_f\|_{L^2(R)}\\
    &\leq C_{\beta,D,F,J}\mathrm{vol}(R)( \delta + \delta^{2+2/\beta} +  \delta^{2+1/\beta}) + C_{D,F} (\delta^2 + \delta^{2 + (D+1)/\beta}).
\end{align*}
Then, adjusting the coefficients as we can ignore the smaller order terms than $\Theta(\delta^2)$ as $\delta \to 0$, we obtain the statement.
\end{proof}

\begin{lemma}\label{lem:multi_multi}
    Suppose Assumption \ref{asmp:activation} holds.
    Then, for any $m \in \N$, $B>0$, and $D' \geq 2$, we obtain
    \begin{align*}
        \inf_{g \in \mG((m+1)(D'-1),h_c(m)(D'-1),c_m)}\left\| (x \mapsto \Pi_{d \in [D']}x_d) - g \right\|_{L^\infty([-B,B]^{D'})} \leq D'B^2 2^{-2m},
    \end{align*}
where $h_c(m):=\frac{9m^2+15m}{2}+10$.
\end{lemma}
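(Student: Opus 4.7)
The plan is to prove the lemma by induction on $D' \geq 2$, using the binary multiplication gadget $g_c$ constructed in the proof of Lemma~\ref{lem:approx_poly2}. That gadget satisfies $\|((u,v)\mapsto uv) - g_c\|_{L^\infty([-T,T]^2)} \leq T^2 2^{-2m}$ and belongs to $\mG(m+1, h_c(m), c_c)$ for any prescribed $T > 0$, where $h_c(m)=(9m^2+15m)/2+10$.

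For the base case $D'=2$, taking $g = g_c$ with $T=B$ yields error $B^2 2^{-2m} \leq D' B^2 2^{-2m}$ and membership in $\mG(m+1, h_c(m), c_c)$, which matches the claim. For the inductive step, suppose we have an approximator $g_{D'-1} \in \mG((m+1)(D'-2),\, h_c(m)(D'-2),\, c_m)$ with error $(D'-1)B^2 2^{-2m}$ on $[-B,B]^{D'-1}$. I define
$$g_{D'}(x) := g_c\bigl(g_{D'-1}(x_1,\ldots,x_{D'-1}),\, x_{D'}\bigr),$$
where $x_{D'}$ is carried forward through the intermediate layers via identity connections whose depth and sparsity contributions are absorbed into those of $g_{D'-1}$ and $g_c$. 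The composition therefore lies in $\mG((m+1)(D'-1),\, h_c(m)(D'-1),\, c_m)$, matching the stated architectural budget.

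The error analysis proceeds by the triangle inequality:
$$\|g_{D'} - \Pi_{d=1}^{D'} x_d\|_{\infty} \leq \|g_c(g_{D'-1},x_{D'}) - g_{D'-1}\cdot x_{D'}\|_{\infty} + |x_{D'}|\cdot\|g_{D'-1} - \Pi_{d<D'}x_d\|_{\infty} \leq T^2 2^{-2m} + B(D'-1)B^2 2^{-2m},$$
where $T$ is chosen so that the pair $(g_{D'-1}(x_{1:D'-1}),\, x_{D'})$ lies in $[-T,T]^2$ for every $x \in [-B,B]^{D'}$. Telescoping the contributions across the $D'-1$ inductive stages gives the claimed bound $D' B^2 2^{-2m}$.

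The main obstacle I anticipate is controlling the scale parameter $T$ of the binary gadget at each stage so that its error $T^2 2^{-2m}$ remains bounded by $B^2 2^{-2m}$, rather than inflating with the partial product range $B^{D'-1}$. In the regime relevant to the application of this lemma inside the proof of Lemma~\ref{lem:smooth_approx} (where $B \leq 1$), this is automatic with the uniform choice $T = 1$. In general, one can rescale the partial products by a constant factor absorbed into the linear combination defining the next $g_c$; this leaves the depth and sparsity unchanged and only enlarges the parameter bound $c_m$, which the statement permits.
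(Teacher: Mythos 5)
Your proof is essentially the same as the paper's: both argue by induction on $D'$, composing the binary multiplication gadget $g_c$ from Lemma~\ref{lem:approx_poly2} with the $(D'-1)$-variate approximator at each stage, and both track the same architectural budget $((m+1)(D'-1),\, h_c(m)(D'-1),\, c_m)$. The one place you are noticeably more careful is scale control: the paper's error step silently uses $T=B$ for $g_c$ and drops the $|x_{D'}|\leq B$ factor in the inductive term, which is only harmless when $B\leq 1$ (the regime of its application in Lemma~\ref{lem:smooth_approx}); you flag this explicitly and give the correct remedy (rescaling absorbed into the parameter bound), which is a genuine improvement in rigor over the paper's version of the same argument.
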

\begin{proof}[Proof of Lemma \ref{lem:multi_multi}]
Let $g_c \in \mG(m+1, h_c(m), c_c)$ from the proof of Lemma \ref{lem:approx_poly2}.
We prove it by induction.
When $D'=2$, the statement holds by the property of $g_c$.
Then, consider $D' = \overline{D}-1$ case with $\overline{D}$, and suppose that $g_{c,\overline{D}} \in \mG((m+1)(\overline{D}-2),h_c(m)(\overline{D}-2),c_m)$ satisfies $\| (x \mapsto \Pi_{d \in [D']}x_d) - g \|_{L^\infty([-B,B]^{D'})} \leq (D'-1) B^2 2^{-2m}$.
Then, we define $g_{c,\overline{D}+1} \in \mG((m+1)\overline{D},h_c(m)\overline{D},c_m)$ such as
\begin{align*}
    g_{c,\overline{D}}(x_1,...,x_{\overline{D}}) = g_c(g_{c,\overline{D}-1}(x_1,...,x_{\overline{D}-1}),x_{\overline{D}}).
\end{align*}
Then, we bound the difference as
\begin{align*}
    &\|(x \mapsto \Pi_{d \in [\overline{D}]} x_d) - g_{c,\overline{D}} \|_{L^\infty([-B,B]^{\overline{D}})}\\
    &\leq B^2 2^{-2m} + \|(x \mapsto \Pi_{d \in [\overline{D}-1]} x_d) - g_{c,\overline{D}-1} \|_{L^\infty([-B,B]^{\overline{D}-1})}\\
    &\leq B^2 2^{-2m} + (\overline{D}-1) B^2 2^{-2m} = \overline{D} B^2 2^{-2m}.
\end{align*}
Then, by the induction, we obtain the statement for any $D'\geq 2$.
\end{proof}

\begin{lemma}[General Version of Lemma \ref{lem:main_indicator}]\label{lem:indicator_approx}
    Suppose Assumption \ref{asmp:activation} holds with $N > \alpha$.
    Then, for $\{R_m\}_{m \in [M]} \in \mR_{\alpha,M}$ and any $\varepsilon> 0$ and $m' \in \N$, there exists $f \in \mG(C_{\alpha,D,F,J} (\lfloor \alpha \rfloor  + \log_2(1/\varepsilon) + 1), C_{\alpha,D,F,J} (\varepsilon^{-2(D-1)/\alpha} (  \log_2(1/  \varepsilon))^2 + M(  \log_2(1/  \varepsilon))^2  + 1), C_{F,J,q}\varepsilon^{-16 \wedge -C_\alpha})$ with a $M$-dimensional output $f(x) = (f_1(x),...,f_M(x))^\top$ such that 
    \begin{align*}
        \|\mone_{R_m} - f_m\|_{L^2(I)} \leq  \varepsilon,
    \end{align*}
    and 
    \begin{align*}
        \|f_m\|_{L^\infty(I^D)} \leq 1 + \varepsilon,
    \end{align*}
    for all $m \in [M]$.
\end{lemma}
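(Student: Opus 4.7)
The plan is to mirror the combinatorial decomposition $R_{\mathcal{T}_m} = \bigcup_{t \in \mathcal{T}_m} I_t$ with $I_t = \bigcap_{j \in [J]} I_j^{t_j}$, and crucially to build the boundary-curve sub-networks \emph{once} and share them across all $M$ output heads. Schematically I will set
\[
f_m(x) = \sum_{t \in \mathcal{T}_m} g_{c,J}\!\bigl(\widetilde{\mone}_{I_1^{t_1}}(x), \dots, \widetilde{\mone}_{I_J^{t_J}}(x)\bigr), \qquad \widetilde{\mone}_{I_j^{\pm}}(x) := g_s\!\bigl(\pm(x_{d_j} - \tilde{h}_j(x_{-d_j}))\bigr),
\]
where $\tilde{h}_j$ approximates $h_j$, $g_s$ approximates the Heaviside function, and $g_{c,J}$ approximates the $J$-fold product. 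The sum over $t \in \mathcal{T}_m$ is realised by a single linear combination in the output layer, stacked into an $M$-wide final layer on top of the shared trunk carrying the $\tilde{h}_j$'s.

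First, for each $j \in [J]$ I would apply Lemma \ref{lem:smooth_approx} (with smoothness $\alpha$ and input dimension $D-1$) at precision $\delta_1 := \varepsilon^2$ to obtain $\tilde{h}_j$ using $C_{\alpha,D,F}\,\delta_1^{-(D-1)/\alpha}(\log_2(1/\delta_1))^2$ parameters; summing over $j$ and sharing across $m$ explains the first term $C_{\alpha,D,F,J}\,\varepsilon^{-2(D-1)/\alpha}(\log_2(1/\varepsilon))^2$ in $S$. Next, Lemma \ref{lem:step_approx} yields $g_s$ with $\|g_s - \mone_{\{\cdot \geq 0\}}\|_{L^2([-T,T])} \leq \delta_2 := \varepsilon/(C_J M)$ at $O(1)$ parameters, and a Fubini argument along $x_{d_j}$ gives
\[
\|\widetilde{\mone}_{I_j^{t_j}} - \mone_{I_j^{t_j}}\|_{L^2(I^D)} \leq \delta_2 + \sqrt{2\delta_1},
\]
because the symmetric difference between $\{x_{d_j} \geq h_j\}$ and $\{x_{d_j} \geq \tilde{h}_j\}$ is a slab of Lebesgue measure at most $2\delta_1$. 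Then I apply Lemma \ref{lem:multi_multi} at precision $\varepsilon$ (costing $O((\log_2(1/\varepsilon))^2)$ parameters per unit piece), and a standard telescoping-product bound with $\|\widetilde{\mone}_{I_j^{t_j}}\|_{L^\infty} \lesssim 1$ yields $\|g_{I_t} - \mone_{I_t}\|_{L^2(I^D)} \leq C_{J,F}(\delta_2 + \sqrt{\delta_1})$. Since $|\mathcal{T}_m| \leq 2^J$ and the $I_t$ are pairwise disjoint modulo null sets (so $\sum_{t \in \mathcal{T}_m} \mone_{I_t} = \mone_{R_m}$), the triangle inequality delivers $\|f_m - \mone_{R_m}\|_{L^2(I^D)} \leq \varepsilon$ for every $m$. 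The $M$ output heads together contribute the $C_{\alpha,D,F,J} M (\log_2(1/\varepsilon))^2$ term in $S$, and all depths add to $C_{\alpha,D,F,J}(\lfloor\alpha\rfloor + \log_2(1/\varepsilon) + 1)$ while the weight bound $C_{F,J,q}\varepsilon^{-16 \wedge -C_\alpha}$ is inherited from Lemmas \ref{lem:smooth_approx}--\ref{lem:step_approx}.

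The main obstacle I anticipate is twofold. (i) Propagating the $L^\infty$-error $\delta_1$ of $\tilde{h}_j$ through the discontinuous $g_s$: since the Heaviside function is not Lipschitz one cannot simply compose error bounds, and the Fubini slab-measure argument above is exactly what forces the loss $\delta_1 \mapsto \sqrt{\delta_1}$ and consequently the exponent $-2(D-1)/\alpha$ rather than the naive $-(D-1)/\alpha$ in the parameter count. (ii) Securing the pointwise bound $\|f_m\|_{L^\infty(I^D)} \leq 1 + \varepsilon$. I would exploit the identity $\sum_{t \in \{+,-\}^{[J]}} \prod_j g_s^{t_j}(y_j) = \prod_j (g_s(y_j) + g_s(-y_j))$ with $y_j = x_{d_j} - \tilde{h}_j(x_{-d_j})$: each factor on the right lies in $[1 - r_j(x), 1 + r_j(x)]$ where $r_j$ is controlled by the step-function tails from Assumption \ref{asmp:activation}, so the full sum over $\{+,-\}^{[J]}$ stays pointwise within $1 + O(\varepsilon)$, and any subset sum $\sum_{t \in \mathcal{T}_m}$ inherits the same bound after choosing $\delta_2$ and the asymptotic scale of $g_s$ small enough; if symmetry of $g_s$ is not available from a particular $\eta$, one can alternatively apply a final two-neuron clipping block built from $g_s$ itself.
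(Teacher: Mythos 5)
Your proposal follows the same overall architecture as the paper's proof: approximate each boundary function $h_j$ by a small network via Lemma~\ref{lem:smooth_approx}, compose with an approximate step function from Lemma~\ref{lem:step_approx}, combine with a $J$-fold multiplication network from Lemma~\ref{lem:multi_multi}, and account for the square-root loss in the $L^2$ error when passing a boundary perturbation through the discontinuous Heaviside via the symmetric-difference / Fubini slab argument; the resulting $\delta\mapsto\sqrt{\delta}$ degradation is exactly what produces the exponent $-2(D-1)/\alpha$ in $S$, and you identify this correctly.

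Where you genuinely diverge is in handling the piece $R_m = \bigcup_{t\in\mathcal{T}_m}I_t$. The paper's argument writes $\mone_{R_m}=\prod_{j\in[J]}\mone_{\{x_{d_j}\lesseqgtr h_j\}}$ and builds a single product network $g_{R,m}$, which is literally valid only when $|\mathcal{T}_m|=1$, i.e.\ when each $R_m$ is a single unit piece. You explicitly sum over $t\in\mathcal{T}_m$ and realize the sum as a linear output layer on top of a shared trunk for the $\tilde h_j$'s, which is the correct treatment of the general union and is also what justifies that the per-$m$ cost is only $O((\log(1/\varepsilon))^2)$ rather than a fresh copy of the boundary networks. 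Your accounting for the constants (absorbing $2^J$ into $C_{\alpha,D,F,J}$, setting $\delta_1\sim\varepsilon^2$) is consistent with the stated parameter bounds.

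One place where your argument is more sketched than the paper's is the pointwise bound $\|f_m\|_{L^\infty}\le 1+\varepsilon$. The paper bounds a single product $g_{c,J}(g_{h,1,\pm},\dots,g_{h,J,\pm})$ by $1+J\delta$ using $\|g_{h,j,\pm}\|_{L^\infty}\le 1$ and the $L^\infty$ accuracy of $g_{c,J}$; that argument does not directly control the sum $\sum_{t\in\mathcal{T}_m}$ because a naive triangle inequality gives $|\mathcal{T}_m|(1+J\delta)$. Your symmetry identity $\sum_{t\in\{+,-\}^{[J]}}\prod_j g_s^{t_j}(y_j)=\prod_j\bigl(g_s(y_j)+g_s(-y_j)\bigr)$ is the right idea, but note that it requires $g_s(y)+g_s(-y)$ to be uniformly close to $1$ \emph{and} $g_s\ge 0$ to conclude that a subset sum is dominated by the full sum; the first holds exactly for the piecewise-linear construction under Assumption~\ref{asmp:activation}(ii) but is only approximate under (i), and $g_s\ge 0$ must also be checked (or arranged). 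The clipping-block fallback you mention would need a couple of lines to make precise, since an $L^2$-accurate approximation of a hard clip to $[0,1]$ must itself have $L^\infty$ bound close to $1$; this is doable with two shifted copies of $g_s$ in the same spirit as $g_I$ in the proof of Lemma~\ref{lem:smooth_approx}, but as stated it is a gesture rather than a proof. Apart from this loose end, the proposal is sound and in fact sharpens the paper's argument on the union structure.
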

\begin{proof}[Proof of Lemma \ref{lem:indicator_approx}]
We define a function by DNNs $g_{h,j} \in \mG(C_{\alpha,D,F} (\lfloor \alpha \rfloor + \log_2(1/\delta) + 1), C_{\alpha,D,F}  \delta^{-(D-1)/\alpha} (  \log_2(1/  \delta))^2, C_{\alpha,D,F}\delta^{-16 \wedge -C_\alpha})$ such that $\|h_j - g_{h,j}\|_{L^2(I^{D-1})} \leq \delta $ by Lemma \ref{lem:smooth_approx}, for $\delta > 0$.
Also, we define $g_{c,J} \in \mG(J(\log_2(1/\delta)),C(\log_2(1/\delta))^2,C)$ with $m' \geq 1$ as Lemma \ref{lem:multi_multi}, and $g_s \in \mG(2,6,C_{F,q}\delta^{-8})$ such that  $\|g_s - \mone_{\{\cdot \geq 0\}}\|_{L^2([-F,F])} \leq \delta$ as Lemma \ref{lem:step_approx}.
Then, for $m \in [M]$, we define a function $g_{h,j,+}, g_{h,j,-} \in \mG(C_{\alpha,D,F} (\lfloor \alpha \rfloor + \log_2(1/\delta) + 1), C_{\alpha,D,F} (\delta^{-(D-1)/\alpha} (  \log_2(1/  \delta))^2 + 1), C_{B,q}\delta^{-16 \wedge -C_\alpha})$ to approximate $ \mone_{\{x_{d_j} \lesseqgtr h_j(x_{-d_j})\}}$ as
\begin{align*}
    g_{h,j,+}(x) = g_s(x_{d_j} - h_j(x_{-d_j})) , \mbox{~and~}g_{h,j,-}(x) = g_s(-x_{d_j} + h_j(x_{-d_j})).
\end{align*}
To approximate $\mone_{R_m}(x) = \Pi_{j \in [J]} \mone_{\{x_{d_j} \lesseqgtr h_j(x_{-d_j})\}}$, we define  $g_{R,m} \in \mG(C_{\alpha,D,F} (\lfloor \alpha \rfloor + \log_2(1/\delta) + 1), C_{\alpha,D,F} (J\delta^{-(D-1)/\alpha} (  \log_2(1/  \delta))^2 + \log_2(1/  \delta) + 1), C_{F,q}\delta^{-16})$ as $g_{R,m}:[0,1]^D \to [0,1] $ such that 
\begin{align*}
    g_{R,m}(x) = g_{c,J}( g_{h,1,\pm}(x),..., g_{h,J,\pm}(x)),
\end{align*}
and define $g_{R} \in \mG(C_{\alpha,D,F} (\lfloor \alpha \rfloor + \log_2(1/\delta) + 1), C_{\alpha,D,F} (J\delta^{-(D-1)/\alpha} (  \log_2(1/  \delta))^2 + M(\log_2(1/  \delta))^2 + 1), C_{F,q}\delta^{-16 \wedge -C_\alpha})$ as $g_{R}:[0,1]^D \to [0,1]^M $ as
\begin{align*}
    g_{R}(x) = (g_{R_1}(x),...,g_{R_M}(x))^\top.
\end{align*}

Then, its approximation error is bounded as
\begin{align*}
    & \left\| \mone_{R_m}  -  g_{R,m} \right\|_{L^2(I^D)} \notag \\
    & \leq \left\| \left(x \mapsto  \Pi_{j \in [J]} \mone_{\{x_{d_j} \lesseqgtr h_j(x_{-d_j})\}}\right) - \Pi_{j \in [J]} g_{h,j,\pm}\right\|_{L^2(I^D)} +J \delta\\
    & \leq \sum_{j\in[J]} \left\|(x \mapsto  \mone_{\{x_{d_j} \lesseqgtr h_j(x_{-d_j})\}}) -  g_{h,j,\pm}\right\|_{L^2(I^D)} \\
    & \quad \times \prod_{j' \in [j] }\left\|(x \mapsto  \mone_{\{x_{d_j} \lesseqgtr h_j(x_{-d_j})\}})\right\|_{L^\infty(I^D)} \prod_{j''\in [J] \backslash [j]}\| g_{h,j,\pm}\|_{L^\infty(I^D)} + J \delta\\
    & \leq \sum_{j \in[J]} \left\| (x \mapsto  \mone_{\{x_{d_j} \lesseqgtr h_j(x_{-d_j})\}}) -  g_{h,j,\pm}\right\|_{L^2(I^D)} + J \delta\\
    &\leq \sum_{j \in[J]} \left\| (x \mapsto \mone_{\{x_{d_j} \lesseqgtr h_j(x_{-d_j})\}}) -  (x \mapsto  \mone_{\{x_{d_j} \lesseqgtr g_{h,j}(x_{-d_j})\}})\right\|_{L^2(I^D)} \\
    &\quad + \sum_{j \in[J]} \left\|  (x \mapsto  \mone_{\{x_{d_j} \lesseqgtr g_{h,j}(x_{-d_j})\}}) - g_{h,j,\pm}\right\|_{L^2(I^D)} + J \delta\\
    &=: \sum_{j \in[J]} T_{h,1,j} + \sum_{j \in[J]} T_{h,2,j} +J \delta,
\end{align*}
where $\| g_{h,j,\pm} \|_{L^\infty(I^D)}\leq 1$ is used in the last inequality.

We evaluate each of the two terms $T_{h,1,j}$ and $T_{h,2,j}$.
As preparation, for sets $\Omega, \Omega' \subset I^{D-1}$, we define $\Omega \Delta \Omega' := (\Omega \cup \Omega') \backslash (\Omega \cap \Omega')$.
For each $j \in [J]$, we obtain
\begin{align*}
    T_{h,1,j} &=\left\| (x \mapsto  \mone_{\{x_{d_j} \lesseqgtr h_j(x_{-d_j})\}}) -  (x \mapsto  \mone_{\{x_{d_j} \lesseqgtr g_{h,j}(x_{-d_j})\}})\right\|_{L^2(I^D)} \\
    & = \lambda \left(\{x \in I^{D} \mid x_{d_j} \lesseqgtr h_j(x_{-d_j}) \} \Delta \{x \in I^{D} \mid x_{d_j} \lesseqgtr g_{h,j}(x_{-d_j}) \} \right)^{1/2}\\
    &= \|h_j - g_{h,j}\|_{L^1(I^{D-1})}^{1/2} \leq  \|h_j - g_{h,j}\|_{L^2(I^{D-1})}^{1/2} \leq \delta^{1/2},
\end{align*}
where the second last inequality follows the Cauchy-Schwartz inequality.
About $T_{h,2,j}$, we obtain
\begin{align*}
    T_{h,2,j} &= \left\|  (x \mapsto  \mone_{\{x_{d_j} \lesseqgtr g_{h,j}(x_{-d_j})\}}) - g_{h,j,\pm}\right\|_{L^2(I^D)}\\
    &=\left\| \mone_{\{\cdot \geq 0\}} \circ  (x \mapsto  x_{d_j} - g_{h,j}(x_{-d_j})) -g_s \circ  (x \mapsto  x_{d_j} - g_{h,j}(x_{-d_j})) \right\|_{L^2(I^D)}\\
    &\leq C_F \left\| \mone_{\{\cdot \geq 0\}} - g_s \right\|_{L^1([-F,F])} \leq C_F \left\| \mone_{\{\cdot \geq 0\}} - g_s \right\|_{L^2([-F,F])} \leq C_F \delta, 
\end{align*}
by the setting of $g_s$ and the second last inequality follows the Cauchy-Schwartz inequality.

Combining the results on $T_{h,1,j}$ and $T_{h,2,j}$, we obtain
\begin{align*}
    \left\| \mone_{R_m}  -  g_{R,m} \right\|_{L^2(I^D)} \leq J(\delta^{1/2} + C_F \delta + \delta).
\end{align*}
For the second inequality of the statement, we apply the following inequality:
\begin{align*}
    \|g_{R,m}\|_{L^\infty(I^D)}&\leq \|g_{R,m} - ( x \mapsto  \Pi_{j \in [J]} \mone_{\{x_{d_j} \lesseqgtr h_j(x_{-d_j})\}})\|_{L^\infty(I^D)}\\
    & \qquad +  \|( x \mapsto  \Pi_{j \in [J]} \mone_{\{x_{d_j} \lesseqgtr h_j(x_{-d_j})\}})\|_{L^\infty(I^D)} \\
    & \leq \|g_{c,J} - (x \mapsto \Pi_{j \in J}x_j)\|_{L^\infty([0,1]^J)} + 1 \\
    & \leq J \delta + 1,
\end{align*}
We set $\varepsilon = C_F J \delta^{1/2}$, we obtain the statement.
\end{proof}

\section{Proof of Proposition \ref{prop:linear}}

The sub-optimality is well studied by Section 6 in \cite{korostelev2012minimax}.
We slightly adapt the result to our setting, and obtain the following proof.

\begin{proof}[Proof of Proposition \ref{prop:linear}]
We divide this proof into the following five steps: (i) preparation, (ii) define a sub-class of functions, (iii) reparametrize a lower bound of errors, (iv) define a subset of parameters, and (v) combine all the results.

\textbf{Step (i). Preparation}.
First, we decompose the distance $\|f^* - \hat{f}^{\mathrm{lin}}\|_{L^2(P_X)}^2$.
Let us define $\Upsilon_i(\cdot) := \Upsilon_i(\cdot ; X_1 ,...,X_n)$.
By the definition of linear estimators, we obtain
\begin{align*}
    \|f^* - \hat{f}^{\mathrm{lin}}\|_{L^2(P_X)}^2
    &=\left\|f^* - \sum_{i=1}^n (f^*(X_i) + \xi_i) \Upsilon_i \right\|_{L^2(P_X)}^2 \\
    & = \left\|f^* - \sum_{i=1}^n f^*(X_i) \Upsilon_i \right\|_{L^2(P_X)}^2 + \left\| \sum_{i=1}^n \xi_i \Upsilon_i \right\|_{L^2(P_X)}^2\\
    & \quad  + 2 \left\langle f^* - \sum_{i=1}^n f^*(X_i) \Upsilon_i, \sum_{i=1}^n \xi_i \Upsilon_i \right\rangle_{L^2(P_X)} \\
    & =: T_1^{(L)} + T_2^{(L)} + T_3^{(L)},
\end{align*}
where $\langle f,f' \rangle_{L^2(P_X)} := \int f \otimes f' dP_X$ is an inner product with respect to $P_X$.
Since $\xi_i$ is a noise variable which is independent to $X_i$, we can simplify the expectations of the terms as
\begin{align*}
    \Ep_{f^*}[T_3^{(L)}] = 2\sum_{i=1}^m \Ep_{f^*}[\xi_i]\left\langle f^* - \sum_{i=1}^n f^*(X_i) \Upsilon_i,  \Upsilon_i \right\rangle_{L^2(P_X)} = 0,
\end{align*}
and 
\begin{align*}
    \Ep_{f^*}[T_2^{(L)}] &= \sum_{i,i'=1}^n \Ep_{f^*}[\xi_i \xi_{i'}] \langle \Upsilon_i, \Upsilon_{i'}\rangle_{L^2(P_X)} = \sigma^2 \sum_{i=1}^n\|\Upsilon_i\|_{L^2(P_X)}^2.
\end{align*}
Since $T_1^{(L)}$ is a deterministic term with fixed $X_1,...,X_n$, we obtain
\begin{align}
     \Ep_{f^*}\left[\|f^* - \hat{f}^{\mathrm{lin}}\|_{L^2(P_X)}^2 \right] &= \left\|f^* - \sum_{i=1}^n f^*(X_i) \Upsilon_i \right\|_{L^2(P_X)}^2 +  \sigma^2 \sum_{i=1}^n\|\Upsilon_i \|_{L^2(P_X)}^2 \notag \\
    & \geq \left\|f^* - \sum_{i=1}^n f^*(X_i) \Upsilon_i \right\|_{L^2(P_X)}^2 \vee \sigma^2 \sum_{i=1}^n\|\Upsilon_i \|_{L^2(P_X)}^2. \label{ineq:minimax0}
\end{align}

\textbf{Step (ii). Define a class of functions}.
We investigate a lower bound of the term $\sup_{f^* \in \mF_{\alpha,\beta,M}^{PS}}\Ep_{f^*}[\|f^* - \sum_{i=1}^n f^*(X_i) \Upsilon_i \|^2]$ by considering an explicit class of piecewise smooth functions by dividing the domain $I^D$.
For $m=1,...,M-1$, we will consider a smooth boundary function $B_m : I^{D-1} \ni (x_1,...,x_{D-1}) \mapsto x_D \in I$, then define pieces
\begin{align*}
    R_m = 
    \begin{cases}
        \{x \in I^D \mid 0 \leq x_D < B_1(x_{-D})\},&\mbox{~if~}m=1,\\
        \{x \in I^D \mid B_{m-1}(x_{-D}) \leq x_D < B_{m}(x_{-D})\},&\mbox{~if~}m=2,...,M-1,\\
        \{x \in I^D \mid B_{m-1}(x_{-D}) \leq x_D \leq 1\},&\mbox{~if~}m=M.
    \end{cases}
\end{align*}
An explicit form of $B_m$ is provided below.
Let $N \in \N$ be a parameter, and consider a grid for $I^{D-1}$ such that $q_{j} := ((j_{d} -0.5)/N)_{d = 1,..,D-1}$ for $j \in \{1,...,N\}^{D-1} =: \mJ$.
We also define another index set $\mJ^+ := \{N+1,...,2N\}^{D-1}.$
Also, let $\phi \in H_1^\alpha(\R^{D-1})$ be a function such that $\phi(x)=0$ for $ x \notin I^{D-1}$, and $\phi(x) = 1 $ for $x \in [0.1, 0.9]^{D-1}$.
Then, we define a boundary function for $j \in \mJ \cup \mJ^+$ as
\begin{align*}
    B_m(x_{-D};j,r) =
    \begin{cases}
        \frac{m-1}{M} + \frac{r}{MN^\alpha}  \phi( N(x_{-D}-q_{j})),&\mbox{~if~}j \in \mJ, \\
        \frac{m-1}{M}, &\mbox{~if~}j \in \mJ^+.
    \end{cases}
\end{align*}
for $r \in \{r' \in  \N \mid 0 < r' < N^\alpha - 1  \}=: D_R$ and $m=1,...,M-1$.
Here, $\phi( N(x-q_{j}))$ is a smooth approximator for the indicator function of a hyper-cube with a center $q_{j}$, and $B_m$ is constructed by the approximated indicator functions.
Also, we define the following subset of $I^D$ by $B_m$ as
\begin{align*}
    \hat{R}_{j,r,m}
    &:= \left\{x \in I^{D} \mid \frac{m-1}{M} \leq x_D < B_m(x_{-D};j,r) \right\},
\end{align*}
for $m = 1,...,M-1$.
Moreover, we define 
\begin{align*}
    \Bar{R}_m := \left\{x \in I^{D} \mid \frac{m-1}{M} \leq x_D < \frac{m}{M} \right\}.
\end{align*}
for $m=1,...,M$.
Obviously, $\hat{R}_{j,r,m} \subset \bar{R}_m$ for any $j,r$, and $m$.
Figure \ref{fig:proof_linear} provides its illustration.

We provide a specific functional form characterized by $B_m(\cdot; j,r)$ with $r \in D_R$ and $j \in \mJ \cup \mJ^+$ for each $m=1,...,M-1$.
Let $\textbf{j} := (j_1,...,j_{M-1}) \in (\mJ \cup \mJ^+)^{\otimes M-1}$, and $c_m$ be a fixed coefficient for $m=1,...,M-1$ such that $c_{m+1}-c_m = c > 0$ holds.
For $r \in D_R$ and $\textbf{j} \in (\mJ \cup \mJ^+)^{\otimes M-1}$, we define the following function
\begin{align*}
    \check{f}(x; r, \textbf{j}) := \sum_{m=1}^{M-1}c_m \mone_{\hat{R}_{j_m,r,m}}(x).
\end{align*}
$\check{f} \in \mF_{\alpha, \beta, M}^{PS}$ holds by its construction.

\begin{figure}
    \centering
    \includegraphics[width=0.5\hsize]{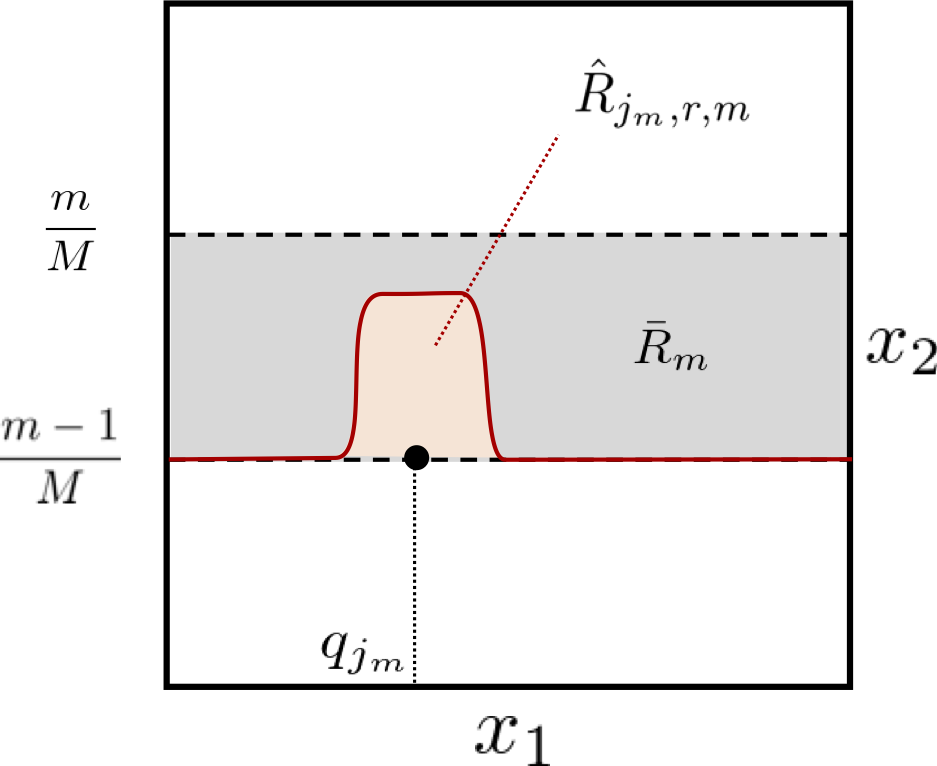}
    \caption{Illustration of $I^D$ with $D=2$ case. 
    The red curve denotes $B_m(x_{-D}; j_m, r)$.
    Also, $\Bar{R}_m$ (gray region) and $\hat{R}_{{j_m},r,m}$ (red region) are illustrated.}
    \label{fig:proof_linear}
\end{figure}

\textbf{Step (iii). Reparametrize a lower bound by the parameters of $B_m$}.
We develop a lower bound of the minimax risk by parameters $r,j,m$ of the boundary function $B_m$, with fixed $c_1,...,c_M$ and $N$.
Now, we provide a lower bound of the minimax risk as
\begin{align}
    &\sup_{f^* \in \mF_{\alpha,\beta,M}} \Ep_{f^*}\left[ \|f^* - \hat{f}^{\mathrm{lin}}\|_{L^2(P_X)}^2 \right] \notag \\
    &= \sup_{f^* \in {\mF}_{\alpha,\beta,M}} \Ep_{f^*}\left[ \left\|f^* - \sum_{i=1}^n f^*(X_i) \Upsilon_i \right\|_{L^2(P_X)}^2 \right] \notag \\
    &\geq \sup_{r\in D_R} \sup_{\textbf{j} \in (\mJ \cup \mJ^+)^{\otimes M-1}}  \Ep_{\check{f}}\left[ \left\|\check{f} - \sum_{i=1}^n \check{f}(X_i;r, \textbf{j} ) \Upsilon_i \right\|_{L^2(P_X)}^2 \right] \notag \\
    & \geq \frac{1}{|D_R|(|\mJ|+|\mJ^+|)^{M-1}} \notag\\
    & \times \sum_{r \in D_R} \sum_{\textbf{j} \in (\mJ \cup \mJ^+)^{\otimes M-1}}   \Ep_{\check{f}}\left[ \left\|\check{f}(\cdot; r, \textbf{j}) - \sum_{i=1}^n \check{f}(X_i; r, \textbf{j}) \Upsilon_i \right\|_{ L^2(P_X)}^2 \right].  \notag \\
    & \gtrsim \frac{1}{|D_R|(|\mJ|+|\mJ^+|)^{M-1}} \label{ineq:minimax1}\\
    & \times \sum_{r \in D_R} \sum_{\textbf{j} \in (\mJ \cup \mJ^+)^{\otimes M-1}}   \sum_{m = 1}^{ M-1} \sum_{j \in \mJ}  \Ep_{\check{f}}\left[ \left\|\check{f}(\cdot; r, \textbf{j}) - \sum_{i=1}^n \check{f}(X_i; r, \textbf{j}) \Upsilon_i \right\|_{ L^2(\hat{R}_{j,r,m})}^2 \right].  \notag 
\end{align}
To derive the second inequality, we consider all possible configurations.
The last inequality holds since $P_X$ has a finite and positive density by its definition.
Also, for any $r\in D_R$, $\hat{R}_{j,r,m} \cap \hat{R}_{j',r,m} = \emptyset$ for $j \neq j' \in \mJ$, and  $\cup_{j \in \mJ}\hat{R}_{j,r,m} \subset \Bar{R}_{m}$ yield the last inequality.

Afterwards, we provide a lower bound of $ \left\|\check{f}(\cdot; r, \textbf{j}) - \sum_{i=1}^n \check{f}(X_i; r, \textbf{j}) \Upsilon_i \right\|_{L^2(\hat{R}_{j,r,m})}^2$.
For $m =1,...,M-1$ and $ r \in D_R$, we can achieve
\begin{align*}
    &\sum_{j_{m} \in \mJ \cup \mJ^+} \left\|\check{f}(\cdot; r, \textbf{j}) - \sum_{i=1}^n \check{f}(X_i; r, \textbf{j}) \Upsilon_i \right\|_{ L^2(\hat{R}_{j,r,m})}^2 \\
    &= \sum_{j_{m} \in \mJ}\Biggl[ \int_{\hat{R}_{j,r,m}}\Biggl( c_{m} + c \mone_{\{x_D \geq B_{m}(x_{-D}; j_{m},r)\}}(x) \\
    & \qquad \qquad - \sum_{i=1}^n (c_{m} + c \mone_{\{x_D \geq B_{m}(x_{-D}; j_{m},r)\}}(X_i)) \Upsilon_i(x) \Biggr)^2\\
    & \quad + \Biggl( c_{m} + c \mone_{\{x_D \geq \frac{m-1}{M}\}}(x)  - \sum_{i=1}^n (c_{m} + c \mone_{\{x_D \geq \frac{m-1}{M}\}}(X_i)) \Upsilon_i(x) \Biggr)^2 dx \Biggr]\\
    & \geq \sum_{j_{m} \in \mJ} \frac{c^2}{2}\int_{\hat{R}_{j,r,m}} \left( \sum_{i : X_i \in \hat{R}_{j_{m},r,m}} \Upsilon_i(x) - \mone_{\hat{R}_{j_{m},r,m}}(x) \right)^2 dx \\
    & \gtrsim |\mJ| c^2 \left\| \sum_{i : X_i \in \hat{R}_{j,r,m}} \Upsilon_i -  1 \right\|_{L^2(\hat{R}_{j,r,m})}^2.
\end{align*}
The second last inequality follows $x^2 + y^2 \geq (x-y)^2/2$.
Substituting it into \eqref{ineq:minimax1} yield
\begin{align}
    &\sup_{f^* \in \mF_{\alpha,\beta,M}} \Ep_{f^*}\left[ \|f^* - \hat{f}^{\mathrm{lin}}\|_{L^2(P_X)}^2 \right] \notag \\
    &\gtrsim \frac{c^2}{|D_R|2^{M-1}} \sum_{r \in D_R} \sum_{m = 1}^{M-1} \sum_{j \in \mJ} \Ep_{f^*}\left[\left\| \sum_{i : X_i \in \hat{R}_{j,r,m}} \Upsilon_i -  1 \right\|_{L^2(\hat{R}_{j,r,m})}^2 \right]. \label{ineq:minimax2}
\end{align}

\textbf{Step (iv). Define subsets of parameters}.
Here, we will consider suitable subsets $\Tilde{D}_R \subset D_R$ and $\Tilde{\mJ} \subset \mJ$ for a tight lower bound.
Let us define $n_m := |\{X_i \in \Bar{R}_m\}|$ and $\tau_m^2 := \sum_{i: X_i \in \Bar{R}_m} \|\Upsilon_i\|_{L^2(\Bar{R}_m)}^2$ for each $m \in [M]$.
For $m \in [M]$, let $S_{m,r} := \{x \in I^D \mid x_D \in [m/M + r/N^\alpha, m/M + (r+1)/N^\alpha )\}$, for $r = 0,...,N^\alpha - 1$.
Then, let $\Tilde{D}_R$ be a set of integers which satisfies
\begin{align*}
    |\{i: X_i \in S_{m,r}\}| \leq \frac{c n_m}{N^\alpha}, \mbox{~and~}\sum_{i; X_i \in S_{m,r}} \| \Upsilon_i \|_{L^2(I^D)} \leq  \frac{c'\tau_m^2}{ N^\alpha},
\end{align*}
where $c,c' > 2$ are coefficients.
We can claim that at least $ (1-1/c) N^\alpha$ and $ (1-1/c') N^\alpha$ integers from $D_R$ satisfy each of the conditions, because the rest $N^\alpha/c$ and $N^\alpha/c'$ integers from $D_R$ should sum to $N^\alpha$.
Then, at least $ (1-1/c) N^\alpha + (1-1/c') N^\alpha -  N^\alpha = (1-1/c - 1/c')N^\alpha$ integers from $D_R$ satisfies the both conditions simultaneously.
We set $\Tilde{D}_R$ as a set of such the integers, then $|\Tilde{D}_R| \geq (1-1/c - 1/c')N^\alpha \gtrsim N^\alpha$ holds since $c,c' > 2$.

Further, for each $r \in \Tilde{D}_R$, we consider the subset $\Tilde{\mJ} \subset \mJ$.
Similarly, we consider $\Tilde{\mJ}$ is a set of indexes $j \in \mJ$ such as
\begin{align*}
    |\{i: X_i \in \hat{R}_{j,r,m}\}| \leq  \frac{c'' n_m}{N^\alpha N^{D-1}}, \mbox{~and~}    \sum_{i; X_i \in \hat{R}_{j,r,m}} \| \Upsilon_i\|_{L^2(I_D)} \leq \frac{c''' \tau_m^2}{ N^\alpha N^{D-1}},
\end{align*}
with coefficients $c'',c''' >2$.
Repeating the argument for $\Tilde{D}_R$, we can claim that there exist at least $(1-1/c'' - 1/c''')N^\alpha N^{D-1}$ indexes which satisfy the conditions simultaneously, then we have $\Tilde{\mJ} \geq (1-1/c'' - 1/c''')N^\alpha N^{D-1} \gtrsim N^{\alpha + D - 1}$ since $c'',c''' > 2$. 

\textbf{Step (v). Combining all the results}.
With the subsets $ \Tilde{D}_R$ and $\tilde{\mJ}$, we derive a lower bound of the norm $\left\| \sum_{i : X_i \in \hat{R}_{j,r,m}}  \Upsilon_i -  1 \right\|_{L^2(\hat{R}_{j,r,m'})}$.
For $m \in [M]$, $r \in \Tilde{D}_R$, and $j \in \Tilde{\mJ}$, we obtain
\begin{align*}
    &\left\| \sum_{i : X_i \in \hat{R}_{j,r,m}} \Upsilon_i -  1 \right\|_{L^2(\hat{R}_{j,r,m})}\\
    &\geq \left\| 1 \right\|_{L^2(\hat{R}_{j,r,m})} - \left\| \sum_{i : X_i \in \hat{R}_{j,r,m}} \Upsilon_i \right\|_{L^2(\hat{R}_{j,r,m})}\\
    &= \mathrm{vol}(\hat{R}_{j,r,m})^{1/2} - |\{i:X_i \in \hat{R}_{j,r,m}\}|^{1/2} \left(\int_{\hat{R}_{j,r,m}} \sum_{i:X_i \in \hat{R}_{j,r,m}} \Upsilon_i(x)^2dx \right)^{1/2}\\
    &\gtrsim \left( \frac{1}{M N^{D-1}N^\alpha} \right)^{1/2} -  \frac{ n_m^{1/2} \tau_m }{N^{D-1}N^\alpha} .
\end{align*}
Then, we substitute the result into \eqref{ineq:minimax2}.
Here, note that $n_m \leq n$.
Also, by \eqref{ineq:minimax0}, we have $\sum_{m \in [M]} \tau_m^2 \leq \sigma^{-2} \|f^* - \hat{f}^{\mathrm{lin}}\|_{L^2(P_X)}^2$ and its integrable conditions, $\sum_{m \in [M]} \tau_m^2 = O(1)$ with probability at least $1/2$.
Then, we continue the inequality as
\begin{align*}
    &\sup_{f^* \in \mF_{\alpha,\beta,M}} \Ep_{f^*} \left[ \|f^* - \hat{f}^{\mathrm{lin}}\|_{L^2(P_X)}^2 \right] \\
    & \gtrsim \frac{c^2}{|D_R|2^{M-1}} \sum_{m =2}^M \sum_{r \in \tilde{D}_R} \sum_{j \in \tilde{\mJ}} \Ep_{f^*}\left[\left\| \sum_{i : X_i \in \hat{R}_{j,r,m}} \Upsilon_i -  1 \right\|_{L^2(\hat{R}_{j,r,m'})}^2 \right]\\
    &\gtrsim N^{D-1} \sum_{m =2}^M \left\{ \left( \frac{1}{M N^{D-1}N^\alpha} \right)^{1/2} -  \frac{ n_m^{1/2} \tau_m }{N^{D-1}N^\alpha}\right\}^2 .
\end{align*}
We substitute $N=[ n^{1/(2\alpha + D-1)} ]$ and ignore negligible terms, then obtain the statement.
\end{proof}

\section{Sub-optimality of Wavelet Estimators}

\begin{proof}[Proof of Proposition \ref{prop:wavelet}]
As preparation, we derive a lower bound of the minimax risk.
Note that $P_X$ is a uniform distribution on $I^D$, we have $\|f\|_{L^2(P_X)} = \|f\|_{L^2}$ for any measurable $f:I^D \to \R$.
By the definition of $\hat{f}^{\mathrm{wav}}$ and the Parseval's equality, we obtain
\begin{align}
    \|f^* - \hat{f}^{\mathrm{wav}}\|_{L^2(P_X)}^2 &=\sum_{(\kappa_1,...,\kappa_D) \in \mH_\tau^{\times D}} (\hat{w}_{\kappa_1,...,\kappa_D} -{w}_{\kappa_1,...,\kappa_D}(f^*))^2  \notag \\
    & \quad + \sum_{(\kappa_1,...,\kappa_D) \in {\mH}^{\times D} \backslash \mH_\tau^{\times D}} {w}_{\kappa_1,...,\kappa_D}(f^*)^2. \label{ineq:wavelet0}
\end{align}
For the first term in the right hand side, we evaluate the expectation of $(\hat{w}_{\kappa_1,...,\kappa_D} -{w}_{\kappa_1,...,\kappa_D}(f^*))^2$.
Since $\Ep_{f^*}[\hat{w}_{\kappa_1,...,\kappa_D}] = \langle f^*, \Phi_{\kappa_1,...,\kappa_D} \rangle = {w}_{\kappa_1,...,\kappa_D}(f^*)$, we rewrite the expectation as
\begin{align*}
    &\Ep_{f^*} [(\hat{w}_{\kappa_1,...,\kappa_D} -{w}_{\kappa_1,...,\kappa_D}(f^*))^2]\\
    &=\mathrm{Var} \left( \frac{1}{n}\sum_{i \in [n]} Y_i \Phi_{\kappa_1,...,\kappa_D}(X_{i})\right) \\
    &= \frac{1}{n} \mathrm{Var}_{f^*} \left( (f^*(X_i) + \xi_i) \Phi_{\kappa_1,...,\kappa_D}(X_{i})\right)\\
    &= \frac{1}{n} \Ep_{X} \left[  \mathrm{Var}_{\xi} \left( (f^*(X_i) + \xi_i) \Phi_{\kappa_1,...,\kappa_D}(X_{i}) \mid X_i\right) \right] \\
    & \geq \frac{\sigma^2}{n} \Ep_{X}[\Phi_{\kappa_1,...,\kappa_D}(X_i)^2] \\
    & = \frac{\sigma^2}{n},
\end{align*}
where the third equality follows the iterated law of expectation, and the last equality follows $\Phi_{\kappa_1,...,\kappa_D}$ is an orthonormal function.
Substituting the result into \eqref{ineq:wavelet0} yields
\begin{align}
     \Ep_{f^*}\left[\|f^* - \hat{f}^{\mathrm{wav}}\|_{L^2(P_X)}^2\right] \geq \frac{\sigma^2|\mH_\tau|^D}{n}  + \sum_{(\kappa_1,...,\kappa_D) \in {\mH}^{\times D} \backslash\mH_\tau^{\times D}} {w}_{\kappa_1,...,\kappa_D}(f^*)^2. \label{ineq:wavelet2}
\end{align}

We will prove the statement by providing a specific configuration of $f^*$.
Let $\dot{R} \subset I^D$ be a hyper-rectangle as
\begin{align*}
    \dot{R} := \left\{x \in I^D \mid 0 \leq x_d \leq \frac{2}{3}, d \in [D] \right\}.
\end{align*}
Then, we define $f^* \in \mF_{\alpha,\beta,M}^{PS}$ as
\begin{align*}
    f^* := \mone_{\dot{R}}.
\end{align*}
Since $\dot{R}$ is regarded as $\cap_{d \in [D]} \{x \in I^D \mid x_d \leq 2/3\}$, $\mone_{\dot{R}}$ is a piecewise smooth function for any $M \geq 2, \alpha \geq 1$ and $\beta \geq 1$.

Then, we define the coefficient ${w}_{\kappa_1,...,\kappa_D}(f^*)$.
Fix $j_d \in \{-1,0,1,2,...\}$ for all $d \in [D]$.
Since $\mone_{\dot{R}}(x) = \prod_{d \in [D]} \mone_{\{\cdot \leq 2/3\}}(x_d)$, we can decompose the coefficient as
\begin{align}
    \langle f^*, \Phi_{\kappa_1,...,\kappa_D} \rangle &=\int_{I^D} \prod_{d \in [D]}  \mone_{\{\cdot \leq 2/3 \}}(x_d) \prod_{d \in [D]}  \phi_{\kappa_d}(x_d) d(x_1,...,x_D)\notag \\
    &= \prod_{d \in [D]} \int_I \mone_{\{\cdot \leq 2/3 \}}(x_d) \phi_{\kappa_d}(x_d) dx_d. \notag
\end{align}
For each $ d \in [D]$, a simple calculation yields
\begin{align*}
    \int_I \mone_{\{\cdot \leq 2/3 \}}(x_d) \phi_{\kappa_d}(x_d) dx_d 
    =
    \begin{cases}
        2^{-j_d/2} c_{\kappa_d}&\mbox{~if~}2/3 \in [k_{d}, k_d + 2^{-j_d})\\
        0&\mbox{~if~}2/3 \notin [k_{d}, k_d + 2^{-j_d}),
    \end{cases}
\end{align*}
with $c_{\kappa_d} > c> 0$ with a constant $c>0$.
When $2/3 \notin [k_{d}, k_d + 2^{-j_d})$, $\mone_{\{\cdot \leq 2/3 \}}(x_d)$ is a constant, then the integration is zero by the definition of $\phi_{\kappa_d}$.
Let $k_d^*$ be a $k_d \in K_{j_d}$ such that $2/3 \in [k_{d}, k_d + 2^{-j_d})$.
By the result of integration, we can rewrite $w_{\kappa_1,...,\kappa_D}$ as
\begin{align*}
    w_{\kappa_1,...,\kappa_D} \geq \prod_{d \in [D]} c 2^{-j_d / 2} \mone_{\{k_d = k_d^*\}}.
\end{align*}

Finally, we will select the truncation parameter $\tau$ for $\mH_\tau$ and update the inequality \eqref{ineq:wavelet2}.
By its definition, we obtain $|\mH_\tau| = \sum_{\ell = -1}^{\tau} 2^\ell = 2^{\tau+1}$.
Also, about the second term of \eqref{ineq:wavelet2}, we have
\begin{align*}
    \sum_{(\kappa_1,...,\kappa_D) \in {\mH}^{\times D} \backslash\mH_\tau^{\times D}} {w}_{\kappa_1,...,\kappa_D}(f^*)^2 &\geq \sum_{j_1 > \tau} \sum_{j_2 > \tau} \cdots \sum_{j_D > \tau} \prod_{d \in [D]} c^2 2^{-j_d} \\
    &= c^{2D}  \prod_{d \in [D]} \left(\sum_{j_d > \tau}2^{-j_d} \right) \\
    &= c^{2D}2^{-\tau D}.
\end{align*}
Substituting the results into \eqref{ineq:wavelet2}, we obtain
\begin{align}
     \Ep_{f^*}\left[\|f^* - \hat{f}^{\mathrm{wav}}\|_{L^2(P_X)}^2\right] \geq \frac{2 \sigma^2 2^{\tau D}}{n}  + c^{2D}2^{-\tau D}. \label{ineq:wavelet6}
\end{align}
By setting $\tau =[(2D)^{-1} \log_2 n]$ to minimize the right hand side of \eqref{ineq:wavelet6}, we obtain the statement.
\end{proof}

\section{Sub-Optimality of Other Harmonic Estimators}

\begin{proof}[Proof of Proposition \ref{prop:curvelet}]

In this proof, we provide an explicit example of $f^* \in \mF_{\alpha,\beta,M}^{PS}$, and then derive a lower bound of a risk of $\hat{f}^{\mathrm{curve}}$.
Note that $P_X$ is the uniform distribution on $[-1,1]^2$.
Also, we consider $f^*$ to be the following non-smooth function
\begin{align}
    f^*(x_1,x_2) = \mone_{\{x_1 \geq 0\}} \cdot \mone_{\{x_2 \geq 0\}}, \label{def:step}
\end{align}
and restrict it to $[-1,1]^2$.

For analysis of curvelets, we consider a Fourier transformed curvelets $\hat{\gamma}_\mu$.
As shown in (2.9) in \cite{candes2004new}, with $\xi = (\xi_1,\xi_2)$,  $\hat{\gamma}_\mu$ is written as
\begin{align*}
    \hat{\gamma}_\mu(\xi) = 2\pi \cdot \chi_{j,\ell}(\xi)\cdot u_{j,k}(R_{\theta_J}^*\xi).
\end{align*}
Here, $\chi_{j,\ell}$ is a polar symmetric window function
\begin{align*}
    \chi_{j,\ell}(\xi) = \omega(2^{-2j}\|\xi\|_2) (\nu_{j,\ell}(\theta(\xi)) + \nu_{j,\ell}(\theta(\xi) + \pi)),
\end{align*}
where $\theta(\xi) = \arcsin(\xi_2/\xi_1)$.
Here, $ \omega: \R \to \R$ is a compactly supported function, such as the Meyer wavelet, and we introduce $\nu_{j,\ell}(z) = \nu(2^j z - \pi \ell)$ where $\nu:\R \to \R$ is a function with a support $[-\pi,\pi]$ and satisfies $|\nu(\theta)|^2 + |\nu(\theta - \pi)|^2 = 1$ for $\theta \in [0,2\pi)$.
Without loss of generality, we assume that there exists a constant $c > 0$ such as $\mathrm{vol}(\{z \mid \omega(z) \geq 0\})/2 \leq \mathrm{vol}(\{z \mid \omega(z) \geq c\})$ and $\mathrm{vol}(\{z \mid \nu(z) \geq 0\})/2 \leq \mathrm{vol}(\{z \mid \nu(z) \geq c\})$, and the support of $\omega$ is $[1,2]$.
Also, $u_{j,k}: \R^2 \to \R$ is defined as
\begin{align*}
    u_{j,k}(\xi) = \frac{2^{-3j/2}}{2\pi \sqrt{\delta_1 \delta_2}} \exp(i (k_1 + 1/2)2^{-2j}\xi_1/\delta_1) \exp(ik_2 2^{-j}\xi_2/\delta_2),
\end{align*}
and $\{u_{j,k}\}_{k}$ is an orthonormal basis for an $L^2$-space on a rectangle which covers the support of $\chi_{j,\ell}$, with fixed $j$ and $\ell$.

Further, we provide a Fourier transform of $f^*$.
Since a Fourier transform of $\mone_{\{x \geq 0\}}$ is $\frac{1}{i \xi}$ and $f^*$ a product of two step functions, its Fourier transform $\hat{f}^*$ is written as $\frac{-1}{\xi_1 \xi_2}$.

To obtain the statement of Proposition \ref{prop:curvelet}, we repeat the argument for \eqref{ineq:wavelet2}, and obtain
\begin{align}
     \Ep_{f^*}\left[\|f^* - \hat{f}^{\mathrm{curve}}\|_{L^2(P_X)}^2\right] \geq \frac{\sigma^2|\mL_\tau|}{n}  + \sum_{\mu \in \mL \backslash \mL_\tau} {w}_{\gamma}(f^*)^2. \label{ineq:curvelet-1}
\end{align}

Let us consider a partial sum of the coefficients $\{\gamma_\mu\}_{\mu}$.
Here, fix $j$ and $\ell$, then consider a subset of indexes $\mL_{j',\ell'} := \{\mu \mid  j = j', \ell = \ell'\}$.
Then, since $\{u_{j,k}\}_{k}$ is an orthonormal basis, we obtain
\begin{align*}
    \sum_{\mu \in \mL_{j,\ell}} |w_\mu(f^*)|^2 = \int |\hat{f}^*(\xi)|^2 |\chi_{j,\ell}(\xi)|^2 d\xi.
\end{align*}
Then, we utilize the form of $\hat{f}^*$ and a compact support of $\chi_{j,\ell}$, hence obtain
\begin{align}
    \int |\hat{f}^*(\xi)|^2 |\chi_{j,\ell}(\xi)|^2 d\xi &= \int \frac{1}{\xi_1^2 \xi_2^2} |\chi_{j,\ell}(\xi)|^2 d\xi \notag \\
    & \geq \frac{c}{2} \mathrm{vol}(\mathrm{Supp}(\chi_{j,\ell})) \inf_{\xi \in \mathrm{Supp}(\chi_{j,\ell})}\frac{1}{\xi_1^2 \xi_2^2}. \label{ineq:curve0}
\end{align}
Since $\mathrm{Supp}(\chi_{j,\ell})$ is a set
\begin{align*}
    \left\{ \xi \in \R^2 \mid 2^{2j} \leq  \|\xi\|_2 \leq 2^{2j+1}, |\theta(\xi) - \pi \ell 2^{-j}| \leq \pi 2^{-j-1} \right\}.
\end{align*}
Hence, simply we obtain
\begin{align}
    \mathrm{vol}(\mathrm{Supp}(\chi_{j,\ell})) = \frac{3\pi}{4}2^{4j + 2} = 3 \pi 2^{4j}. \label{ineq:curve1}
\end{align}
Also, about the infimum term, we obtain
\begin{align}
    \inf_{\xi \in \mathrm{Supp}(\chi_{j,\ell})}\frac{1}{\xi_1^2 \xi_2^2} \geq  \inf_{\xi : \|\xi\|_2 \leq 2^{2j+1}}\frac{1}{\xi_1^2 \xi_2^2} = \frac{1}{(2^{2j+1}/\sqrt{2})^4} = 2^{-8j -2 }. \label{ineq:curve2}
\end{align}
Substituting \eqref{ineq:curve1} and \eqref{ineq:curve2} into \eqref{ineq:curve0}, then we obtain
\begin{align}
      \sum_{\mu \in \mL_{j,\ell}} |w_\mu(f^*)|^2 = \int |\hat{f}^*(\xi)|^2 |\chi_{j,\ell}(\xi)|^2 d\xi \geq 3 \pi 2^{-4j -2 }. \label{ineq:curve3}
\end{align}

Now, we will establish a lower bound by specifying $\mL_\tau$ and associate it with \eqref{ineq:curvelet-1}.
Let us define $\mL_{j'} := \{\mu \mid j = j' \}$.
By the setting of $\ell$ and $k$, we can obtain $|\mL_{j}| = 2^j (1 + 2^{j/2}) = 2^j + 2^{3j/2}$.
Also, with the truncation parameter $\tau$,we have $|\mL_{\tau}| = c_{\tau} ( 2^{\tau} + 2^{3{\tau}/2} - 1)$ with a coefficient $c_{\tau} > 0$.

For an approximation error, we consider
\begin{align*}
    \sum_{ \mu \in \mL \backslash \mL_{\tau}} w_\mu(f^*)^2 &= \sum_{ j \in \N_{0} \backslash ([{\tau}] \cup \{0\}) } \sum_{ \ell} \sum_{k} w_{(j,\ell,k)}(f^*)^2 \\
    & \geq \sum_{ j \in \N_{0} \backslash ([\tau] \cup \{0\}) } \sum_{ \ell} 3 \pi 2^{-4j-2} \\
    &= \frac{3 \pi}{4} \sum_{ j \in \N_{0} \backslash ([\tau] \cup \{0\}) } 2^{-3j} \\
    &=\frac{3\pi}{28} 2^{-3{\tau}}.
\end{align*}
where the inequality follows \eqref{ineq:curve3} and the second equality follows $\ell = 0,1,...,2^j-1$.

Combining the results with \eqref{ineq:curvelet-1} by setting $\mL_\tau = \mL_{\tau}$, we obtain
\begin{align}
     \Ep_{f^*}\left[\|f^* - \hat{f}^{\mathrm{curve}}\|_{L^2(P_X)}^2\right] \geq \frac{\sigma^2 c_{\tau}(2^{\tau} + 2^{3{\tau}/2} - 1)}{n}  + \frac{3 \pi}{28} 2^{-3{\tau}}.
\end{align}
As we set ${\tau}$ as $2^{3{\tau}/2} = \Theta(n^{1/3})$, then obtain the statement.
\end{proof}

\bibliography{bib_master}

\begin{thebibliography}{36}
\providecommand{\natexlab}[1]{#1}
\providecommand{\url}[1]{\texttt{#1}}
\expandafter\ifx\csname urlstyle\endcsname\relax
  \providecommand{\doi}[1]{doi: #1}\else
  \providecommand{\doi}{doi: \begingroup \urlstyle{rm}\Url}\fi

\bibitem[Allen-Zhu et~al.(2019)Allen-Zhu, Li, and Song]{allenzhu}
Zeyuan Allen-Zhu, Yuanzhi Li, and Zhao Song.
\newblock A convergence theory for deep learning via over-parameterization.
\newblock \emph{Proceedings of Machine Learning Research (International
  Conference on Machine Learning)}, 97:\penalty0 242--252, 2019.

\bibitem[Anthony and Bartlett(2009)]{anthony2009neural}
Martin Anthony and Peter~L Bartlett.
\newblock \emph{Neural network learning: Theoretical foundations}.
\newblock cambridge university press, 2009.

\bibitem[Bartlett(1998)]{bartlett1998sample}
Peter~L Bartlett.
\newblock The sample complexity of pattern classification with neural networks:
  the size of the weights is more important than the size of the network.
\newblock \emph{IEEE transactions on Information Theory}, 44\penalty0
  (2):\penalty0 525--536, 1998.

\bibitem[Bauer and Kohler(2019)]{bauer2019deep}
Benedikt Bauer and Michael Kohler.
\newblock On deep learning as a remedy for the curse of dimensionality in
  nonparametric regression.
\newblock \emph{The Annals of Statistics}, 47\penalty0 (4):\penalty0
  2261--2285, 2019.

\bibitem[Candes and Donoho(2002)]{candes2002recovering}
Emmanuel~J Candes and David~L Donoho.
\newblock Recovering edges in ill-posed inverse problems: Optimality of
  curvelet frames.
\newblock \emph{Annals of statistics}, pages 784--842, 2002.

\bibitem[Cand{\`e}s and Donoho(2004)]{candes2004new}
Emmanuel~J Cand{\`e}s and David~L Donoho.
\newblock New tight frames of curvelets and optimal representations of objects
  with piecewise c2 singularities.
\newblock \emph{Communications on pure and applied mathematics}, 57\penalty0
  (2):\penalty0 219--266, 2004.

\bibitem[Donoho and Johnstone(1998)]{donoho1998minimax}
David~L Donoho and Iain~M Johnstone.
\newblock Minimax estimation via wavelet shrinkage.
\newblock \emph{The Annals of Statistics}, 26\penalty0 (3):\penalty0 879--921,
  1998.

\bibitem[Dudley(1974)]{dudley1974metric}
Richard~M Dudley.
\newblock Metric entropy of some classes of sets with differentiable
  boundaries.
\newblock \emph{Journal of Approximation Theory}, 10\penalty0 (3):\penalty0
  227--236, 1974.

\bibitem[Dudley(2014)]{dudley2014uniform}
Richard~M Dudley.
\newblock \emph{Uniform central limit theorems}, volume 142.
\newblock Cambridge university press, 2014.

\bibitem[Gin{\'e} and Nickl(2015)]{gine2015mathematical}
Evarist Gin{\'e} and Richard Nickl.
\newblock \emph{Mathematical foundations of infinite-dimensional statistical
  models}, volume~40.
\newblock Cambridge University Press, 2015.

\bibitem[Graham et~al.(1989)Graham, Knuth, Patashnik, and
  Liu]{graham1989concrete}
Ronald~L Graham, Donald~E Knuth, Oren Patashnik, and Stanley Liu.
\newblock Concrete mathematics: a foundation for computer science.
\newblock \emph{Computers in Physics}, 3\penalty0 (5):\penalty0 106--107, 1989.

\bibitem[Hayakawa and Suzuki(2020)]{hayakawa2020minimax}
Satoshi Hayakawa and Taiji Suzuki.
\newblock On the minimax optimality and superiority of deep neural network
  learning over sparse parameter spaces.
\newblock \emph{Neural Networks}, 123:\penalty0 343--361, 2020.

\bibitem[Hinton et~al.(2006)Hinton, Osindero, and Teh]{hinton2006fast}
Geoffrey~E Hinton, Simon Osindero, and Yee-Whye Teh.
\newblock A fast learning algorithm for deep belief nets.
\newblock \emph{Neural computation}, 18\penalty0 (7):\penalty0 1527--1554,
  2006.

\bibitem[Imaizumi and Fukumizu(2019)]{imaizumi2018deep}
Masaaki Imaizumi and Kenji Fukumizu.
\newblock Deep neural networks learn non-smooth functions effectively.
\newblock \emph{Proceedings of Machine Learning Research (International
  Conference on Artificial Intelligence and Statistics)}, 2019.

\bibitem[Kawaguchi(2016)]{kawaguchi2016deep}
Kenji Kawaguchi.
\newblock Deep learning without poor local minima.
\newblock In \emph{Advances in Neural Information Processing Systems}, pages
  586--594, 2016.

\bibitem[Kingma and Ba(2014)]{kingma14adam}
Diederik~P. Kingma and Jimmy Ba.
\newblock Adam: A method for stochastic optimization.
\newblock \emph{CoRR}, abs/1412.6980, 2014.

\bibitem[Koltchinskii(2006)]{koltchinskii2006local}
Vladimir Koltchinskii.
\newblock Local rademacher complexities and oracle inequalities in risk
  minimization.
\newblock \emph{The Annals of Statistics}, 34\penalty0 (6):\penalty0
  2593--2656, 2006.

\bibitem[Korostelev and Tsybakov(2012)]{korostelev2012minimax}
Aleksandr~Petrovich Korostelev and Alexandre~B Tsybakov.
\newblock \emph{Minimax theory of image reconstruction}, volume~82.
\newblock Springer Science \& Business Media, 2012.

\bibitem[Kutyniok and Labate(2012)]{kutyniok2012shearlets}
Gitta Kutyniok and Demetrio Labate.
\newblock \emph{Shearlets: Multiscale analysis for multivariate data}.
\newblock Springer Science \& Business Media, 2012.

\bibitem[Kutyniok and Lim(2011)]{kutyniok2011compactly}
Gitta Kutyniok and Wang-Q Lim.
\newblock Compactly supported shearlets are optimally sparse.
\newblock \emph{Journal of Approximation Theory}, 163\penalty0 (11):\penalty0
  1564--1589, 2011.

\bibitem[Le et~al.(2011)Le, Ngiam, Coates, Lahiri, Prochnow, and
  Ng]{le2011optimization}
Quoc~V Le, Jiquan Ngiam, Adam Coates, Abhik Lahiri, Bobby Prochnow, and
  Andrew~Y Ng.
\newblock On optimization methods for deep learning.
\newblock In \emph{Proceedings of Machine Learning Research (International
  Conference on Machine Learning)}, pages 265--272, 2011.

\bibitem[LeCun et~al.(2015)LeCun, Bengio, and Hinton]{lecun2015deep}
Yann LeCun, Yoshua Bengio, and Geoffrey Hinton.
\newblock Deep learning.
\newblock \emph{Nature}, 521\penalty0 (7553):\penalty0 436--444, 2015.

\bibitem[Mammen and Tsybakov(1995)]{mammen1995asymptotical}
E~Mammen and AB~Tsybakov.
\newblock Asymptotical minimax recovery of sets with smooth boundaries.
\newblock \emph{The Annals of Statistics}, 23\penalty0 (2):\penalty0 502--524,
  1995.

\bibitem[Mammen and Tsybakov(1999)]{mammen1999smooth}
Enno Mammen and Alexandre~B Tsybakov.
\newblock Smooth discrimination analysis.
\newblock \emph{The Annals of Statistics}, 27\penalty0 (6):\penalty0
  1808--1829, 1999.

\bibitem[Mhaskar(1996)]{mhaskar1996neural}
HN~Mhaskar.
\newblock Neural networks for optimal approximation of smooth and analytic
  functions.
\newblock \emph{Neural computation}, 8\penalty0 (1):\penalty0 164--177, 1996.

\bibitem[Nakada and Imaizumi(2020)]{nakada2019adaptive}
Ryumei Nakada and Masaaki Imaizumi.
\newblock Adaptive approximation and generalization of deep neural network with
  intrinsic dimensionality.
\newblock \emph{The Journal of Machine Learning Research}, 2020.

\bibitem[Petersen and Voigtlaender(2018)]{petersen2017optimal}
Philipp Petersen and Felix Voigtlaender.
\newblock Optimal approximation of piecewise smooth functions using deep relu
  neural networks.
\newblock \emph{Neural Networks}, 108:\penalty0 296--330, 2018.

\bibitem[Schmidhuber(2015)]{schmidhuber2015deep}
J{\"u}rgen Schmidhuber.
\newblock Deep learning in neural networks: An overview.
\newblock \emph{Neural networks}, 61:\penalty0 85--117, 2015.

\bibitem[Schmidt-Hieber(2020)]{schmidt2017nonparametric}
Johannes Schmidt-Hieber.
\newblock Nonparametric regression using deep neural networks with relu
  activation function.
\newblock \emph{Annals of Statistics}, 48\penalty0 (4):\penalty0 1875--1897,
  2020.

\bibitem[Stone(1982)]{stone1982optimal}
CJ~Stone.
\newblock Optimal global rates of convergence for nonparametric regression.
\newblock \emph{The Annals of Statistics}, 10:\penalty0 1040--1053, 1982.

\bibitem[Suzuki(2019)]{suzuki2018adaptivity}
Taiji Suzuki.
\newblock Adaptivity of deep relu network for learning in besov and mixed
  smooth besov spaces: optimal rate and curse of dimensionality.
\newblock \emph{International Conference on Learning Representation}, 2019.

\bibitem[Tsybakov(2009)]{tsybakov2003introduction}
Alexandre~B Tsybakov.
\newblock Introduction to nonparametric estimation, 2009.

\bibitem[van~der Vaart and Wellner(1996)]{van1996weak}
AW~van~der Vaart and Jon Wellner.
\newblock \emph{Weak Convergence and Empirical Processes: With Applications to
  Statistics}.
\newblock Springer Science \& Business Media, 1996.

\bibitem[Wasserman(2006)]{wasserman2006all}
Larry~Alan Wasserman.
\newblock \emph{All of nonparametric statistics: with 52 illustrations}.
\newblock Springer, 2006.

\bibitem[Yang and Barron(1999)]{yang1999information}
Yuhong Yang and Andrew Barron.
\newblock Information-theoretic determination of minimax rates of convergence.
\newblock \emph{The Annals of Statistics}, 27\penalty0 (5):\penalty0
  1564--1599, 1999.

\bibitem[Yarotsky(2017)]{yarotsky2017error}
Dmitry Yarotsky.
\newblock Error bounds for approximations with deep relu networks.
\newblock \emph{Neural Networks}, 94:\penalty0 103--114, 2017.

\end{thebibliography}

\end{document}